\newtheorem{theo}{Theorem}
\newtheorem{defi}{Definition}
\newtheorem{lem}{Lemma}
\newtheorem{remark}{Remark}
\newcommand{\repeatcaption}[2]{%
	\renewcommand{\thefigure}{\ref{#1}}%
	\captionsetup{list=no}%
	\caption{#2 (repeated from page \pageref{#1})}%
	\addtocounter{figure}{-1}
}
\title{Finding good policies in average-reward Markov Decision Processes without prior knowledge}
\author{%
	Adrienne Tuynman \\
	\texttt{adrienne.tuynman@inria.fr}\\
	\And
	R\'emy Degenne \\
	\texttt{remy.degenne@inria.fr}\\
	\And
	Emilie Kaufmann \\
	\texttt{emilie.kaufmann@univ-lille.fr}\\
	\\
	Univ. Lille, Inria, CNRS, Centrale Lille, UMR 9189-CRIStAL, F-59000 Lille, France
}
\begin{document}

\maketitle

\begin{abstract}
  We revisit the identification of an $\varepsilon$-optimal policy in average-reward Markov Decision Processes (MDP). In such MDPs, two measures of complexity have appeared in the literature: the diameter, $D$, and the optimal bias span, $H$, which satisfy $H\leq D$.
  Prior work have studied the complexity of $\varepsilon$-optimal policy identification only when a generative model is available. In this case, it is known that there exists an MDP with $D \simeq H$ for which the sample complexity to output an $\varepsilon$-optimal policy is $\Omega(SAD/\varepsilon^2)$ where $S$ and $A$ are the sizes of the state and action spaces. Recently, an algorithm with a sample complexity of order $SAH/\varepsilon^2$ has been proposed, but it requires the knowledge of $H$. We first show that the sample complexity required to estimate $H$ is not bounded by any function of $S,A$ and $H$, ruling out the possibility to easily make the previous algorithm agnostic to $H$. By relying instead on a diameter estimation procedure, we propose the first algorithm for $(\varepsilon,\delta)$-PAC policy identification that does not need any form of prior knowledge on the MDP. Its sample complexity scales in $SAD/\varepsilon^2$ in the regime of small $\varepsilon$, which is near-optimal. In the online setting, our first contribution is a lower bound which implies that a sample complexity polynomial in $H$ cannot be achieved in this setting. Then, we propose an online algorithm with a sample complexity in $SAD^2/\varepsilon^2$, as well as a novel approach based on a data-dependent stopping rule that we believe is promising to further reduce this bound.
\end{abstract}


\section{Introduction}

Reinforcement learning (RL) is a paradigm in which an agent interacts with its environment, modeled as a Markov Decision Process (MDP), by taking actions and observing rewards. Its goal is to learn, or to act according to, a good policy, that is a mapping from state to actions which maximizes cumulative rewards. 
However, there are different ways to define this notion of (expected) cumulative reward: in the \textit{finite horizon} setting, one should maximize the expected sum of rewards up to a certain fixed horizon; in the \textit{discounted} setting, each consecutive reward is $\gamma$ times as important as the previous one, with $0<\gamma<1$. 
In this paper, we focus on the \textit{average reward} setting, in which the value of a policy is measured by its asymptotic mean reward per time step. 
This setting is ideal for long-term learning, as there is no need to tune the horizon or discount parameters. 
However, the asymptotic nature of its optimality criterion makes the problem more complicated and highly sensitive to small changes in the MDP, which is less observable in the finite horizon or discounted settings.

Formally, a Markov Decision Processes is defined as a tuple ($\mathcal{S}$,$\mathcal{A}$,$P$,$r$) where $\mathcal{S}$ is the state space of finite size $S$, $\mathcal{A}$ is the action space of finite size $A$. Letting $\Sigma_{\cX}$ denote the set of distribution over a set $\cX$,  $P:\mathcal{S}\times\mathcal{A}\rightarrow \Sigma_\mathcal{S}$ is the (assumed unknown) transition kernel, and $r: \mathcal{S}\times\mathcal{A}\rightarrow \Sigma_{[0,1]}$ is the reward kernel. For each state-action pair $(s,a)$, we denote by $\overline{r}_{s,a}=\mathds{E}[r(s,a)]$ the (assumed known) mean reward\footnote{In most practical cases, the reward of the system are decided preemptively, and the uncertainty solely resides in the dynamics; moreover, estimating rewards is often easier than estimating the transition probabilities, and doing it can be done at little cost, therefore not changing our results much.}. 
At each time step $t$, the agent observes a state $s_t\in \mathcal{S}$, takes an action $a_t$, and observes a reward $r_t \sim r(s_t,a_t)$ and a next state $s_{t+1}\sim P(s_t,a_t)$. In an average-reward MDP (AR-MDP) the value of a policy $\pi : \cS \mapsto \Delta(\cA)$ is measured with its gain $g_\pi(s) = \lim_{T\rightarrow +\infty} \frac{1}{T}\mathds{E}_\pi\left[\sum_{t=0}^{T-1} r_t | s_0 = s\right]$ where the expectation is taken when the agent follows policy $\pi$ (i.e., $s_t \sim \pi(a_t)$) from the initial state $s$. In this paper, we consider weakly communicating MDPs (defined in Section~\ref{sect:setting}) in which a policy $\pi_\star$ maximizing the gain in all states exists and has a constant value, denoted by $g_{\pi_\star}$.

We are interested in the best policy identification problem: we want to build an algorithm that learns the MDP by taking actions and collecting observations, until it can, after some (possibly random) number of interactions $\tau$ output a policy $\widehat{\pi}$ that is near-optimal. More precisely, given two parameters $\varepsilon\in(0,1)$ and $\delta \in (0,1)$ we seek an $(\varepsilon,\delta)$-Probably Approximately Correct (PAC) algorithm, that is an algorithm that satisfies 
\[\bP\left(\tau < \infty, \exists s \in \cS, g_{\pi_\star} - g_{\widehat{\pi}}(s) > \varepsilon\right)\leq\delta\;.\]
Our objective is to find such an algorithm that has minimal sample complexity, i.e., that requires the least amount of steps $\tau$, with high probability or in expectation. 
Two different models can be considered for the collection of observations.
In the online model, the algorithm can only choose the action $a_t$ to sample at each time step, as the state $s_t$ is determined by the MDP's dynamics and the previous actions. With a generative model however
the agent can sample the reward and next state of any state action pair $(s_t,a_t)$, regardless of which state it arrived in at the previous time step.

To the best of our knowledge, prior work on $(\varepsilon,\delta)$-PAC best policy identification in AR-MDPs has exclusively considered the generative model setting. In the online setting, the regret minimization objective has however been studied a lot (e.g., \citep{JakschOrtner10,BartlettTewari09,Bourel20UCRL3}).
Existing sample complexity or regret bounds feature different notions of complexity of the MDP, besides its size $S,A$.
The diameter $D$ and the optimal bias span $H$, formally defined in the next section, are two such complexity notions that both feature in lower or upper bounds on the sample complexity of existing algorithms.
More specifically for $(\varepsilon,\delta)$-PAC policy identification, the work of \cite{WangWang22} provides a worse-case lower bound showing that there exists an MDP on which any $(\varepsilon,\delta)$-PAC algorithm using a generative model should have a sample complexity larger than $\Omega((SAD/\varepsilon^2)\log(1/\delta))$. The recent work of \cite{ZurekChen23} provides an $(\varepsilon,\delta)$-PAC algorithm that takes $H$ as input and whose sample complexity is $\widetilde{\cO}\left((SAH/\varepsilon^2)\log(1/\delta)\right)$\footnote{In the paper, the $\widetilde{\cO}$ hides constant factors and logarithmic terms in $S$,$A$,$D$ or $H$, $1/\varepsilon$ and $\log(1/\delta)$.}. Using that $H \leq D$, this algorithm is thus optimal, and the lower bound is also in  $\widetilde{\Omega}((SAH/\varepsilon^2)\log(1/\delta))$. This raises the following question: can an algorithm attain the same optimal sample complexity without prior knowledge of $H$? More broadly, as detailed in the next section, all existing $(\varepsilon,\delta)$-PAC algorithms require some form of prior knowledge on the MDP, and we propose the first algorithms that are agnostic to the MDP.

\paragraph{Contributions} In the generative model setting, a first hope to get an algorithm agnostic to $H$ is to plug-in a tight upper bound on this quantity in the algorithm of \cite{ZurekChen23}. Our first contribution is a negative result: the number of samples necessary to estimate $H$ within a prescribed accuracy is not polynomial in $H$, $S$ and $A$. This result is proved in Section~\ref{sect:estimation-H}. 
On the positive side, by combining a procedure for estimating the diameter $D$ inspired by \cite{TarbouriechPirotta21} with the algorithm of \cite{ZurekChen23} we propose in Section~\ref{sect:exploitD} Diameter Free Exploration (DFE), an algorithm for communicating MDPs that does not require any prior knowledge on the MDP and has a near-optimal $\widetilde{\cO}\left((SAD/\varepsilon^2)\log(1/\delta)\right)$ sample complexity in the asymptotic regime of small $\varepsilon$. Then in Section~\ref{sect:online} we discuss the hardness of $(\varepsilon,\delta)$-PAC policy identification in the online setting. We notably prove a lower bound showing that the sample complexity of $(\varepsilon,\delta)$-PAC policy identification cannot always be polynomial in $S,A$ and $H$, even with the knowledge of $H$. On the algorithmic side, we propose in Section~\ref{sect:online_algos} an online variant of DFE whose sample complexity scales in $SAD^2/\varepsilon^2$. As prior work, DFE hinges on a conversion between the discounted and average-reward settings \cite{WangWang22} and uses uniform sampling. Departing from this approach, we further propose a novel data-dependent stopping rule tailored for AR-MDPs that can be used both in the generative model and the online setting. We prove that it is $(\varepsilon,\delta)$-PAC for any (possibly adaptive) sampling rule and give preliminary sample complexity guarantees.

%

\section{Related work}\label{sect:setting}

In order to position our work in the literature on best policy identification in average-reward MDP, we start by recalling some properties of average-reward MDPs and give a formal definition of the different complexity measures that have appeared in the literature.

First of all, it can be more or less easy to travel through the MDP and visit some states. We call an MDP weakly communicating if there exists a closed set of states that are all reachable from every other state (meaning $\mathcal{S}'\subset \mathcal{S}$ such that for any $s,s'$ in $\mathcal{S}'$, $\min_{\pi:\mathcal{S}\rightarrow\mathcal{A}}\mathds{E}[\min\{t,s_t=s'\}|s_0=s,\forall t',a_{t'}=\pi(s_{t'})]<+\infty$) and a possibly empty set of states which are transient (meaning the return time $\min\{t,s_t=s\}$ when the chain starts with $s_0=s$ is not almost surely finite) under any policy. 
Furthermore, when there is no such transient state, the MDP is then communicating, and it is possible to define the diameter of the MDP to quantify how fast circulating in the MDP can be:
\[D=\max_{s\neq s'} \min_{\pi:\mathcal{S}\rightarrow\mathcal{A}} \mathds{E}[\min\{t>0,s_t=s'\}|s_0=s,\forall t',a_{t'}=\pi(s_{t'})] \: .\]
Finally, if the MDP satisfies for any policy $\pi$ and states $s,s'$ that $\mathds{E}[\min\{t,s_t=s'\}|s_0=s,\forall t',a_{t'}=\pi(s_{t'})]<+\infty$, then the MDP is ergodic. If the MDP satisfies this except for a possibly empty set of transient states, the MDP is then called unichain. 
In the following, the MDPs are all considered at least weakly communicating, though some results shall require some stronger assumptions, which we will specify. Notably, every mention of the diameter will assume communicating MDPs. 

The diameter has appeared in previous upper and lower bound on the regret \citep{JakschOrtner10} and in the lower bound of \citep{WangWang22} for best policy identification. The diameter is also used in the literature on the Stochastic Shortest Path (SSP) problem, in which one seek to minimize the sum of cost obtained before some goal state is reached (some details on this alternative framework are given in Appendix~\ref{app:detailsalgo}). The SSP-diameter is defined as the maximum over states of the minimum expected steps to the goal state, and it appears in the sample complexity bounds given by \citep{TarbouriechPirotta21}. We will see in the following that there are multiple similarities between the SSP setting and our average-reward setting, the use of the diameter being only one of them.

For a policy $\pi:\mathcal{S}\rightarrow \mathcal{A}$, we define the gain $g_\pi(s)=\lim_{T\rightarrow+\infty}\frac{1}{T}\mathds{E}\left[\sum_{t=0}^{T-1} r_t |s_0=s\right]$.
By writing $P_\pi = \left( P(s,\pi(s))\right)_s$, and introducing $\overline{P_\pi}=\lim_{T\rightarrow +\infty} \frac{1}{T} \sum_{t=1}^T P_\pi^{t-1}$ the asymptotic distribution matrix, as well as defining $\overline{r}_\pi(s) = \mathds{E}[r(s,\pi(s))]$, we have $g_\pi(s) = \overline{P_\pi}(s)\overline{r}_\pi$. 
Finally, we can define the bias vector $b_\pi = \sum_{t=1}^{+\infty} \left(P_\pi^{t-1}-\overline{P_\pi}\right) \cdot \overline{r}_\pi$. Those vectors satisfy the so-called Poisson equations (see \cite{Puterman94}): 
\begin{numcases}{}
	(I-P_\pi) g_\pi=0 \label{eq:poissongeng}\\
	g_\pi+(I-P_\pi) b_\pi = \overline{r}_\pi \label{eq:poissongenb}
\end{numcases}
We notice that the solution, $b_\pi$, is defined up to an additive element in $\text{Ker}(I-P_\pi)$. 
Therefore, if the Markov chain is unichain for the (assumed unique) optimal policy (which is the case in weakly communicating MDPs), 
then the optimal gain vector is a constant $g^\star$, and $b^\star$ the bias vector of said optimal policy is defined up to an additive constant satisfying \begin{equation} \label{eq:poissonopti}
	g^\star + b^\star(s) = \max_a \left\{ \overline{r}(s,a) + p_{s,a} b^\star \right\},
\end{equation}
where $p_{s,a}$ is the (row) vector of transition probabilities from $(s,a)$. We further define \[H=\max_s b^\star(s) - \min_s b^\star(s)\] as the span of the optimal bias. 
The optimal bias span is always smaller than the diameter \cite{BartlettTewari09}, which motivates a line of work replacing $D$ with $H$ in existing regret bounds \citep{BartlettTewari09,Fruit18SpanBiaisRegret}. These improved bounds are however obtained only when $H$ is known. Similarly existing sample complexity bounds featuring $H$ all require this knowledge, as discussed in more detail below.

In addition to the diameter $D$ and the optimal bias span $H$, we define the mixing time \[t_\text{mix}^\star = \max_\pi \inf\left\{ t\geq 1:\max_s \left\| P_\pi^t (s) - \overline{P_\pi}(s) \right\|_1\leq\frac{1}{2}\right\}\] which has also appeared in previous sample complexity bounds. 
By default the maximum is taken over all stochastic policies $\pi:\mathcal{S}\rightarrow\Sigma_\mathcal{A}$, and we denote by $t_\text{mix,D}^{\star}$ the same quantity where the maximum is restricted to deterministic policies. 

\begin{table}[!ht]
	\caption{Existing sample complexity bounds in the generative model setting}
	\label{tab:gen}
	\centering \vspace{0.5em}
	\begin{tabular}{ccl}
		\toprule Algorithm & Bound & Setting \\ \midrule
		\cite{Wang17} & $\widetilde{\cO} \left( \frac{SA(\tau t_\text{mix}^\star)^2}{\varepsilon^2}\log(1/\delta)\right)$& $\frac{1}{\sqrt{\tau}S} \leq \overline{P_\pi}(s)\leq \frac{\sqrt{\tau}}{S}$, $\tau$ known\\ 
		& & $t_\text{mix}^\star$ over all policies, known \\ \midrule
		\cite{JinSidford20} & $\widetilde{\cO}\left(\frac{SA(t_\text{mix}^\star)^2 }{\varepsilon^2}\log(1/\delta)\right)$ & $t_\text{mix}^\star$, known\\ \midrule
		\cite{JinSidford21} & $\widetilde{\cO}\left(\frac{SAt_\text{mix,D}^{\star}}{\varepsilon^3}\log(1/\delta)\right)$ & $t_\text{mix,D}^{\star}$, known\\ \midrule
		\cite{LiWu22} & $\widetilde{\cO}\left( \frac{SA(t_\text{mix}^\star)^3}{\varepsilon^2}\log(1/\delta)\right)$ & $t_\text{mix}^\star$, known (policy gradient)\\ \midrule
		\cite{WangBlanchet23} & $\widetilde{\cO}\left( \frac{SAt_\text{mix,D}^{\star}}{\varepsilon^2}\log(1/\delta)\right)$ & $t_\text{mix,D}^{\star}$, known \\ \midrule
		\cite{WangWang22} & $\widetilde{\cO}\left(\frac{SAH}{\varepsilon^3}\log(1/\delta)\right)$ & $H$ known\\ \midrule
		\cite{ZhangXie23} & $\widetilde{\cO}\left( \left[\frac{SAH^2}{\varepsilon^2} + \frac{S^2AH}{\varepsilon}\right]\log(1/\delta)\right)$ & $H$ known \\ \midrule
		\cite{ZurekChen23} & $\widetilde{\cO}\left( \frac{SAH}{\varepsilon^2}\right)$& $H$ known \\\midrule \midrule
		This paper & $\widetilde{\cO}\left( \left[\frac{SAD}{\varepsilon^2} + S^2AD^2\right]\log(1/\delta)\right)$& no prior knowledge \\\midrule \midrule
		\cite{JinSidford21} & $\Omega\left(\frac{SAt_\text{mix}^\star}{\varepsilon^2}\right)$ & Lower bound (worse case) \\ \midrule
		\cite{WangWang22} & $\Omega\left(\frac{SAD}{\varepsilon^2}\log(1/\delta)\right)$ & Lower bound (worse case) \\ \bottomrule
	\end{tabular}
\end{table}

Multiple papers have considered the problem of finding an $\varepsilon$-optimal policy in average-reward with access to a {generative} model. We summarize existing results in Table~\ref{tab:gen}. \cite{JinSidford21} and \cite{WangWang22} derived worse case lower bounds for the problem by exhibiting MDPs on which a certain number of samples \textit{must} be collected to guarantee correctness of any $(\varepsilon,\delta)$-PAC algorithm. 
The literature has first focused on the mixing time as a measure of the complexity of MDPs \citep{Wang17,JinSidford20,JinSidford21,LiWu22}, until the mixing time-scaling lower-bound for the sample complexity was matched by \cite{WangBlanchet23}, for an algorithm taking $t^\star_{\text{mix},D}$ as input. 
Algorithms using the optimal bias span $H$ were introduced more recently, by \cite{WangWang22} and \cite{ZhangXie23}, and the lower bound scaling in $H$ was matched up to logarithmic factors by \cite{ZurekChen23}. This side of the literature has mostly used the links between discounted and average-reward MDPs and has used that for known $H$ it is possible to choose a discount $\gamma$ (with $1-\gamma$ of order $\varepsilon/H$) such that $H$-optimal policies in the discounted MDP are $\varepsilon$-optimal in the average-reward MDP \cite{WangWang22}. 
This idea leads to upper bounds of the sample complexity that scale with the (assumed known) upper bound on $H$. To make these algorithms agnostic to the MDP, a natural question is therefore: can we find a tight upper bound on $H$ from the data?

\section{On the hardness of estimating $H$}\label{sect:estimation-H}

In this section, we investigate the complexity of estimating the optimal bias span $H$ and more specifically of finding upper bounds on this quantity.

\begin{defi}
	We say an algorithm computes a $\Delta$-tight upper bound for the optimal bias span with probability $1-\delta$ when, on any MDP of optimal bias $H$, it outputs $\hat{H}$ such that with probability higher than $1-\delta$, $H\leq \hat{H}\leq H+\Delta$.
\end{defi}

Theorem \ref{th:Hhardtoestimate} shows that there exists an MDP
on which the sample complexity of an algorithm finding a $\Delta$-tight upper bound with probability larger than $1-\delta$ can be arbitrarily large. As a consequence, there cannot exist a bound on this sample complexity depending solely on $S$, $A$, $H$, $\delta$ and $\Delta$.

\begin{theo}\label{th:Hhardtoestimate}
	For any $\delta < \frac{1}{2e^4}$, $T>0$, $\Delta$, there exists an ergodic MDP $\mathcal{M}$ with optimal bias span $H=1/2$, $S=3$ and $A=2$ such that any algorithm that computes a $\Delta$-tight upper bound for the optimal bias span with probability $1-\delta$ under a generative model assumption needs (in expectation) more than $T$ samples in $\mathcal{M}$.
\end{theo}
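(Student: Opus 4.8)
The plan is to turn the statement into a two-hypothesis testing lower bound. I would exhibit two ergodic MDPs $\mathcal{M}_0$ and $\mathcal{M}_1$ on the common spaces $\mathcal{S}=\{s_1,s_2,s_3\}$ and $\mathcal{A}=\{a,b\}$, with identical (known) reward kernels, agreeing on every transition except at one state-action pair $(s^\dagger,a^\dagger)$ where they differ by an arbitrarily small amount $\varepsilon$, but whose optimal bias spans satisfy $H_{\mathcal{M}_0}=1/2$ while $H_{\mathcal{M}_1}$ is as large as we like. Taking $\mathcal{M}:=\mathcal{M}_0$, any algorithm returning a $\Delta$-tight upper bound with probability $1-\delta$ must output $\widehat H\le 1/2+\Delta$ on $\mathcal{M}_0$ and $\widehat H\ge H_{\mathcal{M}_1}>1/2+\Delta$ on $\mathcal{M}_1$; since these events are disjoint, it cannot be correct with probability $1-\delta$ on both unless it collects more than $T$ samples on $\mathcal{M}_0$.

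The construction exploits the discontinuity of the bias span. Let $s_1,s_2$ form a \emph{core} two-state chain, with a genuine action choice at $s_1$, whose rewards are tuned so that its optimal bias span equals exactly $1/2$. Let $s_3$ be reached only with a tiny probability $p>0$ (needed for ergodicity) and be near-absorbing: under either action it returns to $s_1$ with probability $\eta$ and self-loops otherwise, with $p\ll\eta$. The optimality equation~\eqref{eq:poissonopti} at $s_3$ reads $g^\star+b^\star(s_3)=\overline r_{s_3}+(1-\eta)b^\star(s_3)+\eta\, b^\star(s_1)$, that is, $b^\star(s_3)-b^\star(s_1)=(\overline r_{s_3}-g^\star)/\eta$. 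In $\mathcal{M}_0$ I set $\overline r_{s_3}:=g^\star_{\mathcal{M}_0}$ (such a value exists since the map sending $\overline r_{s_3}$ to the resulting $g^\star$ is continuous and maps $[0,1]$ into itself), which forces $b^\star(s_3)=b^\star(s_1)$; thus $s_3$ does not enlarge the span and $H_{\mathcal{M}_0}=1/2$. I obtain $\mathcal{M}_1$ by moving a mass $\varepsilon$ onto a self-loop at a core state whose reward differs from the gain. Because $p\ll\eta$, the state $s_3$ is negligible in the stationary distribution, so the optimal gain shifts by $\Delta g$ with $|\Delta g|\ge c'\varepsilon$ for some fixed $c'>0$ independent of $\eta$ and $p$. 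The displayed identity then gives $|b^\star(s_3)-b^\star(s_1)|=|\Delta g|/\eta$, hence $H_{\mathcal{M}_1}\ge c'\varepsilon/\eta$; choosing first $\varepsilon$ and then $\eta$ small enough makes this exceed $1/2+\Delta$.

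For the statistical step I would argue in the standard way. Suppose an algorithm is $\Delta$-tight with probability $1-\delta$ and $\mathds{E}_{\mathcal{M}_0}[\tau]\le T$. Let $\mathcal{E}=\{\tau<\infty,\ \widehat H\le 1/2+\Delta\}$. Correctness on $\mathcal{M}_0$ gives $\bP_{\mathcal{M}_0}(\mathcal{E})\ge 1-\delta$; correctness on $\mathcal{M}_1$ forces $\widehat H\ge H_{\mathcal{M}_1}>1/2+\Delta$ with probability at least $1-\delta$, an event disjoint from $\mathcal{E}$, so $\bP_{\mathcal{M}_1}(\mathcal{E})\le\delta$. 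By monotonicity of the binary relative entropy the left-hand side below is at least $\mathrm{kl}(1-\delta,\delta)$, while the chain rule for the KL divergence between the laws of the observations up to $\tau$ (valid since $\mathds{E}_{\mathcal{M}_0}[\tau]<\infty$), together with the fact that the two MDPs differ only at $(s^\dagger,a^\dagger)$ and that at most $\tau$ samples are drawn there, gives
\[
\mathrm{kl}\!\left(\bP_{\mathcal{M}_0}(\mathcal{E}),\,\bP_{\mathcal{M}_1}(\mathcal{E})\right)\ \le\ \mathds{E}_{\mathcal{M}_0}[\tau]\cdot \mathrm{KL}\!\left(P_{\mathcal{M}_0}(s^\dagger,a^\dagger)\,\|\,P_{\mathcal{M}_1}(s^\dagger,a^\dagger)\right)\ \le\ C_0\,T\,\varepsilon^2,
\]
with $C_0$ depending only on the (fixed) core transition probabilities. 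Since $\delta<\tfrac{1}{2e^4}$ gives $\mathrm{kl}(1-\delta,\delta)=(1-2\delta)\ln\tfrac{1-\delta}{\delta}>4$, choosing $\varepsilon$ small enough (before fixing $\eta$) that $C_0 T\varepsilon^2<4$ yields a contradiction; hence $\mathds{E}_{\mathcal{M}_0}[\tau]>T$, which is the claim with $\mathcal{M}=\mathcal{M}_0$.

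I expect the main obstacle to be the construction rather than the testing argument: one must realize the blow-up of the bias span inside a three-state ergodic MDP while keeping $H_{\mathcal{M}_0}$ pinned at exactly $1/2$, verifying in particular that (i) the $\varepsilon$-perturbation keeps both MDPs ergodic and does not flip the optimal action at any state, so the optimality equation at $s_3$ retains the form above in $\mathcal{M}_1$ too, and (ii) the gain shift $\Delta g$ does not decay faster than $\eta$ as $p,\eta\to 0$, i.e.\ that the rarely visited near-absorbing state still registers the $\Theta(\varepsilon)$ change in the optimal gain through its bias. Once the parameters are ordered correctly (fix the core, then $\varepsilon$, then $\eta\gg p$, then $\overline r_{s_3}$ via the fixed point), the remaining steps --- the explicit two-state span computation, the fixed-point existence, and the KL chain rule --- are routine.
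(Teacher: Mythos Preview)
Your approach is sound and genuinely different from the paper's. Both arguments reduce to a two-hypothesis test between nearby MDPs with vastly different optimal bias spans, but the mechanisms differ in two interesting ways. First, the paper perturbs the \emph{reward} distribution (Bernoulli$(1/2\pm\varepsilon)$ at one state-action) and obtains the span blow-up through a \emph{discontinuity}: crossing $R=1/2$ flips the optimal policy, and under the new optimal policy a bottleneck transition of probability $p$ makes the span jump from $1/2$ to roughly $1/p$. Your construction instead perturbs a \emph{transition}, keeps the optimal policy fixed, and amplifies the resulting $\Theta(\varepsilon)$ shift in $g^\star$ through a near-absorbing state whose bias satisfies $b^\star(s_3)-b^\star(s_1)=(\overline r_{s_3}-g^\star)/\eta$; this is a continuous but arbitrarily steep dependence rather than a jump. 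Second, the paper's statistical step is a bespoke likelihood-ratio argument combined with Kolmogorov's maximal inequality (borrowed from \cite{ChenTirinzoni23}), whereas you use the standard data-processing/Bretagnolle--Huber route via $\mathrm{kl}(1-\delta,\delta)$.

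Your route buys compatibility with the paper's standing assumption of known mean rewards (the paper's own hard instance requires learning a reward), and the information-theoretic step is cleaner. The paper's route buys an explicit instance with exact span values and avoids the fixed-point and parameter-ordering gymnastics you correctly flag: in your construction, pinning $H_{\mathcal{M}_0}$ at exactly $1/2$ (as the theorem demands) is slightly delicate, since adding $s_3$ with probability $p$ perturbs the core bias values, so after solving the fixed point $\overline r_{s_3}=g^\star$ you still need one more degree of freedom (e.g.\ a core reward) tuned so that $|b^\star(s_1)-b^\star(s_2)|=1/2$ holds in the full three-state MDP. This is routine via continuity but should be stated. The other caveats you list --- ergodicity under every policy, stability of the optimal action under the $\varepsilon$-perturbation, and $|\Delta g|\ge c'\varepsilon$ with $c'$ uniform in $\eta,p$ --- are all genuine but surmountable with the parameter ordering you propose.
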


\paragraph{Sketch of proof} To give an idea of the proof, we first focus on the weakly-communicating setting, with the full proof and the necessary ergodicity transformation detailed in Appendix \ref{app:H}. 
We use the family of MDPs $(\mathcal{M}_R)_{0<R<1}$ displayed in Figure \ref{fig:mes}. Indeed, having $R<1/2$ makes the full-line policy optimal; since moving from one state to the optimal final state (1) is very easy in this case, no state is drastically worse than another, hence a small optimal bias span, $H=1/2$. However, $R>1/2$ makes the dashed-line policy optimal; moving from state 3 to state 2 is more difficult as $p$ is small, taking on average $1/p$ time steps, and state 3 has a worse reward than state 2; thus state 3 is considered way worse than state 2, hence a high optimal bias span, $H=\frac{1+p}{p}R$. Thus, to bound $H$ tightly, one needs to know whether $R>1/2$ or $R<1/2$, which can require many samples if $R$ is very close to $1/2$. 

\qed

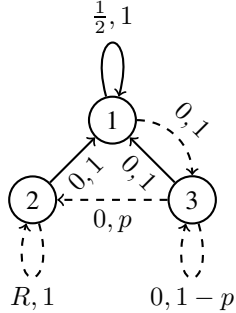
\begin{figure}[!ht]
	\centering
	\begin{tikzpicture}[node distance={15mm},thick,main/.style = {draw,circle}]
		\node[main] (1) {1};
		\node[main] (2) [below left of=1] {2};
		\node[main] (3) [below right of=1] {3};
		\draw[->] (1) to [loop above,looseness=15]node{$\frac{1}{2},1$} (1);
		\draw[->,dashed] (2) to [loop below,looseness=15]node{$R,1$} (2);
		\draw[->,dashed] (3) to [loop below,looseness=15]node{$0,1-p$} (3);
		\draw[->,dashed] (3) to node[midway,below]{$0,p$} (2);
		\draw[dashed,->] (1) to [out=0,in=90]node[midway, above, sloped]{$0,1$} (3);
		\draw[->] (2) to node[midway, below,sloped]{$0,1$} (1);
		\draw[->] (3) to node[midway, below,sloped]{$0,1$} (1);
	\end{tikzpicture}
	\caption{MDP $\mathcal{M}_R$, the hard instance for Theorem \ref{th:Hhardtoestimate}.  Each arrow corresponds to a state-action and next state combination, and is annotated with the mean reward of the action and the probability of the transition. Arrows with a different line style correspond to different actions.}
	\label{fig:mes}
\end{figure}

We remark that since the hard MDP instance in our proof has a fixed optimal bias span $1/2$, we would have the same problem if we looked for algorithms trying to find $\hat{H}$ such that, with high probability, $H\leq \hat{H}\leq (1+\Delta)H$. Moreover, instead of looking for upper bound we could also consider the task of estimating $H$ within a given margin of error and the same issue would arise.

While Theorem~\ref{th:Hhardtoestimate} does not preclude the existence of an algorithm that is agnostic to $H$ and reaches a $\widetilde{\cO}((SAH/\varepsilon^2)\log(1/\delta))$ sample complexity, it still suggests that assuming a known upper bound on $H$ is a lot to ask. In the next section we will see that for communicating MDPs such an assumption is not needed to attain a near-optimal sample complexity.

\section{A near-optimal algorithm
without prior knowledge}\label{sect:exploitD}

As we have seen, finding tight upper bounds on $H$ is not feasible in finite time.
In the literature on regret minimization, in which improved regret bounds featuring $H$ have also been derived when $H$ is known, two types of workarounds have been used. 
In the REGAL algorithm\footnote{Some issues about this algorithm are detailed in \citep{fruitOptimalExplorationExploitationNonCommunicating2018}, but the doubling trick still deserves consideration.}, \cite{BartlettTewari09} propose to use a doubling trick to counter not knowing $H$, at the expense of an additional factor of $\sqrt{S}$ in the regret. 
The doubling trick method is not applicable to BPI problems, though. For communicating MDPs, the idea of plugging-in an upper bound on $D$ which is also a (not necessarily tight) upper bound on $H$ was first proposed by \cite{ZhangJi19} and permits to match the minimax lower bound on the regret of \cite{JakschOrtner10} featuring $D$. In this section, we translate this idea to the best policy identification setting.

We propose Diameter Free Exploration (\algo), stated as Algorithm~\ref{alg:nopriorknowledge}, which combines a diameter estimation sub-routine that follows from the work of \cite{TarbouriechPirotta21,TarbouriechPirotta21b} with the state-of-the-art algorithm of \cite{ZurekChen23} when (an upper bound on) $H$ is known. Both components are described in details in Appendix~\ref{app:detailsalgo} with their theoretical properties. More precisely, \algo{} first calls Algorithm~\ref{alg:diameter_estimation} to output a quantity $\widehat{D}$ which satisfies $D \leq \widehat{D} \leq 4D$ with probability larger than $1-\delta/2$, using a random number of samples $N$ from each state action pair $(s,a)$ that satisfies $N = \widetilde{\cO}(D^2(\log(1/\delta)+S))$. Then Algorithm~\ref{alg:BPI_H} (which also uses uniform sampling) takes as input $\widehat{D}$ and is guaranteed to output a policy that is $\varepsilon$-optimal with probability larger than $1-\delta/2$, using a total number of samples of order $\widetilde{\cO}\left(\tfrac{SA\widehat{D}}{\varepsilon^2}\log\left(\frac{1}{\delta}\right)\right)$. This gives the following theorem.

\begin{algorithm}[t]
	\KwData{Accuracy $\varepsilon \in (0,1)$, confidence level $\delta\in(0,1)$}
		\hspace{0.1cm} Let $\widehat{D}$ be the output of Algorithm~\ref{alg:diameter_estimation} with accuracy $1$ and confidence level $\delta/2$. \\
		Let $\widehat{\pi}$ be the output of Algorithm~\ref{alg:BPI_H} with accuracy $\varepsilon$, upper bound $\overline{H} = \widehat{D}$ and confidence level $\delta/2$ \\
    \Return $\hat{\pi}$
	\caption{Diameter Free Exploration (\algo)}
	\label{alg:nopriorknowledge}
\end{algorithm}

\begin{theo}\label{th:exploitD}
	Algorithm \ref{alg:nopriorknowledge} is $(\varepsilon,\delta)$-PAC and its sample complexity satisfies, with probability $1-\delta$,  
	\[\tau = \widetilde{\cO}\left(\left[\frac{SAD}{\varepsilon^2} + D^2SA\right]\log\left(\frac{1}{\delta}\right)+D^2S^2A\right)\;.\] 
\end{theo}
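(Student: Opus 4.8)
The proof is essentially a composition argument built on two black-box guarantees that the excerpt has already attributed to the sub-routines, so the plan is to make the union bound and the sample-count bookkeeping explicit. First I would recall the two stated properties: Algorithm~\ref{alg:diameter_estimation}, run with accuracy $1$ and confidence $\delta/2$, returns $\widehat{D}$ with $D\leq\widehat{D}\leq 4D$ on an event $\mathcal{E}_1$ of probability at least $1-\delta/2$, using $N=\widetilde{\cO}(D^2(\log(1/\delta)+S))$ samples from each of the $SA$ state-action pairs, hence $\widetilde{\cO}(D^2 SA(\log(1/\delta)+S))$ samples in total; and Algorithm~\ref{alg:BPI_H}, run with accuracy $\varepsilon$, confidence $\delta/2$ and \emph{any} valid upper bound $\overline{H}\geq H$, returns a policy $\widehat\pi$ that is $\varepsilon$-optimal on an event $\mathcal{E}_2$ of probability at least $1-\delta/2$, using $\widetilde{\cO}\big((SA\overline{H}/\varepsilon^2)\log(1/\delta)\big)$ samples.

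For correctness, I would work on $\mathcal{E}_1\cap\mathcal{E}_2$, which by a union bound has probability at least $1-\delta$. On $\mathcal{E}_1$ we have $\widehat{D}\geq D\geq H$ (using the stated fact $H\leq D$ for communicating MDPs), so $\overline{H}=\widehat{D}$ is a legitimate input to Algorithm~\ref{alg:BPI_H}; therefore its guarantee applies and on $\mathcal{E}_2$ the returned $\widehat\pi$ satisfies $g_{\pi_\star}-g_{\widehat\pi}(s)\leq\varepsilon$ for all $s$. Since $\tau<\infty$ always (both sub-routines terminate with the claimed sample counts, which are finite), this establishes the $(\varepsilon,\delta)$-PAC property.

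For the sample complexity, still on $\mathcal{E}_1$, I would substitute the high-probability bound $\widehat{D}\leq 4D$ into the cost of Algorithm~\ref{alg:BPI_H}, giving $\widetilde{\cO}\big((SAD/\varepsilon^2)\log(1/\delta)\big)$ for the second phase. Adding the first-phase cost $\widetilde{\cO}\big(D^2 SA\log(1/\delta)+D^2 S^2 A\big)$ (splitting the $\log(1/\delta)+S$ factor into its two summands) and absorbing constants and logarithmic factors into $\widetilde{\cO}$ yields
\[
\tau=\widetilde{\cO}\!\left(\left[\frac{SAD}{\varepsilon^2}+D^2 SA\right]\log\!\left(\frac{1}{\delta}\right)+D^2 S^2 A\right),
\]
as claimed. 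One caveat worth noting is that this total bound holds on the intersection event of probability $1-\delta$, which is exactly the "with probability $1-\delta$" in the statement; the correctness event and the sample-complexity event coincide here, so no further care is needed.

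The argument has no real obstacle — the only mildly delicate point is making sure the monotonicity of Algorithm~\ref{alg:BPI_H}'s guarantee in its input $\overline{H}$ is used correctly: one must invoke it with the \emph{random} value $\overline{H}=\widehat{D}$, so strictly speaking one conditions on $\mathcal{E}_1$ (equivalently, on the value of $\widehat D$) before applying the second sub-routine's PAC guarantee, and then notes that the conditional failure probability $\delta/2$ integrates to at most $\delta/2$ unconditionally. I would state this conditioning explicitly to avoid any measurability concern, and otherwise the proof is a two-line union bound plus the substitution $\widehat D\leq 4D$.
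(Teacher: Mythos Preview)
Your proposal is correct and follows essentially the same approach as the paper: invoke the guarantee of the diameter-estimation sub-routine (Theorem~\ref{thm:sc-diameter} with $\varepsilon=1$, $\delta/2$) to get $D\leq\widehat D\leq 4D$ and the first-phase cost, then apply the guarantee of Algorithm~\ref{alg:BPI_H} (Theorem~\ref{thm:algo_Zureck}) conditionally on $\widehat D\geq D\geq H$, and combine via a union bound with the substitution $\widehat D\leq 4D$. Your explicit remark about conditioning on $\mathcal{E}_1$ before invoking the second guarantee is exactly how the paper handles the random input $\overline H=\widehat D$.
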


From Theorem 4 of \citep{WangWang22}, there exists a communicating MDP with a sample complexity in $\Omega\left( \frac{SAD}{\varepsilon^2}\ln \left(1/\delta\right)\right)$.
Therefore, the \algo{} algorithm is worse than the lower bound by a multiplicative factor $\left( 1 + D\varepsilon^2+\frac{DS\varepsilon^2}{\log 1/\delta}\right)$. In particular, in the regime of small $\varepsilon$, this algorithm is optimal up to logarithmic factors.


\section{On the hardness of best policy identification in the online setting}\label{sect:online}

In the online setting, many algorithms with regret guarantees have been designed, however to the best of our knowledge there exists no online algorithm that is guaranteed to output a policy $\widehat{\pi}$ satisfying $g_{\widehat{\pi}} \geq g^*-\varepsilon$ with probability larger than $1-\delta$\footnote{The work of \cite{LiWu22} provides an online policy gradient algorithm for ergodic MDPs but its guarantees are not expressed in the PAC setting considered in the paper (rather with simple regret). Moreover, this algorithm requires some prior knowledge on the MDP through some parameter $\beta$ related to the mixing time}. In this section, we give some elements of explanation.

\paragraph{On the hardness of a regret to PAC conversion}
In the finite-horizon setting, \cite{JinAllenzhu18} gave a way to convert any algorithm guaranteeing sublinear regret into a BPI algorithm with finite sample complexity.
However, in average-reward MDP, this conversion is not straightforward. Indeed, it hinges on the fact that selecting at random one of the policies played during the regret-minimizing algorithm should yield a good enough policy: the more time steps there are, the smaller the regret induced by the most recent policy is, thus the better the policy is. However, in our setting, we know that the empirical mean reward of a policy converges towards the gain of this policy, but the speed of this convergence is determined by the mixing time. Moreover, as shown in \citep{WangWang22}, this mixing time can be arbitrarily large. However, the regret is not \textit{defined} asymptotically. In \citep{JakschOrtner10}, the regret at time $T$ is defined as $Tg^*-\bE[\sum_{t=1}^T r_t]$. In \citep{PesquerelMaillard22}, the regret is $\mathds{E}_{\pi^*}[\sum_{t=1}^T r_t]-\mathds{E}[\sum_{t=1}^T r_t]$. As the mixing time can be big, it is possible that these regret measures are small for a large number of steps, but that the underlying policy is not anywhere near asymptotically optimal.

It is therefore not possible to preemptively define the number of steps necessary for the regret-minimizing algorithm to find an asymptotically good policy, and not just one that is good on the short term. For example, for any $N\in\mathds{N}$, $N\geq 2$, consider the MDP $\mathcal{M}_p$ displayed in Figure \ref{fig:mixing} with $p=\frac{1}{N}$ and any $p' < \frac{1-\varepsilon}{1+\varepsilon}p$. While the dashed-line policy is asymptotically best, the full-lined policy is better for at least $N$ time steps, and is not even $\varepsilon$-optimal, as we show in Appendix \ref{app:mixing}. 
An algorithm guaranteeing low regret (over a slightly modified version of the MDP to guarantee ergodicity) could therefore take the full-line action for a number of steps independent of the known parameters $S$ and $A$,
and the knowledge of $N$ would be required to know how many samples are necessary for best policy identification. This means that a wrapper turning any regret minimization algorithm into a best policy identification algorithm would require estimating and incorporating this measure $N$ somewhere, which argues against the existence of a straightforward wrapper as in the episodic setting. Similar arguments in the SSP setting have been brought up for example in \citep{TarbouriechPirotta21}.

\begin{figure}[t]
	\centering
	\begin{tikzpicture}[node distance={15mm},thick,main/.style = {draw,circle}]
		\node[main] (1) {1};
		\node[main] (2) [left of=1] {2};
		\node[main] (3) [right of=1] {3};
		\draw[->] (1) to [loop above,looseness=15]node{$1,1-p$} (1);
		\draw[->,dashed] (1) to [loop below,looseness=15]node{$0,1-p$} (1);
		\draw[->,dashed] (2) to [loop left,looseness=15]node{$1,1-p'$} (2);
		\draw[->] (3) to [loop right,looseness=15]node{$0,1-p'$} (3);
		\draw[->,dashed] (1) to [out=200,in=-20]node[midway,below]{$0,p$} (2);
		\draw[->,dashed] (2) to [out=20,in=160]node[midway,above]{$1,p'$} (1);
		\draw[->] (1) to [out=20,in=160]node[midway, above, sloped]{$1,p$} (3);
		\draw[->] (3) to [out=200,in=-20]node[midway,below]{$0,p'$} (1);
	\end{tikzpicture}
	\caption{MDP $\mathcal{M}_{p,p'}$ with high mixing time.}
	\label{fig:mixing}
\end{figure}
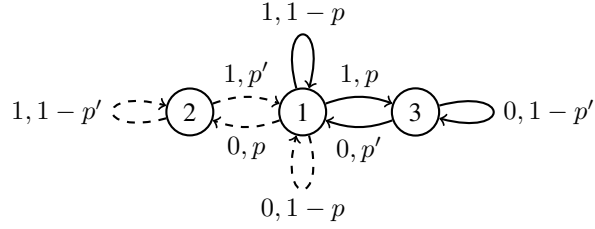

\paragraph{A hardness result} As the following lower bound shows, it is in fact not possible at all to have an online best policy identification algorithm that has a sample complexity bound polynomial in $S$, $A$ and $H$. This result is as a counterpart of the hardness result proved by \cite{ChenTirinzoni23} in SSP-MDPs. 

\begin{theo} \label{th:online}
	For any $S\geq 4$, $A\geq 4$, $\varepsilon\in \left( 0,\frac{1}{4}\right)$, $\delta \in \left( 0,\frac{1}{16}\right)$, and any $(\varepsilon,\delta)$-PAC online algorithm,
	there exists a weakly communicating MDP with $S$ states, $A$ actions, mean rewards in \([0,1]\), $H\leq S$ and $D\geq A^{S-1}/16 \varepsilon$ such that if the algorithm starts in a given state $s_1$, $\bE[\tau] =\Omega\left( \frac{A^{S-1}}{\varepsilon}\right)$.
\end{theo}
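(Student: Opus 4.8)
The plan is to prove this lower bound by exhibiting, for the given algorithm, a hard instance inside a carefully designed family of weakly communicating MDPs and then running an information‑theoretic (change‑of‑measure) argument; this is the average‑reward analogue of the SSP hardness result of \cite{ChenTirinzoni23}.

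I would use a ``combination‑lock'' MDP on $S$ states: a chain of lock states $\ell_1,\dots,\ell_m$ with $m=\Theta(S)$ and $s_1=\ell_1$, in which progressing one step along the chain is possible only by playing a hidden ``correct'' action, and moreover the advance is \emph{probabilistic} (success probability $\approx 1/A$, with a failure resetting the trajectory to $\ell_1$), while an ``off‑path'' action at a deep state steers the trajectory into a region whose rewards are $\Omega(1)$ below the optimal gain. The instances of the family are indexed by the assignment of correct actions at the lock states, so there are $\Theta(A^{S-1})$ of them. The rewards are calibrated so that: (a) the optimal policy earns the optimal gain $g^\star=\Theta(1)$ with an optimal bias that is constant along the chain and only $O(1)$ below it elsewhere — using the optimality equation~\eqref{eq:poissonopti} this gives $H=O(1)\le S$; (b) the probabilistic advances, with the last transition tuned to probability $\Theta(\varepsilon/A^{S-1})$, make $\Omega(A^{S-1}/\varepsilon)$ steps necessary for \emph{any} policy to travel from $\ell_1$ to the deepest states, whence $D\ge A^{S-1}/(16\varepsilon)$; (c) a policy that fails to play the correct action at even one deep lock state is, started from an appropriate state, more than $\varepsilon$‑suboptimal, hence not a valid $(\varepsilon,\delta)$‑PAC output. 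Boundedness of the rewards and the weakly communicating property are checked along the way.

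For the lower bound itself, by (c) any $(\varepsilon,\delta)$‑PAC algorithm must, with probability $\ge 1-\delta$, output a policy that agrees with the correct actions at all deep lock states, i.e.\ it must identify those actions. I would then bound $\mathbb{E}[\tau]$ by combining two ingredients. First, a per‑stage change‑of‑measure (Le~Cam / Bretagnolle--Huber type) argument: until the trajectory has visited a lock state $\ell_j$ enough times, the law of the observations is essentially the same across instances that disagree only on the correct action at $\ell_j$, and since the distinguishing events are rare, correctness at confidence $1-\delta>1-\tfrac1{16}$ forces $\Omega(1/\varepsilon)$ visits to $\ell_j$ per candidate action, hence $\Omega(A/\varepsilon)$ visits in total. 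Second, a renewal/recursion bound that holds for \emph{any} (possibly adaptive) policy: each visit to $\ell_j$ must be immediately preceded by $j-1$ consecutive successful advances, each of probability $\le 1/A$ with a failure resetting to $\ell_1$, so solving the recursion on expected hitting times shows the expected time up to the $N$‑th visit of $\ell_j$ is $\Omega(N\,A^{j-1})$ (which simultaneously yields the stated bound on $D$). Taking $j=m$ and multiplying the two estimates gives $\mathbb{E}[\tau]=\Omega(A^{S-1}/\varepsilon)$ averaged over the family, hence on some instance, which is the claim.

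The main obstacle I expect is the combination of these two ingredients in the presence of a \emph{random} stopping time and an \emph{adaptive} sampling rule: one must argue that the algorithm can neither infer the correct action at a deep state ``remotely'' nor reach that state faster than the renewal bound permits, and then chain the per‑stage estimates across all $m$ stages while tracking constants carefully enough to exploit $\delta<\tfrac1{16}$. A second, more technical difficulty lies in the construction itself — forcing $H\le S$, the weakly communicating property, $D\ge A^{S-1}/16\varepsilon$, and the $\varepsilon$‑separation of (c) to hold \emph{simultaneously}, since the reward scaling needed to keep the bias span small works against the separation that makes a wrong deep action $\varepsilon$‑suboptimal.
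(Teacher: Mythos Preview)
Your high-level plan---a combination-lock family indexed by the hidden correct actions, plus an information/averaging argument---matches the paper's, but both your construction and your argument differ from it, and the tension you flag at the end is exactly where your version runs into trouble.

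The paper concentrates \emph{all} the randomness in a single rare transition and makes the lock \emph{deterministic}: from $s_1$ (reward $1$) every action stays in $s_1$ with probability $1-p$ and falls to $s_2$ with probability $p=16\varepsilon/A^{S-1}$; in $s_n$ ($n\ge2$, reward $0$) the correct action $j_{n-1}$ advances to $s_{n+1}$ and any other action returns to $s_2$. Because the lock is traversed in exactly $S-1$ steps, the Poisson equation gives $H\le S$ directly, while $D=1/p+S-1$ comes from the single bottleneck $s_1\to s_2$. The $\varepsilon$-separation is also clean: a policy with $x_\pi^j=\prod_i \pi(j_{i-1}\mid s_i)$ has gain at most $(1+p/x_\pi^j)^{-1}$, so $\varepsilon$-optimality forces $x_\pi^j\ge p/(8\varepsilon)$; since $\sum_j x_\pi^j=1$, no single policy is $\varepsilon$-optimal on more than $8\varepsilon/p=A^{S-1}/2$ instances. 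Your construction, with probabilistic advances of probability $\approx 1/A$ at every lock state, forces the \emph{optimal} policy itself to take $\Omega(A^{m-1})$ expected steps through the lock. Keeping the bias constant along the chain then requires every lock reward to equal $g^\star$, which kills the separation (being stuck in the lock is not suboptimal); pushing suboptimality into an off-path low-reward region reintroduces a large bias there unless that region is easy to escape, which would cost you the diameter bound. I do not see how to satisfy all four constraints simultaneously in your setup; the paper sidesteps the issue by making the lock deterministic.

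The paper's information argument is correspondingly simpler than your per-stage change-of-measure plus renewal chaining. Since the lock is entered only via the rare transition out of $s_1$, on the event ``the first $1/p$ steps never leave $s_1$'' (probability at least $(1-p)^{1/p}$) the learner's observations are \emph{identical} across all $\mathcal{M}_j$, so the conditional law of $\hat\pi$ does not depend on $j$. Combined with the pigeonhole bound above, some $j$ has conditional success probability at most $1/2$, which contradicts $\delta<1/16$ if the learner stopped within $1/p$ steps with probability $\ge 7/8$. No KL computation, no renewal recursion, no chaining across stages is needed.
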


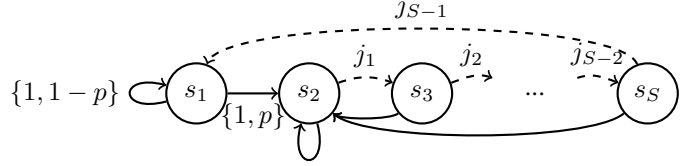
\begin{figure}[t]
	\centering
	\begin{tikzpicture}[node distance={15mm},thick,main/.style = {draw,circle,minimum size=0.8cm}]
		\node[main] (1) {\(s_1\)};
		\node[main] (2) [right of=1] {\(s_2\)};
		\node[main] (3) [right of=2] {\(s_3\)};
		\node[minimum size=1.1cm] (n) [right of=3] {...};
		\node[main] (S) [right of=n] {\(s_S\)};
		\draw[->] (1) to node[below]{\(\{1,p\}\)} (2);
		\draw[->,dashed] (2) to [out=20,in=160]node[above]{\(j_1\)} (3);
		\draw[->,dashed] (3) to [out=20,in=160]node[above]{\(j_2\)} (n);
		\draw[->,dashed] (n) to [out=20,in=160]node[above]{\(j_{S-2}\)} (S);
		\draw[->,dashed] (S) to [out=110,in=70,looseness=0.3]node[above]{\(j_{S-1}\)} (1);
		\draw[->] (3) to [out=220,in=-40,looseness=0.4] (2);
		\draw[->] (S) to [out=220,in=-40,looseness=0.4] (2);
		\draw[->] (2) to [loop below] (2);
		\draw[->] (1) to [loop left]node[left]{\(\{1,1-p\}\)} (1);
	\end{tikzpicture}
	\caption{MDP \(\mathcal{M}_j\), the hard instance for Theorem \ref{th:online}}
	\label{fig:mj}
\end{figure}

\paragraph{Sketch of proof}
For $j\in (S-1)^A$, we define the hard instance $\mathcal{M}_j$ as in Figure \ref{fig:mj}. 
An agent in $s_2$ must execute the precise sequence of actions $j$ to get back to $s_1$, which is the only state that generates non zero rewards. 
Since not learning this precise sequence would mean the agent will perform badly on one of the $\mathcal{M}_j$, and therefore perform a very suboptimal policy, the agent will need to have a big enough sample size to at least see $s_2$ with high probability when starting in $s_1$. 
With a small value of $p$, it is possible to make the sample complexity exponentially big, as we detail in Appendix \ref{sect:onlinecomplexity}.

This hard MDP is actually inspired from the hard SSP-MDP instance considered by \cite{ChenTirinzoni23} to prove that sample-efficient learning is impossible in the online setting of SSP. 
Indeed, their hard MDP is one where the cost is always the same in each state, in which case we can easily transform the SSP problem into an average reward one, as shown in Figure \ref{fig:modiftot}. 
Indeed, if the costs of the SSP-MDP are all equal and non-zero, then finding an optimal policy in it can be show to be equivalent to finding an optimal policy in the corresponding AR-MDP in which the only reward is in the initial state.
\qed

\medskip

We remark that while the hard MDP instances $\mathcal{M}_j$ used in our proof have an optimal bias span $H$ that is upper bounded by $S$, its diameter is exponential is $S$. Hence while Theorem~\ref{th:online} implies that no online algorithm can get a $\cO\left(\frac{SAH}{\varepsilon^2}\right)$ sample complexity (with or without the knowledge of $H$) on every instance and for every $\varepsilon$, it does not rule out the possibility to have an online algorithm with a sample complexity scaling with $D$. We propose such an algorithm in the next section.


\section{Algorithms for the online setting}\label{sect:online_algos}

\subsection{Diameter Free Exploration for the online setting}

Our first idea to tackle the online setting is to propose an online variant of the Diameter Free Exploration algorithm from Section~\ref{sect:exploitD}. While the first ingredient of \algo{} is Algorithm~\ref{alg:diameter_estimation}, a diameter estimation procedure based on a generative model, we could use instead the online diameter estimation procedure proposed by \citep{TarbouriechPirotta21b} to get an upper bound $D \leq \widehat{D} \leq 4D$ with probability larger than $1-\delta$ using a slightly worse sample complexity in $\bar{\cO}_{\delta}(D^3SA)$\footnote{the $\bar{\cO}_{\delta}$ notation hides constants and logarithmic terms in $S$,$A$,$D$, $H$ and $1/\delta$ (instead of $\log(1/\delta)$ in $\widetilde{\cO}$) as the work of \cite{TarbouriechPirotta21b} did not try to optimize the dependency in $\delta$, as we did in Section~\ref{sect:exploitD}.}. The second ingredient, Algorithm~\ref{alg:BPI_H}, starts by collecting a fixed number of samples $\overline{n}$ proportional to $({\widehat{D}}/{\varepsilon^2})\log\left({12SA}/{(\delta\varepsilon)}\right)$ from each $(s,a)$, using the generative model. We can replace this step by using the GOSPRL algorithm of \cite{TarbouriechPirotta21b} to collect this amount of samples with an online algorithm, which requires $\bar{\cO}_{\delta}(SAD\overline{n} + D^{3/2}S^2A)$ samples, with high probability. We call the resulting algorithm Online-DFE.

\begin{theo} Online-DFE is $(\varepsilon,\delta)$-PAC and its sample complexity satisfies, with probability larger than $1-\delta$,  $\tau = \bar{\cO}_{\delta}\left(\tfrac{SAD^2}{\varepsilon^2} + S^2AD^3\right)$.  
\end{theo}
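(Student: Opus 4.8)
The plan is to establish the result by decomposing the total number of samples into the two phases of Online-DFE and invoking the sample complexity bounds of the two sub-routines, exactly as was done for the generative-model version in Theorem~\ref{th:exploitD}. First I would handle correctness: the online diameter estimation procedure of \cite{TarbouriechPirotta21b} outputs $\widehat{D}$ with $D \leq \widehat{D} \leq 4D$ with probability at least $1-\delta/2$ (we split the budget, running the estimation with confidence $\delta/2$), and on that event Algorithm~\ref{alg:BPI_H}, fed the valid upper bound $\overline{H} = \widehat{D} \geq D \geq H$, outputs an $\varepsilon$-optimal policy with probability at least $1-\delta/2$ — this last step being unchanged from the generative case since GOSPRL merely emulates the generative sampling of the required counts. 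A union bound gives the $(\varepsilon,\delta)$-PAC guarantee.

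Next I would bound the sample complexity on the intersection of these two good events. The diameter estimation phase costs $\bar{\cO}_{\delta}(D^3 S A)$ samples with high probability. The policy identification phase requires $\overline{n} = \widetilde{\cO}\big((\widehat{D}/\varepsilon^2)\log(SA/(\delta\varepsilon))\big)$ samples from each of the $SA$ pairs; since $\widehat{D} \leq 4D$ on the good event, this target count is $\bar{\cO}_{\delta}(D/\varepsilon^2)$ per pair. Plugging into the GOSPRL guarantee $\bar{\cO}_{\delta}(SAD\,\overline{n} + D^{3/2}S^2A)$ with $\overline{n} = \bar{\cO}_{\delta}(D/\varepsilon^2)$ yields $\bar{\cO}_{\delta}(SAD^2/\varepsilon^2 + D^{3/2}S^2A)$ for the second phase. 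Summing the two phases gives
\[
\tau = \bar{\cO}_{\delta}\!\left(\frac{SAD^2}{\varepsilon^2} + D^3 SA + D^{3/2}S^2A\right) = \bar{\cO}_{\delta}\!\left(\frac{SAD^2}{\varepsilon^2} + S^2AD^3\right),
\]
where the last step absorbs the lower-order terms ($D^3 SA \leq S^2 A D^3$ and $D^{3/2}S^2 A \leq S^2 A D^3$ for $D \geq 1$) into the stated bound.

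The only subtlety I anticipate — and the main thing to get right — is the bookkeeping of which events are conditioned on when invoking GOSPRL to collect a data-dependent number of samples: the target $\overline{n}$ depends on $\widehat{D}$, which is random, so one must be careful that the high-probability sample complexity bound of GOSPRL is applied on the event $\{\widehat{D} \leq 4D\}$ and that the two sources of randomness (diameter estimation and GOSPRL's own internal randomness) are handled by a clean union bound rather than double-counting. Everything else is a direct substitution into results stated in the excerpt and Appendix~\ref{app:detailsalgo}, so I would keep the argument short and refer to Theorem~\ref{th:exploitD} for the structurally identical generative-model computation, pointing out only the two places where the online sub-routines replace their generative counterparts and the resulting change in the exponents of $D$.
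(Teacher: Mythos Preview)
Your proposal is correct and mirrors exactly the argument the paper gives in the paragraph preceding the theorem: combine the $\bar{\cO}_{\delta}(D^3SA)$ cost of the online diameter estimation from \cite{TarbouriechPirotta21b} with the GOSPRL cost $\bar{\cO}_{\delta}(SAD\,\overline{n} + D^{3/2}S^2A)$ for $\overline{n} = \bar{\cO}_{\delta}(D/\varepsilon^2)$, then absorb lower-order terms. The paper does not spell out the union-bound bookkeeping you mention, so your treatment is if anything slightly more careful than the original.
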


Compared to Theorem~\ref{th:exploitD} for the generative model setting, we note a multiplicative $D$ factor in the sample complexity, as well as a less explicit dependency in $\delta$. Still, it provides an online PAC algorithm whose sample complexity scales in $SAD^2/\varepsilon^2$ in the small $\varepsilon$ regime. We remark that the extra factor $D$ compared to the lower bound for the generative setting comes from the use of GOSPRL and the (mostly theoretical) conversion from the generative to the online setting. To get more efficient algorithms, a promising direction is to investigate more adaptive algorithms, directly tailored for the average-reward setting. This is what we do next.

\subsection{Towards more adaptive algorithms}

We recall that an identification algorithm uses three components: a sampling rule, selecting at time $t$ a state-action pair $(s_t,a_t)$, after which the next state $s_{t+1} \sim P(s_t,a_t)$ and the reward $r_t \sim r(s_t,a_t)$ is observed; a stopping rule $\tau$ which decides whether the data collection can be stopped, and a recommendation $\widehat{\pi}$ which is a guess for a near-optimal policy that depends on the first $\tau$ observations. In the online setting, the sampling rule is actually reduced to the choice of $a_t$ given $s_t$, as we have the constraint that $s_{t+1} \sim P(s_t,a_t)$. In this section we introduce a novel stopping rule, that can be used in conjunction with different sampling rule, possibly in the online setting.

\paragraph{The Value Iteration stopping rule} We first introduce some notation to define the stopping rule.
We denote by $N^{n}_{s,a} = \sum_{t=1}^{n} \ind((s_t,a_t)=(s,a))$ the number of visits to the state action pair $(s,a)$ in the first $n$ steps and $\hat{p}_{s,a}^{n}$ the empirical estimate of the transition probability vector $p_{s,a}$ built at step $n$, where $p_{s,a}^n$ can be arbitrarily chosen to be the constant vector equal to $\frac{1}{S}$ if $N_{s,a}^n=0$.

Our stopping rule takes inspiration from the stopping condition of value iteration for average-reward MDPs. It relies on a sequence of bias vectors $b_n \in \R^{S}$  and on the ability to build confidence intervals on the quantity $I_{s,a}(n)= \overline{r}_{s,a}+p_{s,a}b_n-b_n(s)$. This leads us to build an upper bound of the span of $I_{s,a}(n)$, which can be seen as the average-reward equivalent of the Bellman error as defined in \citep{FujimotoMeger22}.
We know that, if $b_n=b^\star$, then $\left(\max_a I_{s,a}(n)\right)_s$ is equal to the constant vector $g^\star$. 
Conversely, by Theorem 8.5.5 of \citep{Puterman94}, if the span of $\left(\max_a I_{s,a}(n)\right)_s$ is small, then the associated policy $\pi(s)=\arg\max_a I_{s,a}(n)$ is approaching the optimal one; this can be used to derive a criterion for correctness of value iteration in average-reward MDPs. Therefore, by examining an upper-bound on the span of $I$, we get the following theorem.

\begin{theo}\label{th:eq855p}
	In a weakly communicating MDP, for a given vector $b_n\in \mathds{R}^S$, if for all pairs $(s,a)$ we have $p_{s,a}b_n \in \left(L_{s,a}^n(b_n;\delta), U_{s,a}^n(b_n;\delta) \right)$, then defining $\hat{\pi}_n(s) = \arg\max_a I^{n,\flat}_{s,a}(b_n;\delta) $, we have \[ \min_s \max_a I^{n,\flat}_{s,a}(b_n;\delta) \leq g_{\hat{\pi}_n} \leq g^\star\leq \max_s \max_a I^{n,\sharp}_{s,a}(b_n;\delta)\] where $I^{n,\flat}_{s,a}(b_n;\delta) := \overline{r}_{s,a} + L_{s,a}^n(b_n;\delta) -b_n(s)$ and $I^{n,\sharp}_{s,a}(b_n;\delta) := \overline{r}_{s,a} + U_{s,a}^n(b_n;\delta) - b_n(s)$.
\end{theo}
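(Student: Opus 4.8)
The plan is to derive the three inequalities from a single sandwich argument organized around the Cesàro-limit matrix $\overline{P_\pi}$. The two facts about $\overline{P_\pi}$ that do all the work are: it is row-stochastic (nonnegative entries, each row summing to $1$), and $\overline{P_\pi}(I-P_\pi)=0$ (equivalently $\overline{P_\pi}P_\pi=\overline{P_\pi}$, which follows directly from the definition $\overline{P_\pi}=\lim_T \tfrac1T\sum_{t=1}^T P_\pi^{t-1}$, see \citep{Puterman94}). Together with the identity $g_\pi=\overline{P_\pi}\,\overline{r}_\pi$ recalled in Section~\ref{sect:setting}, these let one turn a componentwise inequality on $\overline{r}_\pi \pm (I-P_\pi)b_n$ into a bound on $g_\pi$. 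The hypothesis $p_{s,a}b_n\in (L^n_{s,a}(b_n;\delta),U^n_{s,a}(b_n;\delta))$ enters only through the one-sided bounds $I^{n,\flat}_{s,a}(b_n;\delta)\le I_{s,a}(n)\le I^{n,\sharp}_{s,a}(b_n;\delta)$, where $I_{s,a}(n)=\overline{r}_{s,a}+p_{s,a}b_n-b_n(s)$.

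For the lower bound I would let $u\in\R^S$ be defined by $u(s)=\max_a I^{n,\flat}_{s,a}(b_n;\delta)=I^{n,\flat}_{s,\hat\pi_n(s)}(b_n;\delta)$. Since $L^n_{s,\hat\pi_n(s)}(b_n;\delta)\le p_{s,\hat\pi_n(s)}b_n$, we obtain componentwise $u\le \overline{r}_{\hat\pi_n}+P_{\hat\pi_n}b_n-b_n$, i.e. $\overline{r}_{\hat\pi_n}\ge u+(I-P_{\hat\pi_n})b_n$. Left-multiplying by the nonnegative matrix $\overline{P_{\hat\pi_n}}$ preserves the inequality; using $\overline{P_{\hat\pi_n}}\,\overline{r}_{\hat\pi_n}=g_{\hat\pi_n}$ and $\overline{P_{\hat\pi_n}}(I-P_{\hat\pi_n})=0$ gives $g_{\hat\pi_n}\ge \overline{P_{\hat\pi_n}}u$, and row-stochasticity of $\overline{P_{\hat\pi_n}}$ gives $(\overline{P_{\hat\pi_n}}u)(s)\ge\min_{s'}u(s')=\min_s\max_a I^{n,\flat}_{s,a}(b_n;\delta)$ for every $s$, which is the desired (componentwise) lower bound on $g_{\hat\pi_n}$.

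The upper bound on $g^\star$ is the mirror image. Weak communication guarantees a deterministic optimal policy $\pi^\star$ with constant gain $g^\star$; I would set $v(s)=\max_a I^{n,\sharp}_{s,a}(b_n;\delta)$ and evaluate the max at $a=\pi^\star(s)$. Using $U^n_{s,\pi^\star(s)}(b_n;\delta)\ge p_{s,\pi^\star(s)}b_n$ gives componentwise $v\ge \overline{r}_{\pi^\star}-(I-P_{\pi^\star})b_n$; left-multiplying by $\overline{P_{\pi^\star}}$ and using the same two identities yields $\overline{P_{\pi^\star}}v\ge g_{\pi^\star}=g^\star$, hence $\max_{s'}v(s')\ge (\overline{P_{\pi^\star}}v)(s)\ge g^\star$. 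The remaining middle inequality $g_{\hat\pi_n}\le g^\star$ is immediate, since $g^\star$ is by definition the maximal gain over all policies and $\hat\pi_n$ is a policy.

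I do not expect a genuine obstacle; the subtle points are (i) that the lower-bound argument crucially uses that $\hat\pi_n$ is greedy with respect to $I^{n,\flat}$ (the \emph{lower} bound, so that $u\le \overline{r}_{\hat\pi_n}+P_{\hat\pi_n}b_n-b_n$ holds rather than the reverse), and (ii) that one must invoke the correct elementary properties of the Cesàro limit $\overline{P_\pi}$ (row-stochasticity and $\overline{P_\pi}P_\pi=\overline{P_\pi}$). Everything else is finite linear algebra valid for arbitrary finite Markov chains, so weak communication is used only to ensure $g^\star$ is a genuine scalar; one may also note that the argument is vacuously fine when a relevant endpoint $L^n_{s,a}$ or $U^n_{s,a}$ is infinite.
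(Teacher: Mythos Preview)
Your proposal is correct and follows essentially the same approach as the paper's own proof: both exploit the identities $g_\pi=\overline{P_\pi}\,\overline{r}_\pi$ and $\overline{P_\pi}P_\pi=\overline{P_\pi}$ to pass from the componentwise bounds $L^n_{s,a}\le p_{s,a}b_n\le U^n_{s,a}$ to bounds on the gain, using the greedy policy $\hat{\pi}_n$ for the lower bound and a deterministic optimal policy (whose existence the paper justifies via Theorems~8.4.3--8.4.4 of \cite{Puterman94}) for the upper bound. Your write-up is in fact slightly more explicit about the role of row-stochasticity, but the argument is the same.
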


Now to propose a concrete stopping rule, we provide expressions of $U$ and $L$ chosen such that the premise of Theorem~\ref{th:eq855p} is satisfied in every round $n$, with high probability. $\KL(p,q)$ denotes the Kullback-Leibler divergence between the probability vectors $p$ and $q$ on $\cS$.

\begin{defi} \label{def:ul}Given a sequence of $(b_n)_n$, we define 
\begin{eqnarray*}
U_{s,a}^n(b_n;\delta) & =  & \max\left\{ p'b_n \bigg| N_{s,a}^{n}\mathrm{KL} (\hat{p}_{s,a}^n,p') \leq {x(\delta,N_{s,a}^n)} \right\}  \\
L^n_{s,a}(b_n,\delta) &= &\min\left\{ p'b_n \bigg| {N_{s,a}^n}\mathrm{KL}(\hat{p}^n_{s,a},p') \leq x(\delta,N_{s,a}^n) \right\}
\end{eqnarray*}
for the threshold function $x(\delta,y) = \log(SA/\delta) + (S-1)\log\left( e(1+y/(S-1))\right)$ and let 
\begin{equation}\label{eq:stopepsilon}
		\tau_{\varepsilon,\delta} = \min_n \left\{ \max_s \max_a I^{n,\sharp}_{s,a}(b_n,\delta) - \min_s \max_a I^{n,\flat}_{s,a}(b_n;\delta)\leq \varepsilon \right\}\;.
	\end{equation}
\end{defi}

The following result is a consequence of Theorem~\ref{th:eq855p} and a KL-based time uniform concentration result for the transition probabilities from  \cite{AlmarjaniProutiere21} (Lemma~\ref{lem:concentration} in Appendix). Using Pinsker's inequality (see Remark~\ref{rem:computational} in Appendix where we discuss some computational aspects), we can also justify that this theorem hold for $U^n_{s,a}$, $L^{n}_{s,a}$ replaced by 
\begin{equation} \label{eq:relaxedbounds} \widetilde{U}_{s,a}^n({b}_n;\delta) =\hat{p}^n_{s,a} b_n + ||b_n||_\infty \sqrt{\frac{2x(\delta,N^n_{s,a})}{N^n_{s,a}}},  \ \  \widetilde{L}_{s,a}^n(b_n;\delta) =\hat{p}^n_{s,a} b_n - ||b_n||_\infty \sqrt{\frac{2x(\delta,N^n_{s,a})}{N^n_{s,a}}}\;.\end{equation}

\begin{theo}\label{th:stoppingrule}
	For any sampling rule $((s_n,a_n))_n$, and any sequence $(b_n)_n$ of bias vectors, the algorithm using the stopping rule $\tau=\tau_{\varepsilon,\delta}$ defined in \eqref{eq:stopepsilon} and recommending $\widehat{\pi}(s) = \arg\max_a I^{\tau,\flat}_{s,a}(b_{\tau};\delta) $ satisfies 
	$\bP\left(\tau < \infty,  g^\star - g_{\widehat{\pi}} > \varepsilon\right) \leq \delta.$
\end{theo}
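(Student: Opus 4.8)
The plan is to combine the deterministic guarantee of Theorem~\ref{th:eq855p} with a time-uniform concentration event for the empirical transition kernels. First I would define the good event
\[
\mathcal{E} = \left\{ \forall n \geq 1, \forall (s,a) \in \cS\times\cA : N_{s,a}^n \mathrm{KL}(\hat p_{s,a}^n, p_{s,a}) \leq x(\delta, N_{s,a}^n) \right\},
\]
and invoke the KL-based time-uniform concentration inequality of \cite{AlmarjaniProutiere21} (Lemma~\ref{lem:concentration} in the appendix) to show $\bP(\mathcal{E}^c) \leq \delta$. The threshold $x(\delta,y) = \log(SA/\delta) + (S-1)\log(e(1+y/(S-1)))$ is exactly the one appearing in that lemma after a union bound over the $SA$ pairs, so this step is essentially a citation plus the observation that on $N_{s,a}^n = 0$ the inequality holds trivially (both sides are $0$ on the left, and $x(\delta,0) > 0$).

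Next I would argue that on $\mathcal{E}$, for every round $n$ and every pair $(s,a)$, the true value $p_{s,a} b_n$ lies in the interval $\left[L_{s,a}^n(b_n;\delta), U_{s,a}^n(b_n;\delta)\right]$: indeed $p_{s,a}$ is feasible for the optimization problems defining $U$ and $L$ in Definition~\ref{def:ul}, since $N_{s,a}^n \mathrm{KL}(\hat p_{s,a}^n, p_{s,a}) \leq x(\delta, N_{s,a}^n)$ holds on $\mathcal{E}$. (One should note the intervals are closed whereas Theorem~\ref{th:eq855p} assumes a strict/open inclusion; this is a harmless inflation of $x$ by an arbitrarily small constant, or one simply states Theorem~\ref{th:eq855p} with closed intervals — I would add a sentence handling this.) Hence the premise of Theorem~\ref{th:eq855p} is satisfied for every $n$ on $\mathcal{E}$, which yields
\[
\min_s \max_a I_{s,a}^{n,\flat}(b_n;\delta) \leq g_{\hat\pi_n} \leq g^\star \leq \max_s \max_a I_{s,a}^{n,\sharp}(b_n;\delta)
\]
simultaneously for all $n$, where $\hat\pi_n(s) = \arg\max_a I_{s,a}^{n,\flat}(b_n;\delta)$.

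Finally I would chain these inequalities at the stopping time. On the event $\{\tau < \infty\} \cap \mathcal{E}$, by definition of $\tau = \tau_{\varepsilon,\delta}$ in \eqref{eq:stopepsilon} we have $\max_s\max_a I_{s,a}^{\tau,\sharp}(b_\tau;\delta) - \min_s\max_a I_{s,a}^{\tau,\flat}(b_\tau;\delta) \leq \varepsilon$, and the recommended policy is exactly $\widehat\pi = \hat\pi_\tau$. Combining with the displayed bracketing at $n = \tau$,
\[
g^\star - g_{\widehat\pi} \leq \max_s\max_a I_{s,a}^{\tau,\sharp}(b_\tau;\delta) - \min_s\max_a I_{s,a}^{\tau,\flat}(b_\tau;\delta) \leq \varepsilon.
\]
Therefore $\{\tau < \infty,\ g^\star - g_{\widehat\pi} > \varepsilon\} \subseteq \mathcal{E}^c$, and $\bP(\tau<\infty,\ g^\star - g_{\widehat\pi} > \varepsilon) \leq \bP(\mathcal{E}^c) \leq \delta$. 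The only genuinely delicate point is the first step — checking that the concentration lemma of \cite{AlmarjaniProutiere21} applies uniformly in $n$ with the stated threshold and survives the union bound over $(s,a)$ and the $N_{s,a}^n=0$ edge case; everything after that is a deterministic implication of Theorem~\ref{th:eq855p} applied pathwise on the good event. For the relaxed bounds $\widetilde U, \widetilde L$ of \eqref{eq:relaxedbounds}, the same argument goes through since Pinsker's inequality gives $\mathrm{KL}(\hat p_{s,a}^n, p_{s,a}) \geq \tfrac{1}{2}\|\hat p_{s,a}^n - p_{s,a}\|_1^2 \geq \tfrac{1}{2}(\hat p_{s,a}^n b_n - p_{s,a} b_n)^2/\|b_n\|_\infty^2$, so $p_{s,a}$ remains feasible for the relaxed constraints on $\mathcal{E}$, and the containment $p_{s,a}b_n \in [\widetilde L_{s,a}^n, \widetilde U_{s,a}^n]$ still holds.
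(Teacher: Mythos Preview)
Your proposal is correct and follows essentially the same approach as the paper: both invoke Theorem~\ref{th:eq855p} to obtain the deterministic bracketing $\min_s\max_a I^{n,\flat}\leq g_{\hat\pi_n}\leq g^\star\leq \max_s\max_a I^{n,\sharp}$ whenever the KL confidence sets contain the true $p_{s,a}$, and then use Lemma~\ref{lem:concentration} to bound the probability that this premise fails at some $n$. The paper phrases it as a chain of probability inequalities while you phrase it via an explicit good event $\mathcal{E}$ and the containment $\{\tau<\infty,\ g^\star-g_{\widehat\pi}>\varepsilon\}\subseteq\mathcal{E}^c$, but the content is identical; your extra remarks on the $N_{s,a}^n=0$ edge case, the open/closed interval mismatch, and the Pinsker argument for $\widetilde U,\widetilde L$ are minor clarifications the paper leaves implicit.
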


Theorem~\ref{th:stoppingrule} shows that for any sampling rule, be it in the generative model or in the online setting, the stopping rule \eqref{eq:stopepsilon} and associated recommendation rule yields an $(\varepsilon,\delta)$-PAC algorithm, for any choice of bias vector sequence $b_n$. Of course, bad choices of sampling rules and bias vectors could still yield $\tau = \infty$ almost surely (e.g. picking always $(s_t,a_t)=(s_1,a_1)$ when a generative model is available).

\paragraph{Choosing a sampling rule} As a sanity-check we analyze in Appendix~\ref{app:sampling} an algorithm which combines the simplest possible sampling rule, uniform sampling from a generative model, with the VI stopping rule \eqref{eq:stopepsilon} where the sequence of biais vector is $b_n=\hat{b}_n$ where $\hat{b}_n$ is the optimal bias vector in the AR-MDP with rewards  $\overline{r}_{s,a}$ and transition probabilities given by $(\hat{p}^{n}_{s,a})_{s,a}$ (see Algorithm~\ref{alg:stoppingrule}). We prove in Theorem~\ref{th:sc_stopping} that in unichain MDPs, for sufficiently small $\varepsilon$
the sample complexity of this algorithm is bounded with probability larger than $1-\delta$ by $\widetilde{\cO}\left( \frac{SA(H\vee \Gamma_\cM)^2}{\varepsilon^2}\left( \log \frac{1}{\delta} +S\right)\right)$, where $\Gamma_\cM$ is some constant depending on the MDP. We refer the reader to Appendix~\ref{app:sampling} for the definition of this constant, that has a complex expression and should be in most case larger than $H$. 

We remark that we could also use GOSPRL to turn Algorithm~\ref{alg:stoppingrule} into an online algorithm (using phases of increasing length in which we collect a uniform number of samples using GOSPRL and check our stopping rule), with a sample complexity essentially multiplied by $D$. However the resulting sample complexity is likely to be much larger than $SAD^2/\varepsilon^2$ in the small $\varepsilon$ regime, making this algorithm not very interesting. We believe that to get efficient algorithms for the online setting it is important to depart from this uniform sampling + GOSPRL approach. Our stopping rule actually suggests some clever online choices that could be used to make the algorithm stop earlier. For example it could be interesting to make a greedy choice of the bias vector $b_n$ that minimizes over $b\in \R^S$ the quantity $\max_s \max_a I^{n,\sharp}_{s,a}(b;\delta) - \min_s \max_a I^{n,\flat}_{s,a}(b;\delta)$. Assuming we could compute this vector, a possible online sampling rule could be optimistic choice $a_n = \argmax_{a}[\overline{r}_{s,a} + U^{n}_{s_n,a}(b_n,\delta)]$. We leave the analysis of such algorithms for future work.

\section{Conclusion and perspective}

We provided several elements indicating that the optimal bias span $H$ may not be an adequate complexity measure for $\varepsilon$-optimal policy identification in an average-reward MDP. In the generative model setting, as all existing algorithms with sample complexity featuring $H$ require prior knowledge of this quantity, we investigated the question of estimating $H$ and proved that the sample complexity of this estimation task can be arbitrarily large. Then, in the online setting, we gave a lower bound on the sample complexity indicating that no algorithm can get a $SAH/\varepsilon^2$ sample complexity on every MDP, with or without the knowledge of $H$. On the algorithmic side, we proposed the first best policy identification algorithms for the generative model setting that does not require any form of prior knowledge on the MDP. By estimating the diameter $D$ instead of $H$, \algo{} attains a near-optimal sample complexity in $SAD/\varepsilon^2$ for communicating MDPs. We further proposed an online variant of DFE with a slightly larger $SAD^2/\varepsilon^2$ sample complexity, which is a factor $D$ away from the worse case lower bound established in the generative model setting. We leave as an open question whether there exists online algorithms with a $SAD/\varepsilon^2$ sample complexity. In future work, we will investigate whether the novel adaptive stopping rule proposed in our work can lead to this reduced sample complexity, when combined with a suitable online sampling rule.

\begin{ack}
	The authors acknowledge the funding of the French National Research Agency under the project FATE (ANR22-CE23-0016-01) and the PEPR IA FOUNDRY project (ANR-23-PEIA-0003). 
	The authors are members of the Inria team Scool.
\end{ack}
\bibliographystyle{plain}
\bibliography{neurips_2024}


\appendix


\newpage

\section{Proof of Theorem \ref{th:Hhardtoestimate}}\label{app:H}

We only prove here the result for weakly communicating MDPs, and provide the ergodic MDP necessary to prove the general theorem.
	
	Fix $\delta$, $T$ and $\Delta$. Define $d=e^4$ and $d'=128$. Let $\varepsilon = \min\left\{ \frac{1}{10},\frac{\Delta}{2},\frac{1}{2\sqrt{\frac{32Td'}{-\ln 2d\delta}+1}}\right\}$, $p=\frac{\frac{1}{2}+\varepsilon}{\Delta-\varepsilon}$. To ease the notation, we denote by $\mathcal{M}_\varepsilon$ and $\mathcal{M}_\varepsilon'$ respectively the MDPs $\mathcal{M}_{\frac{1}{2}-\varepsilon}$ and $\mathcal{M}_{\frac{1}{2}+\varepsilon}$ in Figure \ref{fig:mes}. The rewards distribution in these MDPs are assumed to be Bernoulli variables with the means indicated in the figure.
	
	The optimal policy in $\mathcal{M}_\varepsilon$ always chooses the actions represented by a full line. 
	Its gain is $\frac{1}{2}$, its bias vector (up to an additive constant) is $(0,-\frac{1}{2},-\frac{1}{2})$. 
	In $\mathcal{M}_\varepsilon'$, the optimal policy always chooses the dashed actions and its optimal gain is $\frac{1}{2}+\varepsilon$. Using the Poisson equations, we can show that its associated bias vector (up to an additive constant) is $\left( -\left(\frac{1}{2}+\varepsilon\right) \frac{1+p}{p},0,-\left(\frac{1}{2}+\varepsilon\right) \frac{1}{p} \right)$. 
	With $H_\varepsilon$ the span of the optimal bias for $\mathcal{M}_\varepsilon$ and $H_\varepsilon'$ that of $\mathcal{M}_\varepsilon'$, we have: \begin{align*}
		H_\varepsilon'-H_\varepsilon &= \frac{1}{p}\left( \left(\frac{1}{2}+\varepsilon\right)(1+p)-\frac{p}{2}\right) = \frac{1}{p} \left( \frac{1}{2}+\varepsilon\right)+\varepsilon \\
		&=\frac{\Delta-\varepsilon}{\frac{1}{2}+\varepsilon} \left( \frac{1}{2}+\varepsilon\right) + \varepsilon\\
		&=\Delta
	\end{align*}
	
	Consider now an algorithm that outputs $\hat{H}$ a $\Delta$-tight upper bound for the optimal bias span with probability greater than $1-\delta$ on any MDP. We denote by $P$ and $P'$ the probability with regards to $\mathcal{M}_\varepsilon$ and $\mathcal{M}_\varepsilon'$ respectively, and $\mathds{E}$ and $\mathds{E}'$ the associated expectation. 
	
	We denote by $\tau$ the total number of samples used before stopping, by $\hat{T}$ the number of samples of the dashed action taken in state $2$ before stopping and by $K_t$ the number of times the agent gets a reward of $1$ among the first $t$ visits of state $2$. We introduce the three events
	\begin{eqnarray*}
	 \cE_1 & = & \{\hat{H} < H_\varepsilon'\} \\
	 \cE_2 & = & \left\{ \max_{1\leq t\leq t^\star} |\left( \frac{1}{2} - \varepsilon\right)t-K_t|\leq z\right\} \\
	 \cE_3 & = &  \left\{ \hat{T}\leq t^\star\right\},
	\end{eqnarray*}
	where we define
$z=\sqrt{2\left(\frac{1}{2}-\varepsilon\right)\left(\frac{1}{2}+\varepsilon\right) t^\star\ln\frac{d}{\theta}}$, with $\theta = \exp\left(\frac{-4d'\varepsilon^2t^\star}{\left(\frac{1}{2}-\varepsilon\right)\left(\frac{1}{2}+\varepsilon\right)}\right)$ and let $t^\star=\frac{\left(\frac{1}{2}-\varepsilon\right)\left(\frac{1}{2}+\varepsilon\right)}{4d'\varepsilon^2}\ln\frac{1}{2d\delta}$. We let $\cE = \cE_1\cap\cE_2\cap\cE_3$. 

	Let $W$ be the interaction history of the learner and the generative model, $L(w)=P(W=w)$ and $L'(w)=P'(W=w)$. 
	For $K=K_{\hat{T}}$, \[\frac{L'(W)}{L(W)}\mathds{1}_\mathcal{E} = \frac{(1/2+\varepsilon)^K(1/2-\varepsilon)^{\hat{T}-K}}{(1/2-\varepsilon)^K(1/2+\varepsilon)^{\hat{T}-K}}\ind_{\cE}\geq \frac{\theta}{d}\mathds{1}_\mathcal{E}\] by the arguments within the proof for Lemma 13 of \citep{ChenTirinzoni23}. 
	Using a change of measure, we can write 
	\begin{equation}P'(\mathcal{E}_1)\geq P'(\mathcal{E})=\mathds{E}'[\mathds{1}_\mathcal{E}(W)] = \mathds{E}\left[ \frac{L'(W)}{L(W)}\mathds{1}_\mathcal{E}(W)\right] \geq \frac{\theta}{d}P(\mathcal{E}) = 2\delta P(\mathcal{E}).\label{eq:useful}\end{equation}
	Moreover, in $\mathcal{M}_\varepsilon$, $K_t-\left(1/2-\varepsilon\right)t$ is the sum of $t$ Bernoulli variables of expectation $(1/2-\varepsilon)$, and thus of variance $(1/2-\varepsilon)(1/2+\varepsilon)$. Kolmogorov's inequality states that \[P(\overline{\mathcal{E}_2}) = P\left(\max_{1\leq t\leq t^\star} \left| K_t-\left(\frac{1}{2}-\varepsilon\right)t\right| > \varepsilon\right)\leq \frac{t^\star \left(\frac{1}{2}-\varepsilon\right)\left(\frac{1}{2}+\varepsilon\right)}{\varepsilon^2}\] which entails $P(\mathcal{E}_2) \geq 7/8$. 
	The detailed computation is given in the proof for Lemma 12 of \citep{ChenTirinzoni23}. 
	
	Now, we assume towards contradiction that $P(\mathcal{E}_3)\geq 7/8$. Using the above, this yields 
	\begin{eqnarray*}
	P\left(\overline{\cE}\right) &\leq& P(\overline{\cE_1}) + P(\overline{\cE_2}) + P(\overline{\cE_3}) \\
	& \leq & \delta + \frac{2}{8} \\
	& < & \frac{1}{2},
	\end{eqnarray*}
	where the first inequality uses that $P(\cE_1) \geq 1 - \delta$ by the correctness of the algorithm and the second inequality uses that $\delta\leq\frac{1}{2e^4}$. Hence, we have $P(\cE)> \frac{1}{2}$ which leads to $P'(\cE_1) > \delta$ using~\eqref{eq:useful}. This contradicts the correctness of the algorithm. Therefore, we have $P(\cE_3) < \frac{7}{8}$. 
	
	It follows that with probability larger than $\frac{1}{8}$, $\hat{T} \geq t^\star$ on $\mathcal{M}_\varepsilon$, hence 
	\[\bE[\tau] \geq \bE[\hat{T}] \geq \frac{t^\star}{8} \geq T,\]
	which concludes the proof.

	\medskip
	
	More generally, adding the transitions described in Figure \ref{fig:me-erg} with $\tau' = \frac{\tau p}{1+p}$ can be used to show that computing a $\frac{\Delta}{1+\tau}$-tight upper bound for the optimal bias span with high probability would need in expectation more than $T$ samples.
	
	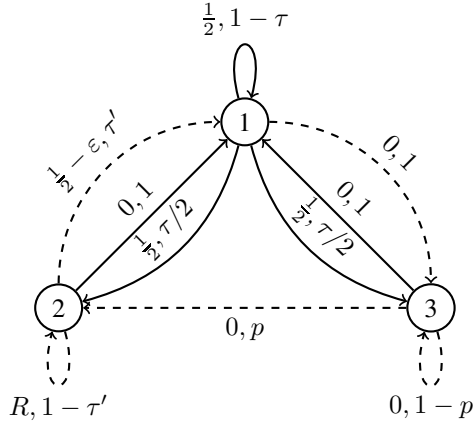
\begin{figure}[!ht]
		\centering
		\begin{tikzpicture}[node distance={35mm},thick,main/.style = {draw,circle}]
			\node[main] (1) {1};
			\node[main] (2) [below left of=1] {2};
			\node[main] (3) [below right of=1] {3};
			\draw[->] (1) to [loop above,looseness=15]node{$\frac{1}{2},1-\tau$} (1);
			\draw[->] (1) to [out=-75,in=165]node[midway,above,sloped]{$\frac{1}{2},\tau/2$} (3);
			\draw[->] (1) to [out=-105,in=15]node[midway,above,sloped]{$\frac{1}{2},\tau/2$} (2);
			\draw[->,dashed] (2) to [loop below,looseness=15]node{$R,1-\tau'$} (2);
			\draw[->,dashed] (3) to [loop below,looseness=15]node{$0,1-p$} (3);
			\draw[->,dashed] (3) to node[midway,below]{$0,p$} (2);
			\draw[dashed,->] (1) to [out=0,in=90]node[midway, above, sloped]{$0,1$} (3);
			\draw[->] (2) to node[midway, above,sloped]{$0,1$} (1);
			\draw[->] (3) to node[midway, above,sloped]{$0,1$} (1);
			\draw[->,dashed] (2) to [out=90,in=180]node[midway,above,sloped]{$\frac{1}{2}-\varepsilon,\tau'$} (1);
		\end{tikzpicture}
		\caption{$\widetilde{\mathcal{M}}_R$ in the ergodic case}
		\label{fig:me-erg}
	\end{figure}


\section{Complements for Section \ref{sect:exploitD}}\label{app:detailsalgo}
	
In this Appendix, we provide some details on the two components of our near-optimal algorithm: a procedure to provide a high-probability upper bound on the diameter (Algorithm~\ref{alg:diameter_estimation}) and a near-optimal algorithm for best policy identification in an average reward MDP that requires an upper bound on $H$ as an input (Algorithm~\ref{alg:BPI_H}). Finally, we give the proof of Theorem~\ref{th:exploitD}.

\subsection{Diameter Estimation Procedure}

The procedure proposed by \cite{TarbouriechPirotta21b} for estimating the diameter hinges on the observation that the diameter can expressed in terms of optimal values in a goal-oriented Markov Decision Decision (or SSP-MDP for Stochastic Shortest Path). We recall that in a SSP-MDP the transition kernel is coupled with a goal state $s_g \in \cS$ and a cost function $c : (s,a) \mapsto c(s,a)$. The value of a (deterministic) policy $\pi$, that is to be minimized, is denoted by 
\[V^{\pi}_{c,s_g}(s) = \bE^{\pi}\left[\left. \sum_{t=1}^{\tau_{\pi}^{(s_g)}(s)} c(s_t,\pi(s_t)) \right| s_1 = s\right]\]
where $\tau_\pi^{(s_g)}(s) = \inf\{ t : s_{t+1} = s_g | s_1 = s, \pi\}$ is the number of steps before reaching the goal when starting from state $s$ and following policy $\pi$. The diameter can thus be written 
\begin{eqnarray*}
D & = & \max_{s} \max_{s'\neq s} \min_{\pi} \bE[\tau_{\pi}^{(s)}(s') ] = \max_{s} \max_{s'\neq s} \min_{\pi} V^{\pi}_{\bm{1},s}(s') \\
& = & \max_{s} \max_{s'\neq s} V^\star_{\bm{1},s}(s')
\end{eqnarray*}
where $V^{\star}_{c,s_g}$ denotes the optimal (i.e. minimal) value function in the SSP-MDP with cost $c$ and goal state $s_g$ and $\bm{1}$ is the cost function that is constant and equal to 1. 

Prior works on SSP-MDPs \citep{Tarbouriech20Regret,TarbouriechPirotta21} have proposed and analyzed an Extended Value Iteration scheme for SSP, whose goal is to obtain confidence bounds on $V^{\star}_{c,s_g}$, that we now recall for completeness. Given a cost function $c$ and a goal state $s_g$, EVI-SSP$_{c,s_g}$ takes as input a set of plausible transitions $\cP = (\cP(s,a))_{s,a}$ such that $\cP(s,a)$ are included in the set of probability distribution over $\cS$ (and the unknown vectors $p_{s,a}$ is believed to belong to $\cP(s,a)$), and a precision $\mu_{\text{VI}}$. 
It introduces the extended optimal Bellman operator, defined for $v \in \R^{S}$ as
\[\widetilde{L}v(s) = \min_{a \in \cA} \left[c(s,a) + \min_{\widetilde{p} \in \cP(s,a)}\widetilde{p}{v}\right]\]
for all $s\neq s_g$ and $\widetilde{L}v(g)=0$. EVI-SSP$_{c,s_g}$ sets $v_0 = \bm{0} \in \R^{S}$ and defines $v_{j+1} =\widetilde{L} v_j$ for $j \geq 0$. It stops at iteration $\widetilde{\jmath} = \min \left\{ j : \|v_{j+1} - v_j\| \leq \mu_{\text{VI}}\right\}$ and outputs $\widetilde{v} = v_{\widetilde{\jmath}}$. It also outputs an optimistic transition kernel $\widetilde{p} =(\widetilde{p}_{s,a})_{s,a}$, defined for all $s,a$ as 
\[\widetilde{p}_{s,a} \in \underset{\widetilde{p} \in \cP(s,a)}{\text{argmin}} \ \widetilde{p}{v}\]
and the optimistic greedy policy 
\[\widetilde{\pi}(s) = \underset{a\in \cA}{\text{argmin}} \ \left[c(s,a) + \widetilde{p}{v}\right]\;.\]
We denote by $\widetilde{V}^{\pi}_{c,s_g}(s)$ the value function of policy $\pi$ in the SSP with cost function $c$, goal state $s_g$ and transition kernel $\widetilde{p}$. Previous work (see e.g. Lemma 4 of \cite{TarbouriechPirotta21}) have established the following properties. 

\begin{lem}\label{lem:EVI_SSP} Assume that, for all $(s,a)$, $p_{s,a} \in \cP(s,a)$. Letting $(\widetilde{v},\widetilde{p},\widetilde{\pi}) = \emph{EVI-SSP}_{c,s_g}(\cP,\mu_{\text{VI}})$, the following inequalities hold (component-wise):
\begin{itemize}
 \item[(i)] $\widetilde{v} \leq V^\star_{c,s_g}$ and $\widetilde{v} \leq \widetilde{V}^{\star}_{c,s_g} \leq \widetilde{V}^{\widetilde{\pi}}_{c,s_g}$
 \item[(ii)] if $\mu_{\text{VI}} \leq \frac{c_{\min}}{2}$ then $\widetilde{V}^{\widetilde{\pi}}_{c,s_g} \leq \left(1+\frac{2\mu_{\text{VI}}}{c_{\min}}\right)\widetilde{v}$.
\end{itemize}
\end{lem}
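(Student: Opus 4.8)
The plan is to treat EVI-SSP as a monotone value iteration converging to a fixed point of the extended Bellman operator $\widetilde{L}$, and then to read off the three inequalities from the definitions of $\widetilde{v}$, $\widetilde{p}$ and $\widetilde{\pi}$, using repeatedly the hypothesis $p_{s,a}\in\cP(s,a)$ together with the standard regularity of SSP instances whose costs are bounded below by $c_{\min}>0$. Throughout I would work componentwise over the non-goal states (at $s_g$ everything is $0$ by convention) and write $\widetilde{L}'$ for the ordinary optimal Bellman operator of the SSP whose transition kernel is the fixed optimistic kernel $\widetilde{p}$.

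First I would establish monotone convergence. The operator $\widetilde{L}$ is monotone, being a pointwise minimum of maps $v\mapsto c(s,a)+\widetilde{p}\,v$ with $\widetilde{p}\ge 0$; since $c\ge 0$ we have $v_1=\widetilde{L}\bm{0}\ge\bm{0}=v_0$, so $(v_j)_j$ is nondecreasing by induction. It is also bounded above by $V^\star_{c,s_g}$: because $p_{s,a}\in\cP(s,a)$, $\min_{\widetilde{p}\in\cP(s,a)}\widetilde{p}\,V^\star_{c,s_g}\le p_{s,a}V^\star_{c,s_g}$, hence $\widetilde{L}V^\star_{c,s_g}\le V^\star_{c,s_g}$, and with $v_0=\bm{0}\le V^\star_{c,s_g}$ and monotonicity one gets $v_j\le V^\star_{c,s_g}$ for all $j$. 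A bounded nondecreasing sequence converges, so $v_j\uparrow v_\infty$ with $\widetilde{L}v_\infty=v_\infty$ and $v_\infty\le V^\star_{c,s_g}$; in particular $\|v_{j+1}-v_j\|\to 0$, so $\widetilde{\jmath}<\infty$ and $\widetilde{v}=v_{\widetilde{\jmath}}\le v_\infty\le V^\star_{c,s_g}$, which is the first inequality of (i).

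For the remaining inequalities of (i): since $\widetilde{p}_{s,a}\in\cP(s,a)$, for every $a$ we have $\widetilde{L}v_\infty(s)\le c(s,a)+\widetilde{p}_{s,a}v_\infty$, hence $v_\infty=\widetilde{L}v_\infty\le\widetilde{L}'v_\infty$; iterating the monotone operator $\widetilde{L}'$ (whose iterates converge to $\widetilde{V}^\star_{c,s_g}$ under the SSP regularity discussed below) gives $v_\infty\le\widetilde{V}^\star_{c,s_g}$, and $\widetilde{V}^\star_{c,s_g}\le\widetilde{V}^{\widetilde{\pi}}_{c,s_g}$ holds trivially; chaining $\widetilde{v}\le v_\infty$ yields (i). For (ii) I would use the stopping test: $\widetilde{v}=v_{\widetilde{\jmath}}\ge v_{\widetilde{\jmath}+1}-\mu_{\text{VI}}\bm{1}=\widetilde{L}\widetilde{v}-\mu_{\text{VI}}\bm{1}$, and by definition of $\widetilde{p}$ and $\widetilde{\pi}$ one has $\widetilde{L}\widetilde{v}(s)=c(s,\widetilde{\pi}(s))+\widetilde{p}_{s,\widetilde{\pi}(s)}\widetilde{v}$, so $\widetilde{v}\ge c_{\widetilde{\pi}}+\widetilde{P}_{\widetilde{\pi}}\widetilde{v}-\mu_{\text{VI}}\bm{1}$, where $c_{\widetilde{\pi}}$ and $\widetilde{P}_{\widetilde{\pi}}$ are the cost vector and the (substochastic) transition matrix of $\widetilde{\pi}$ in the optimistic MDP. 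Unrolling this inequality $k$ times and letting $k\to\infty$, using that $\widetilde{\pi}$ is proper in the optimistic MDP (so $\widetilde{P}_{\widetilde{\pi}}^k\widetilde{v}\to 0$ and the Neumann series $\sum_{i\ge 0}\widetilde{P}_{\widetilde{\pi}}^i$ converges), gives $\widetilde{v}\ge\widetilde{V}^{\widetilde{\pi}}_{c,s_g}-\mu_{\text{VI}}\,\widetilde{V}^{\widetilde{\pi}}_{\bm{1},s_g}$. Since $c_{\widetilde{\pi}}\ge c_{\min}\bm{1}$ we have $\widetilde{V}^{\widetilde{\pi}}_{\bm{1},s_g}\le\widetilde{V}^{\widetilde{\pi}}_{c,s_g}/c_{\min}$, hence $\widetilde{v}\ge(1-\mu_{\text{VI}}/c_{\min})\widetilde{V}^{\widetilde{\pi}}_{c,s_g}$, and the elementary bound $1/(1-x)\le 1+2x$ valid for $x\in[0,\tfrac12]$, applied with $x=\mu_{\text{VI}}/c_{\min}$, gives exactly $\widetilde{V}^{\widetilde{\pi}}_{c,s_g}\le(1+2\mu_{\text{VI}}/c_{\min})\widetilde{v}$.

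The main obstacle is not any of these computations but the SSP ``regularity'' facts they rest on: that EVI-SSP converges (finiteness of $\widetilde{\jmath}$), that value iteration for the fixed kernel $\widetilde{p}$ converges to $\widetilde{V}^\star_{c,s_g}$, and --- most delicately --- that the greedy policy $\widetilde{\pi}$ is proper in the optimistic MDP, so that the telescoping tail $\widetilde{P}_{\widetilde{\pi}}^k\widetilde{v}$ vanishes and $\widetilde{V}^{\widetilde{\pi}}_{c,s_g}$ is finite. These hold because $c_{\min}>0$ and because the plausible set $\cP$ is constructed so that every $\cP(s,a)$ contains a distribution under which $s_g$ stays reachable, making the extended MDP a genuine SSP admitting a proper policy. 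I would isolate these as a preliminary lemma, with a pointer to \cite{Tarbouriech20Regret} (and the SSP analysis of Bertsekas--Tsitsiklis) rather than re-deriving them, after which the monotonicity-and-telescoping arguments above close the proof.
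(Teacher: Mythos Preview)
Your proof sketch is correct and follows the standard route for this type of result. Note, however, that the paper does not give its own proof of this lemma: it is stated as a recollection of ``Lemma~4 of \cite{TarbouriechPirotta21}'' and no argument is supplied. So there is no paper proof to compare against; what you wrote is essentially the argument one finds in that reference (monotone iteration from $\bm{0}$ bounded above by $V^\star_{c,s_g}$ via the hypothesis $p_{s,a}\in\cP(s,a)$, then telescoping the one-step Bellman inequality for $\widetilde{\pi}$ under the optimistic kernel and bounding the hitting-time value by $\widetilde{V}^{\widetilde{\pi}}_{c,s_g}/c_{\min}$).

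Two minor remarks. First, for $\widetilde{v}\le\widetilde{V}^\star_{c,s_g}$ you can avoid passing through $v_\infty$ and the continuity of $\widetilde{L}$: since $\widetilde{L}w\le\widetilde{L}'w$ for every $w\ge 0$ (the inner min over $\cP(s,a)$ is at most the value at the fixed $\widetilde{p}_{s,a}$), one gets directly $v_j=\widetilde{L}^j\bm{0}\le(\widetilde{L}')^j\bm{0}\uparrow\widetilde{V}^\star_{c,s_g}$. Second, you rightly flag properness of $\widetilde{\pi}$ in the optimistic model as the delicate point; this is exactly where the cited references do the work, and your plan to isolate it as a preliminary lemma with a pointer to \cite{Tarbouriech20Regret} is appropriate.
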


Notably, this result gives an upper bound on the optimal value in the optimistic SSP-MDP. With the additional help of a simulation lemma, proved by \cite{TarbouriechPirotta21b}, we can further relate it to the value in the true SSP-MDP.

\begin{lem}[Lemma 2 in \cite{TarbouriechPirotta21}]\footnote{Compared to the original statement, we fixed a small typo in the condition \eqref{eq:cdt_simulation}, and also propagated the changes accordingly in the definition of Algorithm~\ref{alg:diameter_estimation} and its analysis.}\label{lem:simulation_SSP} Let $p$ and $\widetilde{p}$ be two transition kernels such that for all $(s,a)$, $\|p_{s,a} - \widetilde{p}_{s,a}\|_1 \leq \eta$. Assume that $c_{\min}>0$ and that under both $p$ and $\widetilde{p}$ there exists at least one policy that reaches $s_g$ almost surely from any state (such a policy is called proper). Let $\pi$ be a proper policy in $\widetilde{p}$ such that 
\begin{equation}2\eta \|\widetilde{V}^{\pi}_{c,s_g}\|_{\infty} \leq c_{\min}.\label{eq:cdt_simulation}\end{equation} 
Then $\pi$ is proper in $p$ and 
\[\forall s\in \cS, V^{\pi}_{c,s_g}(s) \leq \left(1+ \frac{2\eta \|\widetilde{V}^{\pi}_{c,s_g}\|_{\infty}}{c_{\min}}\right)\widetilde{V}^{\pi}_{c,s_g}(s)\]
\end{lem}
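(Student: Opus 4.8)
The plan is to exhibit the rescaled vector $w := \left(1+\alpha\right)\widetilde{V}^{\pi}_{c,s_g}$, where $\alpha := \tfrac{2\eta\,\|\widetilde{V}^{\pi}_{c,s_g}\|_{\infty}}{c_{\min}}$, as a \emph{super-solution} of the Bellman fixed-point equation for the policy $\pi$ under the true kernel $p$, and then to invoke the standard principle that a nonnegative super-solution vanishing at $s_g$ certifies simultaneously that $\pi$ is proper under $p$ and that $V^{\pi}_{c,s_g}\leq w$. Throughout write $\widetilde{V} := \widetilde{V}^{\pi}_{c,s_g}$; since $\pi$ is assumed proper under $\widetilde{p}$, $\widetilde{V}$ is finite, $\widetilde{V}\geq\bm{0}$, $\widetilde{V}(s_g)=0$, and the hypothesis $2\eta\|\widetilde{V}\|_{\infty}\leq c_{\min}$ is exactly $\alpha\leq 1$.

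First I would verify the super-solution property. As $\pi$ is proper under $\widetilde{p}$, $\widetilde{V}(s)=c(s,\pi(s))+\widetilde{p}_{s,\pi(s)}\widetilde{V}$ for all $s\neq s_g$. Since $p_{s,\pi(s)}$ and $\widetilde{p}_{s,\pi(s)}$ are probability vectors with $\ell_1$-distance at most $\eta$, Hölder's inequality gives $p_{s,\pi(s)}\widetilde{V}\leq\widetilde{p}_{s,\pi(s)}\widetilde{V}+\eta\|\widetilde{V}\|_{\infty}$ (a centering argument would sharpen $\|\widetilde{V}\|_{\infty}$ to $\tfrac12\mathrm{sp}(\widetilde{V})$, but this is not needed). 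Hence, for $s\neq s_g$,
\begin{align*}
c(s,\pi(s))+p_{s,\pi(s)}w
&\leq c(s,\pi(s))+(1+\alpha)\widetilde{p}_{s,\pi(s)}\widetilde{V}+(1+\alpha)\eta\|\widetilde{V}\|_{\infty}\\
&= \widetilde{V}(s)+\alpha\,\widetilde{p}_{s,\pi(s)}\widetilde{V}+(1+\alpha)\eta\|\widetilde{V}\|_{\infty}.
\end{align*}
Using $\widetilde{p}_{s,\pi(s)}\widetilde{V}=\widetilde{V}(s)-c(s,\pi(s))\leq\widetilde{V}(s)-c_{\min}$, and then $\alpha c_{\min}=2\eta\|\widetilde{V}\|_{\infty}\geq(1+\alpha)\eta\|\widetilde{V}\|_{\infty}$ (valid since $\alpha\leq1$), the right-hand side is at most $(1+\alpha)\widetilde{V}(s)=w(s)$. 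Together with $w(s_g)=0$, this shows $w(s)\geq c(s,\pi(s))+p_{s,\pi(s)}w$ for all $s$.

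Next I would turn the super-solution into the conclusion by iterating the policy-$\pi$ Bellman operator under $p$ from $V_0:=\bm{0}$, i.e.\ $V_{k+1}(s):=c(s,\pi(s))+p_{s,\pi(s)}V_k$ for $s\neq s_g$ and $V_{k+1}(s_g):=0$; here $V_k(s)$ is the expected $\pi$-cost accumulated from $s$ before reaching $s_g$, truncated to the first $k$ steps, and is nondecreasing in $k$. An immediate induction using the super-solution inequality gives $V_k\leq w$ for every $k$. By monotone convergence, $\bE^{\pi}\!\left[\sum_{t=1}^{\tau_\pi^{(s_g)}(s)} c(s_t,\pi(s_t))\mid s_1=s\right]=\lim_k V_k(s)\leq w(s)<\infty$; since $c\geq c_{\min}>0$, this sum is at least $c_{\min}\,\tau_\pi^{(s_g)}(s)$, so $c_{\min}\,\bE^{\pi}[\tau_\pi^{(s_g)}(s)]\leq w(s)<\infty$, forcing $\tau_\pi^{(s_g)}(s)<\infty$ almost surely. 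Thus $\pi$ is proper under $p$ and $V^{\pi}_{c,s_g}(s)=\lim_k V_k(s)\leq w(s)=\left(1+\tfrac{2\eta\|\widetilde{V}^{\pi}_{c,s_g}\|_{\infty}}{c_{\min}}\right)\widetilde{V}^{\pi}_{c,s_g}(s)$, which is the claim.

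The step I expect to be the main obstacle is the last one: one must rigorously identify $\lim_k V_k$ with the genuine (a priori possibly infinite) expected cost-to-goal of $\pi$ under $p$ \emph{without} presupposing that $\pi$ is proper under $p$ — the standing assumption only provides existence of \emph{some} proper policy under $p$ — and only then use the uniform bound $V_k\leq w$ to deduce finiteness, hence properness of $\pi$ itself. The super-solution verification is otherwise routine algebra, the only care needed being the bookkeeping of the constant in the $\ell_1$-versus-sup-norm estimate and the use of $\alpha\leq 1$.
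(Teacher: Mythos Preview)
The paper does not actually prove this lemma: it is quoted as Lemma~2 of \cite{TarbouriechPirotta21} (with a small typo fix noted in the footnote) and is only \emph{used} in the proof of Theorem~\ref{thm:sc-diameter}. So there is no in-paper argument to compare against.

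That said, your proof is correct and is essentially the standard route one would expect the cited reference to take. The algebra verifying that $w=(1+\alpha)\widetilde{V}$ is a super-solution of the $\pi$-Bellman operator under $p$ is clean; the key step $\alpha c_{\min}=2\eta\|\widetilde{V}\|_\infty\geq(1+\alpha)\eta\|\widetilde{V}\|_\infty$ is exactly where the hypothesis $\alpha\le 1$, i.e.\ condition~\eqref{eq:cdt_simulation}, enters, and this is the ``fixed typo'' the footnote alludes to (the constant $2$ rather than $1$ in that condition). The second part --- iterating the $\pi$-Bellman operator from $\bm 0$, bounding $V_k\le w$ by induction, and using monotone convergence together with $c\ge c_{\min}>0$ to force $\tau_\pi^{(s_g)}<\infty$ a.s.\ --- is also sound. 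Your stated worry about identifying $\lim_k V_k$ with the (a priori possibly infinite) cost-to-go without presupposing $\pi$ proper under $p$ is handled exactly the right way: since costs are nonnegative, monotone convergence gives $\lim_k V_k(s)=\bE^\pi\!\big[\sum_{t\ge 1}^{\tau}c(s_t,\pi(s_t))\big]$ regardless of whether $\tau<\infty$, and finiteness of the limit combined with $c\ge c_{\min}$ then \emph{forces} $\tau<\infty$ a.s. Note that your argument never uses the standing assumption that some proper policy exists under $p$; that assumption is superfluous for the statement as written.
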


Finally, the last step to be able to apply Lemma~\ref{lem:EVI_SSP} is to construct the sets $\cP(s,a)$ that contains the true transition probabilities with high probability. This can be done as in the rest of the paper by relying on Lemma~\ref{lem:concentration}, which gives confidence regions of the form 
\[\{p \in \Sigma_{S} : \|p - \hat{p}_{s,a,n}\|_1\leq B(n,\delta)\}\]
where $\hat{p}_{s,a,n}$ is the empirical estimate of the transition probability after the $n$-th transition has been observed from $(s,a)$ and  $B(n,\delta)=\sqrt{\frac{2\log(SA/\delta) + 2(S-1)\log\left( e(1+n/(S-1))\right)}{n}}$.

The resulting algorithm for estimating the diameter is described in Algorithm~\ref{alg:diameter_estimation}. It is a slight variation of the procedures proposed by \cite{TarbouriechPirotta21b} to estimate the SSP diameter and by \cite{TarbouriechPirotta21} to estimate the diameter in the online setting. In particular, our instantiation relies on simpler confidence regions.

\begin{algorithm}[!ht]
		\KwData{Accuracy $\varepsilon >0$, confidence level $\delta\in(0,1)$} 
        Set $W=\frac{1}{2}$ and $\widetilde{v}_{\infty} = 1$ \\
        \While{$\widetilde{v}_{\infty} > W$}
        {$W \gets 2W$ \\ Set accuracy $\eta = \frac{\varepsilon}{4W}$ and compute $N=N(\delta,\eta) = \inf\{n : B(n,\delta) \leq \eta\}$ \\ Call the generative model until each $(s,a)$ gets $N$ visits. \\ Let each $(s,a)$ let $\cP(s,a) = \{p \in \Sigma_{S} : \|p - \hat{p}_{s,a,N}\|_1\leq \eta\}$ and $\cP=(\cP(s,a))_{s,a}$. \\
        \For{$s=s_1,...,s_S$}{
            Let $(\widetilde{v}^{(s)},\widetilde{p}^{(s)},\widetilde{\pi}^{(s)}) = \text{EVI-SSP}_{\bm{1},s}\left(\cP, \frac{1 \wedge\varepsilon}{2}\right)$
		}
		Let $\widetilde{v}_{\infty}=\max_{s}\max_{s'\neq s} \widetilde{v}^{(s)}(s')$.
		}
		\Return $\widehat{D} = (1 + \eta \widetilde{v}_{\infty}/2)\widetilde{v}_\infty$
	\caption{A diameter estimation procedure\label{alg:diameter_estimation}}
\end{algorithm}
\begin{theo}\label{thm:sc-diameter} Let $\varepsilon \leq 1$. With probability $1-\delta$, Algorithm~\ref{alg:diameter_estimation} run with parameters $\varepsilon$ and $\delta$ outputs an estimate $\widehat{D}$ which satisfies 
\[D \leq \widehat{D} \leq \left(1 + \frac{\varepsilon(1+\varepsilon)}{2}\right)(1+\varepsilon) D\]
using $SA \times N(\delta,\frac{\varepsilon}{8D})$ samples where
\[N(\delta,\eta) := \inf \left\{ n >0 : B(n,\delta) \leq \eta\right\}\;,\]
leading to an overall sample complexity of 
\[\widetilde{\mathcal{O}}\left(\frac{D^2SA}{\varepsilon^2}\log\left(\frac{1}{\delta}\right) + \frac{D^2S^2A}{\varepsilon^2}\right)\]
where the $\widetilde{\mathcal{O}}$ ignores logarithmic factors in $S,A,D$, $1/\varepsilon$ and $\log(1/\delta)$.
\end{theo}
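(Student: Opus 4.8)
\emph{The plan is to run the whole analysis on a single favourable event and then argue deterministically.} Let $\mathcal{G}$ be the event that the $L^1$ confidence regions never fail, i.e.\ $\|\hat p_{s,a,n}-p_{s,a}\|_1\le B(n,\delta)$ for every $(s,a)$ and every $n\ge1$; by Lemma~\ref{lem:concentration} (a union bound over the $SA$ pairs is already built into $B$) we have $\bP(\mathcal{G})\ge1-\delta$, so it suffices to prove both the two-sided bound on $\widehat D$ and the sample-count statement on $\mathcal{G}$. On $\mathcal{G}$, whenever a round has collected $N=N(\delta,\eta)$ samples from each pair we get $\|\hat p_{s,a,N}-p_{s,a}\|_1\le B(N,\delta)\le\eta$, hence $p_{s,a}\in\cP(s,a)$ and, since $\widetilde p^{(s)}_{s,a}\in\cP(s,a)$ too, $\|p_{s,a}-\widetilde p^{(s)}_{s,a}\|_1\le2\eta$. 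Because the true MDP is communicating, $p$ — and a fortiori the more favourable optimistic kernel — admits a proper policy to every goal, so every call to EVI-SSP terminates and the hypotheses of Lemmas~\ref{lem:EVI_SSP} and~\ref{lem:simulation_SSP} hold at each round.

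\emph{Termination of the while loop and the sample count.} Lemma~\ref{lem:EVI_SSP}(i) applied with cost $\bm1$ and goal $s$ gives $\widetilde v^{(s)}\le V^\star_{\bm1,s}$ componentwise, hence $\widetilde v_\infty\le\max_s\max_{s'\neq s}V^\star_{\bm1,s}(s')=D$ in every round. Since $D\ge1$ and $W$ runs through $1,2,4,\dots$, by the time the loop reaches the value $W_\star:=2^{\lceil\log_2 D\rceil}\in[D,2D)$ it has necessarily stopped, so $W_{\mathrm{final}}\le2D$ and $\eta_{\mathrm{final}}=\varepsilon/(4W_{\mathrm{final}})\ge\varepsilon/(8D)$. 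As the algorithm only tops the visit counts up to $N(\delta,\varepsilon/(4W))$ and $\eta\mapsto N(\delta,\eta)$ is non-increasing, the total number of draws from each pair is $N(\delta,\eta_{\mathrm{final}})\le N(\delta,\varepsilon/(8D))$, i.e.\ $SA\cdot N(\delta,\varepsilon/(8D))$ overall. Inverting $B(n,\delta)\le\varepsilon/(8D)$ — the $(S-1)\log(e(1+n/(S-1)))$ term producing the $S$-dependence — gives $N(\delta,\varepsilon/(8D))=\widetilde{\mathcal O}\big(\tfrac{D^2}{\varepsilon^2}\log\tfrac1\delta+\tfrac{D^2S}{\varepsilon^2}\big)$, which multiplied by $SA$ yields the announced complexity.

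\emph{The two-sided bound on $\widehat D$.} Here the lemmas are chained at the stopping round. Fix a goal $s_g$ and a source $s_0$ with $D=V^\star_{\bm1,s_g}(s_0)$. Since $\widetilde\pi^{(s_g)}$ is a policy reaching $s_g$ and is proper in $p$ by Lemma~\ref{lem:simulation_SSP}, $D\le V^{\widetilde\pi^{(s_g)}}_{\bm1,s_g}(s_0)$. Applying Lemma~\ref{lem:simulation_SSP} with $L^1$-gap $2\eta_{\mathrm{final}}$, whose precondition $4\eta_{\mathrm{final}}\|\widetilde V^{\widetilde\pi^{(s_g)}}_{\bm1,s_g}\|_\infty\le1$ is verified at the stopping round from Lemma~\ref{lem:EVI_SSP}(ii) ($\|\widetilde V^{\widetilde\pi^{(s_g)}}_{\bm1,s_g}\|_\infty\le(1+2\mu_{\mathrm{VI}})\widetilde v_\infty$ with $\mu_{\mathrm{VI}}=(1\wedge\varepsilon)/2$), together with $\widetilde v_\infty\le W_{\mathrm{final}}$ and $\varepsilon\le1$, and then invoking Lemma~\ref{lem:EVI_SSP}(ii) once more, yields $D\le\big(1+2\eta_{\mathrm{final}}(1+2\mu_{\mathrm{VI}})\widetilde v_\infty\big)(1+2\mu_{\mathrm{VI}})\,\widetilde v_\infty$. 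The returned estimate $\widehat D$ is calibrated to be exactly this upper bound, so $D\le\widehat D$; and since $\widetilde v_\infty\le D$, $\eta_{\mathrm{final}}\widetilde v_\infty\le\eta_{\mathrm{final}}W_{\mathrm{final}}=\varepsilon/4$ and $2\mu_{\mathrm{VI}}=1\wedge\varepsilon\le\varepsilon$, the same expression is at most $\big(1+\tfrac{\varepsilon(1+\varepsilon)}{2}\big)(1+\varepsilon)D$, which closes the sandwich.

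\emph{Where the difficulty lies.} The union bound, the doubling argument for $W_{\mathrm{final}}\le2D$, and inverting $B$ are all routine. The one delicate point is the third step: one must certify, precisely at the round where the loop halts, that $\eta_{\mathrm{final}}$ is small enough for the simulation lemma to apply to the \emph{optimistic} value function $\widetilde V^{\widetilde\pi}$ (and not only to $\widetilde v$), and that the multiplicative slack coming jointly from the $2\eta_{\mathrm{final}}$ transition error and from the EVI-SSP stopping precision $\mu_{\mathrm{VI}}$ is exactly absorbed by the rescaling in the definition of $\widehat D$. This is what forces the coupling of the three parameters $\eta=\varepsilon/(4W)$, $\mu_{\mathrm{VI}}=(1\wedge\varepsilon)/2$ and the test $\widetilde v_\infty\le W$, and making the constants land on $\big(1+\tfrac{\varepsilon(1+\varepsilon)}{2}\big)(1+\varepsilon)$ is the only genuine computation.
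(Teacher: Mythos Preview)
Your argument is essentially identical to the paper's: work on the concentration event, use Lemma~\ref{lem:EVI_SSP}(i) to get $\widetilde v_\infty\le D$ at every round (hence $W_{\mathrm{final}}\le 2D$ and $\eta_{\mathrm{final}}\ge\varepsilon/(8D)$), then chain Lemma~\ref{lem:simulation_SSP} with Lemma~\ref{lem:EVI_SSP}(ii) at the stopping round to sandwich $\widehat D$, and invert $B(n,\delta)$ via Lemma~\ref{lem:tech} for the sample count. The only wrinkle is that you (correctly) take the $L^1$-gap between $p$ and $\widetilde p^{(s)}$ to be $2\eta_{\mathrm{final}}$ via the triangle inequality, whereas the paper applies the simulation lemma with $\widetilde\eta$ directly; with your choice the factor from Lemma~\ref{lem:simulation_SSP} should read $1+4\eta_{\mathrm{final}}(\cdots)$ rather than $1+2\eta_{\mathrm{final}}(\cdots)$, which perturbs the constant in the upper bound (and the verification of the precondition for $\varepsilon$ near $1$) but does not affect the structure of the argument or the $\widetilde{\mathcal O}$ conclusion.
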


\begin{proof} The proof closely follows that of \cite{TarbouriechPirotta21b}. We let $W_n$ be the value of $W$ in the $n$-th iteration of the while loop (starting at $n=1$), such that $W_n = 2^{n-1}$. We define $\eta_n = \tfrac{\varepsilon}{4W_n}$ and $\widetilde{v}_n$ the value of $\widetilde{v}_{\infty}$ at the end of the $n$-th iteration. The number of iteration used by the algorithm is $\widetilde{n}= \min \{n : \widetilde{v}_n \leq W_n\}$.

Assume the event \[\cE = \left(\forall s,a, \forall n\geq 1, \|\hat p_{s,a,n} - p_{s,a} \|_1 \leq B(n,\delta)\right)\]
holds. Using a property of the EVI scheme (first statement in Lemma~\ref{lem:EVI_SSP}), for all $n$, $\widetilde{v}_n^{(s)} \leq V^\star_{\bm{1},s} \leq D$ (component-wise), which yields $\widetilde{v}_n \leq D$. Hence $\widetilde{n}$ is bounded by $\min \{n : D \leq W_n\} \leq \log_2(D) +1$. We now bound the final accuracy $\widetilde{\eta} :=\eta_{\widetilde{n}}$. First, as $\widetilde{v}_{\infty} := \widetilde{v}_{\widetilde{n}} \leq W_{\widetilde{n}}$, we have $\widetilde{\eta} \leq \frac{\varepsilon}{4\widetilde{v}_{\infty}}$. Moreover, using that \[D \geq \widetilde{v}_{\widetilde{n}-1} > W_{\widetilde{n} - 1} = \tfrac{W_{\widetilde{n}}}{2}\] further yields 
\[\frac{\varepsilon}{8D} \leq \widetilde{\eta} \leq \frac{\varepsilon}{4\widetilde{v}_{\infty}}.\]

We let $\widetilde{v}^{(s)},\widetilde{p}^{(s)}$, $\widetilde{\pi}^{(s)}$ be the output of Extended Value Iteration with goal state $s$ in the final iteration, and let $\widetilde{V}_{\bm 1,s}$ denote the value function in the SSP MDP with costs $\bm{1}$, goal state $s$ and transition kernel $\widetilde{p}^{(s)}$. For every $s,s'$, using the simulation lemma, we have 
\begin{eqnarray*}
V^\star_{\bm{1},s}(s') & \leq & V^{\widetilde{\pi}^{(s)}}_{\bm{1},s}(s') \leq (1+2\widetilde{\eta}\|\widetilde{V}^{\widetilde{\pi}^{(s)}}_{\bm{1},s}\|_{\infty})\widetilde{V}^{\widetilde{\pi}^{(s)}}_{\bm{1},s}(s') 
\end{eqnarray*}
provided that the condition 
\[2\widetilde{\eta} \|\widetilde{V}^{\widetilde{\pi}^{(s)}}_{\bm{1},s}\|_{\infty} \leq 1\]
is satisfied. This is the case by using the second statement of Lemma~\ref{lem:EVI_SSP} together with the upper bound on $\widetilde{\eta}$ established above:  
\[2\widetilde{\eta} \|\widetilde{V}^{\widetilde{\pi}^{(s)}}_{\bm{1},s}\|_{\infty} \leq 2\widetilde{\eta} (1+\varepsilon)\|\widetilde{v}^{(s)}\|_{\infty} \leq  2\widetilde{\eta} (1+\varepsilon)\widetilde{v}_{\infty} \leq \frac{\varepsilon(1+\varepsilon)}{2} \leq 1\;,\]
where the last step uses that $\varepsilon \leq 1$. Hence, one can write 
\begin{eqnarray*}
V^\star_{\bm{1},s}(s') & \leq &  (1+2\widetilde{\eta}\|\widetilde{V}^{\widetilde{\pi}^{(s)}}_{\bm{1},s}\|_{\infty})\widetilde{V}^{\widetilde{\pi}^{(s)}}_{\bm{1},s}(s') \\
 & \overset{(a)}{\leq} & (1+2\widetilde{\eta}(1+\varepsilon)\|\widetilde{v}^{(s)}\|_{\infty})(1+\varepsilon)\widetilde{v}^{(s)}(s') \\ 
 & \overset{(b)}{\leq} & (1+2\widetilde{\eta}(1+\varepsilon)\widetilde{v}_{\infty})(1+\varepsilon)\widetilde{v}_{\infty} = \widehat{D}\\  
 & \overset{(c)}{\leq} & \left(1+\frac{(1+\varepsilon)\varepsilon}{2}\right)(1+\varepsilon)D,
\end{eqnarray*}
where $(a)$ uses the second statement of Lemma~\ref{lem:EVI_SSP}, $(b)$ uses the definition of $\widetilde{v}_{\infty}$ and $(c)$ uses that $\widetilde{v}_{\infty} \leq D$ and $\widetilde{\eta} v_{\infty} \leq \varepsilon/4$. Recalling that $D = \max_{s,s'\neq s} V^\star_{\bm{1},s}(s')$ yields that, on event $\cE$, \[D \leq \widehat{D} \leq (1+\frac{(1+\varepsilon)\varepsilon}{2})(1+\varepsilon)D.\] Moreover, the number of samples collected per transition is $N(\delta,\widetilde{\eta}) \leq N\left(\delta,\tfrac{\varepsilon}{8D}\right)$.

The conclusion follows from the fact that $\cE$ holds with probability larger than $1-\delta$ (Lemma~\ref{lem:concentration}) and by upper bounding $N\left(\delta,\tfrac{\varepsilon}{8D}\right)$ using Lemma~\ref{lem:tech} in Appendix \ref{app:technical}. Specifically, choosing the parameters $\Delta^2 =\eta^2$, $a=2\log\left(\tfrac{SA}{\delta}\right)$, $b=2(S-1)$, $c=e$ and $d=\frac{e}{S-1}$ we get 
\[N(\delta,\eta)\leq \frac{1}{\eta^2}\log\left(\frac{SA}{\delta}\right) + \frac{2(S-1)}{\eta^2}\log\left(e + \frac{e}{(S-1)\eta^4}\log\left(\frac{SA}{\delta}\right) +  \frac{8e}{\eta^4}\right).\]
which leads to the approximation stated in Theorem~\ref{thm:sc-diameter}.
\end{proof}

\subsection{Near-optimal algorithm using the knowledge of $H$} 

We recall here for completeness the algorithm of \cite{ZurekChen23} to find an $\varepsilon$-optimal policy in an average reward MDP when an upper bound $\overline{H}$ of the optimal bias span $H$ is known, and its theoretical guarantees. The algorithm consists in a slight variation of the Perturbed Empirical Model-Based Planning algorithm originally given by \cite{Li21SampleSizeBarrier} (for which a refined analysis was proposed by \cite{ZurekChen23}) with the reduction from the average reward to the discounted case from \cite{WangWang22}.  

\begin{algorithm}[!ht]
		\KwData{Accuracy $\varepsilon \in (0,1]$, upper bound $\overline{H}$ on $H$, confidence level $\delta\in(0,1)$} 
        \hspace{0.1cm} Set discount factor $\overline{\gamma} = 1 - \frac{\varepsilon}{12\overline{H}}$ \\ Set $\overline{n} = \frac{144C_2\overline{H}}{\varepsilon^2}\log\left(\frac{12SA}{\delta\varepsilon}\right)$ with $C_2$ the constant in Theorem 1 of \cite{ZurekChen23}\\
        Collect $\overline{n}$ samples from the transition in each $(s,a)$ \\
        Let $\hat{p}$ be the estimated transition kernel based on all transitions collected \\
        Compute the randomized reward function $\widetilde{r}(s,a) = r(s,a) + X_{s,a}$ where $X_{s,a} \overset{i.i.d}{\sim} \cU\left(\left[0,\frac{\varepsilon}{72}\right]\right)$ \\
        Compute $\hat{\pi}$ the optimal policy in the discounted MDP ($\overline{\gamma}$, $\hat{p}$, $\widetilde{r}$) \\
       \Return $\hat{\pi}$
	\caption{Algorithm 2 from \cite{ZurekChen23}\label{alg:BPI_H}}
\end{algorithm}

\begin{theo}\label{thm:algo_Zureck} With probability $1-\delta$, Algorithm~\ref{alg:BPI_H} with parameters $\varepsilon \in (0,1]$, $\overline{H}$ satisfying $ H\leq \overline{H}$ and $\delta\in(0,1)$ outputs a policy $\hat{\pi}$ satisfying 
\[\bP\left(\forall s\in \cS, \rho^\star - \rho^{\hat{\pi}}(s) \leq \varepsilon \right) \geq 1-\delta\]
and collects a (deterministic) total number of transitions given by \[\frac{144C_2SA\overline{H}}{\varepsilon^2}\log\left(\frac{12SA}{\delta\varepsilon}\right) ,\]
where $C_2$ is the constant in Theorem 1 of \cite{ZurekChen23}.
\end{theo}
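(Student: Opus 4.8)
The plan is to read off the statement as a corollary of Theorem~1 of \cite{ZurekChen23} — which analyzes exactly the Perturbed Empirical Model-Based Planning procedure of \cite{Li21SampleSizeBarrier} fed with the discounted surrogate of an average-reward MDP — combined with the discounted-to-average-reward reduction of \cite{WangWang22}. The sample-complexity part is immediate and deterministic: each of the $SA$ state--action pairs is queried exactly $\overline n$ times, so the total number of observed transitions is $SA\,\overline n = 144\,C_2\,SA\,\overline H\,\varepsilon^{-2}\log(12SA/(\delta\varepsilon))$, which is the quantity claimed. All the probabilistic content then reduces to one high-probability event and a chain of three deterministic inequalities.

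The first thing I would do is fix the good event $\cE$ on which, simultaneously over all $(s,a)$, the empirical kernel $\hat p_{s,a}$ built from $\overline n$ i.i.d.\ samples is close enough to $p_{s,a}$ for the model-based analysis to apply; by the choice of $\overline n$ (whose leading term is $\propto \overline H\varepsilon^{-2}$ and whose logarithmic term $\log(12SA/(\delta\varepsilon))$ absorbs the union bound over pairs, states and value-iteration steps), $\cE$ has probability at least $1-\delta$. On $\cE$, the core step is the discounted planning guarantee: the policy $\hat\pi$ output by Algorithm~\ref{alg:BPI_H}, being optimal for the empirical discounted MDP $(\overline\gamma,\hat p,\widetilde r)$, is near-optimal for the $\overline\gamma$-discounted criterion in the true perturbed MDP $(\overline\gamma,p,\widetilde r)$, namely $V^{\star}_{\overline\gamma}(s) - V^{\hat\pi}_{\overline\gamma}(s) \le \frac{\varepsilon}{6(1-\overline\gamma)}$ for every $s$. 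This is the main obstacle, and it is precisely where I would invoke \cite{ZurekChen23} verbatim rather than reprove anything: the discounted MDP coming from an average-reward MDP with optimal bias span $H$ satisfies $\mathrm{span}(V^{\star}_{\overline\gamma}) = O(H) \le O(\overline H)$, and it is this span bound that lets one trade one factor $(1-\overline\gamma)^{-1}$ in the ``sample-size barrier'' of \cite{Li21SampleSizeBarrier} for a factor $\overline H$, reaching a per-pair budget of order $\overline H\varepsilon^{-2}$ rather than $\overline H\varepsilon^{-3}$; the i.i.d.\ perturbation $X_{s,a}\sim\cU([0,\varepsilon/72])$ is what guarantees a margin for the greedy policy and hence an $\ell_\infty$ (all-states) guarantee. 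Matching $C_2$ and the log term to the definition of $\overline n$ yields the displayed accuracy.

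It remains to convert this into an average-reward statement. With $1-\overline\gamma = \varepsilon/(12\overline H) \le \varepsilon/(12H)$, the reduction of \cite{WangWang22} gives, for the near-optimal policy $\hat\pi$ and every state $s$, $\rho^{\star} - \rho^{\hat\pi}(s) \le (1-\overline\gamma)\bigl(V^{\star}_{\overline\gamma}(s;r) - V^{\hat\pi}_{\overline\gamma}(s;r)\bigr) + 2(1-\overline\gamma)H$ in the original MDP $(p,r)$ (weak communication is used here so that $\rho^\star$ is a constant). Bounding the discounted gap in $(p,r)$ by the one in $(p,\widetilde r)$ plus the perturbation transfer error $\tfrac{\varepsilon/72}{1-\overline\gamma}$ (since $\widetilde r - r \in [0,\varepsilon/72]$ pointwise, hence $|V^{\pi}_{\overline\gamma}(\cdot;r) - V^{\pi}_{\overline\gamma}(\cdot;\widetilde r)| \le \tfrac{\varepsilon/72}{1-\overline\gamma}$ for every $\pi$), then plugging in the bound of Step~2 and using $2(1-\overline\gamma)H \le 2(1-\overline\gamma)\overline H = \varepsilon/6$, one obtains $\rho^{\star} - \rho^{\hat\pi}(s) \le \tfrac{\varepsilon}{6} + \tfrac{\varepsilon}{72} + \tfrac{\varepsilon}{6} < \varepsilon$ for all $s\in\cS$. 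Since all of this holds on $\cE$, it holds with probability at least $1-\delta$, which is the claim. The only genuinely hard ingredient is the span-dependent discounted sample complexity, and for it we rely entirely on Theorem~1 of \cite{ZurekChen23}; the calibration of the constants $1/12$, $1/72$ and $144$ is exactly what makes the three inequalities above close up.
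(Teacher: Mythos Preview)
Your proposal is correct and, in fact, more detailed than what the paper does: the paper does not prove Theorem~\ref{thm:algo_Zureck} at all but simply states it as a restatement of the guarantee of Algorithm~2 in \cite{ZurekChen23}, referring to their Theorem~1 for the constant $C_2$ and to the average-reward/discounted reduction of \cite{WangWang22} for the overall structure. Your sketch---deterministic sample count $SA\,\overline n$, span-dependent discounted accuracy from \cite{ZurekChen23}, perturbation transfer, and the $(1-\overline\gamma)$-scaled conversion to gain---is exactly the chain of ingredients underlying that cited result, and the numerical closing $\tfrac{\varepsilon}{6}+\tfrac{\varepsilon}{72}+\tfrac{\varepsilon}{6}<\varepsilon$ is consistent with the constants hard-coded in Algorithm~\ref{alg:BPI_H}. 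So the approaches coincide; you have simply unpacked the black box that the paper leaves closed.
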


\subsection{Proof of Theorem \ref{th:exploitD}}

We let $\widehat{D}$ be the output of Algorithm~\ref{alg:diameter_estimation} run with parameters $1$ and $\delta/2$ and $\tau_1$ the total number of samples it uses. From Theorem~\ref{thm:sc-diameter}, it holds that 
\[\bP\left(D \leq \widehat{D} \leq 4 D , \tau_1 = \widetilde{\cO}\left(D^2SA\log(1/\delta) + D^2S^2A\right)\right) \geq 1-\delta/2.\]
We let $\widehat{\pi}$ be the output of Algorithm~\ref{alg:BPI_H} run with parameter $\varepsilon$, $\widehat{D}$ and $\delta/2$ and $\tau_2$ the total number of samples it uses. Using Theorem~\ref{thm:algo_Zureck}, we have 
\[\bP\left(\left.g^\star - g_{\widehat{\pi}} \leq \varepsilon, \tau_2 \leq \frac{144C_2SA\widehat{D}}{\varepsilon^2}\log\left(\frac{24SA}{\delta\varepsilon}\right) \right|D \leq \widehat{D}\right) \geq 1 - \delta/2.\]
The total sample complexity being $\tau = \tau_1 + \tau_2$, using a union bound yields that with probability $1-\delta$, it holds that $g^\star - g_{\widehat{\pi}} \leq \varepsilon$ and 
\begin{eqnarray*}
 \tau & \leq & \widetilde{\cO}\left(D^2SA\log(1/\delta) + D^2S^2A\right) + \frac{576C_2SAD}{\varepsilon^2}\log\left(\frac{24SA}{\delta\varepsilon}\right) \\
 & = &  \widetilde{\cO}\left(\left[\frac{SAD}{\varepsilon^2} + D^2SA\right]\log\left(\frac{1}{\delta}\right) + D^2S^2A\right) \\
 & = & \widetilde{\cO}\left(\left[\frac{SAD}{\varepsilon^2} + D^2S^2A\right]\log\left(\frac{1}{\delta}\right)\right)\;.
\end{eqnarray*}


\section{Complements for Section~\ref{sect:online}}\label{app:online}

\subsection{Study of the MDP ${\mathcal{M}_{p,p'}}$}\label{app:mixing}

We recall the MDP introduced in Figure~\ref{fig:mixing}. We prove the following: for any $N$, for any $p<1/N$ and any $p' < \frac{1-\varepsilon}{1+\varepsilon}p$, the policy in full line $\pi$ is optimal for more than $N$ steps, but is not $\varepsilon$-optimal in average reward.

\begin{figure}[!ht]
	\centering
	\begin{tikzpicture}[node distance={15mm},thick,main/.style = {draw,circle}]
		\node[main] (1) {1};
		\node[main] (2) [left of=1] {2};
		\node[main] (3) [right of=1] {3};
		\draw[->] (1) to [loop above,looseness=15]node{$1,1-p$} (1);
		\draw[->,dashed] (1) to [loop below,looseness=15]node{$0,1-p$} (1);
		\draw[->,dashed] (2) to [loop left,looseness=15]node{$1,1-p'$} (2);
		\draw[->] (3) to [loop right,looseness=15]node{$0,1-p'$} (3);
		\draw[->,dashed] (1) to [out=200,in=-20]node[midway,below]{$0,p$} (2);
		\draw[->,dashed] (2) to [out=20,in=160]node[midway,above]{$1,p'$} (1);
		\draw[->] (1) to [out=20,in=160]node[midway, above, sloped]{$1,p$} (3);
		\draw[->] (3) to [out=200,in=-20]node[midway,below]{$0,p'$} (1);
	\end{tikzpicture}
	\repeatcaption{fig:mixing}{MDP $\mathcal{M}_{p,p'}$}
\end{figure}

For that, we define $u_n$ the probability of being in state $1$ at timestep $n$. We note that $u_n$ does not depend on the policy chosen. Since we know $\left\{ \begin{aligned} &u_0=1 \\ &u_{n+1}= (1-p)u_n + p'(1-u_n)\end{aligned}\right.$, we have \[u_n= (1-p-p')^n\left(1-\frac{p'}{p+p'}\right) +\frac{p'}{p+p'}\]

Therefore, the (asymptotic) gain for the full-line policy $\pi$ is $g_\pi=\frac{p'}{p+p'}$, while that of the dashed-line policy $\pi'$ is $g_{\pi'}=1-\frac{p'}{p+p'}=\frac{p}{p+p'}$. Since $p' < \frac{1-\varepsilon}{1+\varepsilon}p$, \begin{align*}
	p'(1+\varepsilon) &< p(1-\varepsilon)\\
	p' &< p-(p+p')\varepsilon\\
	g_\pi &< g_{\pi'}-\varepsilon
\end{align*}
and we have indeed that $\pi$ is not $\varepsilon$-optimal.

Moreover, the empirical reward up to step $n$ for policy $\pi$ is $r_n=\sum_{t=0}^n u_n$, and $r_n'=n+1-\sum_{t=0}^n u_n$ for policy $\pi'$. As $\left( r_n/n\right)_n$ is decreasing and $\left( r_n'/n\right)_n$ is increasing, policy $\pi'$ becomes better than $\pi$ when $\sum_{t=0}^n u_n \leq \frac{n+1}{2}$, that is, when \begin{align*}
	(n+1)\frac{p'}{p+p'} + \left(1-\frac{p'}{p+p'}\right)\sum_{t=0}^n(1-p-p')^t &\leq \frac{n+1}{2} \\
	\frac{p}{p+p'} \cdot \frac{1}{n+1}\cdot \frac{1-(1-p-p')^{n+1}}{p+p'}&\leq \frac{1}{2}-\frac{p'}{p+p'}
\end{align*} which implies \[ \frac{p}{p+p'} \cdot \frac{1}{n+1}\cdot \frac{(n+1)(p+p')-n(n+1)(p+p')^2}{p+p'}\leq \frac{1}{2}-\frac{p'}{p+p'}\] by using that $(1-x)^{n+1} < 1-(n+1)x+n(n+1)x^2/2$ for any $0<x<1$. Finally, $\pi'$ being better than $\pi$ at timestep $n$ implies $\frac{-n}{2}\leq \left(\frac{1}{2}-\frac{p'}{p+p'}\right)\frac{1}{p} -\frac{1}{p+p'}$, i.e., $n\geq \frac{1}{p}\geq N$.

\subsection{Proof of Theorem \ref{th:online}}\label{sect:onlinecomplexity}

\begin{proof}
	For $S,A,\varepsilon,\delta$ satisfying the assumptions of Theorem~\ref{th:online}, we let \(p=\frac{16\varepsilon}{A^{S-1}}\) and define the family of MDPs $\mathcal{M}_j$ for any $j\in A^{S-1}$, displayed in Figure \ref{fig:mj}. There are $A$ actions available in each of the $S$ states, but the figure only displays actions with distinct transitions. 
	\begin{figure}[!ht]
		\centering
		\begin{tikzpicture}[node distance={20mm},thick,main/.style = {draw,circle,minimum size=1cm}]
			\node[main] (1) {\(s_1\)};
			\node[main] (2) [right of=1] {\(s_2\)};
			\node[main] (3) [right of=2] {\(s_3\)};
			\node[minimum size=1.1cm] (n) [right of=3] {...};
			\node[main] (S) [right of=n] {\(s_S\)};
			\draw[->] (1) to node[below]{\(\{1,p\}\)} (2);
			\draw[->,dashed] (2) to [out=20,in=160]node[above]{\(j_1\)} (3);
			\draw[->,dashed] (3) to [out=20,in=160]node[above]{\(j_2\)} (n);
			\draw[->,dashed] (n) to [out=20,in=160]node[above]{\(j_{S-2}\)} (S);
			\draw[->,dashed] (S) to [out=110,in=70,looseness=0.3]node[above]{\(j_{S-1}\)} (1);
			\draw[->] (3) to [out=220,in=-40,looseness=0.4] (2);
			\draw[->] (S) to [out=220,in=-40,looseness=0.4] (2);
			\draw[->] (2) to [loop below] (2);
			\draw[->] (1) to [loop left]node[left]{\(\{1,1-p\}\)} (1);
		\end{tikzpicture}
		\repeatcaption{fig:mj}{MDP \(\mathcal{M}_j\)}
	\end{figure}
	\begin{itemize}
		\item In state $s_1$, for any action $a$, a reward of 1 is incurred, and there is probability $p$ of getting into state $s_2$, and probability $1-p$ of staying in $s_1$.
		\item In state $s_n$ for $n\in \{2,\dots,S\}$, taking action $j_{n-1}$ transitions to state $s_{n+1}$ (where $s_{S+1} = s_1$) and every other action transitions to state $s_2$, with no reward.
	\end{itemize}
	To summarize, it is very unlikely to get into state $s_2$; but, once it has been entered, only one specific deterministic policy can allow an agent to get back to the favorable state $s_1$.
	
	Let $\pi$ be a (possibly stochastic) policy. 
	Let us write $p^j_i = \pi(j_{i-1}|s_i)$ for $i\in \{2,...S\}$, and $x_\pi^j=\prod_{i=2}^S p^j_i$. With $P_{\pi,j}$ the transition matrix under $\pi$ in MDP $\mathcal{M}_j$, and $\overline{P}_{\pi,j}=\lim_{T\rightarrow \infty} \frac{1}{T} \sum_{t=1}^T P_{\pi,j}^{t-1}$, we know that $P_{\pi,j} \overline{P}_{\pi,j} = \overline{P}_{\pi,j}$. 
	Since the MDP $\mathcal{M}_j$ is unichain, by theorem A.2 of \citep{Puterman94}, all the rows of $\overline{P}_{\pi,j}$ are identical to a vector $(\eta_1,...\eta_S)\in \Sigma_S$.

	We therefore deduce the following linear system: \[\left\{ \begin{aligned}
		& \eta_1 = (1-p) \eta_1 + p^j_S \eta_S \\
		& \eta_2 = p\eta_1 + \sum_{n=2}^S (1-p^j_n) \eta_n \\
		&\forall n \in \{3,... S\}, \eta_n = p^j_{n-1} \eta_{n-1}\\
		&\sum_{n=1}^S \eta_n = 1
	\end{aligned}\right.\]
	
	Injecting the third equation into the second yields $\eta_2 = p\eta_1 + \sum_{n=2}^{S-1} (\eta_n-\eta_{n+1}) + \eta_S - x_\pi^j \eta_2$ and finally $x_\pi^j \eta_2 = p\eta_1$. 
	Therefore, \[\eta_1 \left( 1 + \frac{p(1+p^j_2 + p^j_2p^j_3+...+x_\pi^j)}{x_\pi^j}\right) =1\ \ \Longrightarrow\ \  \eta_1 \leq \frac{1}{1+\frac{p}{x_\pi^j}},\] using that $1+p^j_1+p^j_2p^j_3+...+x_\pi^j \geq 1$. Finally, \[g_\pi^j \leq \frac{1}{1+\frac{p}{x_\pi^j}}.\]
	The optimal policy in $\cM_j$ is to play action $j_{n-1}$ in state $s_n$ for $n\in \{2,\dots,S\}$, so the above equation becomes $\eta_1(1+p(S-1))=1$ and the optimal gain in $\cM_j$ is $g^{j} = \frac{1}{1+p(S-1)}$. 
	
	If a policy $\pi$ is $\varepsilon$-optimal in $\mathcal{M}_j$, we have
	\begin{align*}
		g_\pi^j &\geq g^j - \varepsilon \\
		\frac{1}{1+\frac{p}{x^j_\pi}} &\geq \frac{1}{1+p(S-1)}-\varepsilon\\
		(S-1)p &\geq \frac{p}{x^j_\pi} - \varepsilon \left( 1+(S-1)p+\frac{p}{x^j_\pi} + \frac{(S-1)p^2}{x^j_\pi}\right)\\
		x_\pi^j&\geq p\frac{1-\varepsilon\left(1+(S-1)p\right)}{(S-1)p+\varepsilon\left(1+(S-1)p\right)}\\
		(S-1) x_\pi^j &\geq \frac{1}{1+ \varepsilon \frac{((S-1)p+1)^2}{(S-1)p\left(1-\varepsilon - \varepsilon (S-1)p\right)}}\\
	\end{align*}
	Since \((S-1)p \leq \frac{1}{4}\), \(\left(1+(S-1)p\right)^2\leq 2\). 
	Since we also have $\varepsilon <\frac{1}{4}$, \(\left(1-\varepsilon - \varepsilon (S-1)p\right)\geq \frac{1}{2}\). 
	Therefore, \((S-1)x_\pi^j \geq \frac{1}{1+\frac{4\varepsilon}{(S-1)p}}\). 
	Finally, $\frac{4\varepsilon}{(S-1)p} \geq 1$, and \(x_\pi^j\geq \frac{p}{8\varepsilon}\). 
	Since $\sum_j x_\pi^j = 1$, we deduce that each policy $\pi$ can be $\varepsilon$-optimal for at most $\frac{8\varepsilon}{p}$ MDPs in $\{\mathcal{M}_j\}_j$. 
	By denoting $y_\pi^j = \mathds{1}_{\left\{\pi\text{ is }\varepsilon\text{-optimal in }\mathcal{M}_j\right\}}$, we have $\sum_j y_\pi^j\leq \frac{8\varepsilon}{p}$.
	
	For the remainder of the proof, we can use the proof of Theorem 9 of \citep{ChenTirinzoni23}, which we retranscribe here. Assume towards contradiction that there exists a learner that is $(\varepsilon,\delta)$-correct whose sample complexity is smaller than $1/p$ with probability larger than $7/8$ on all the instances $(\mathcal{M}_j)_j$. By writing $\mathcal{E}_1$ the event that the first $1/p$ steps from $s_1$ all transit to $s_1$; $\mathcal{E}_2$ the event that the learner uses at most $1/p$ samples; and $\mathcal{E}=\mathcal{E}_1\cap \mathcal{E}_2$; we can prove that for every $j$, writing $P_j$ the probability distribution w.r.t. $\mathcal{M}_j$, we have $P_j(\mathcal{E})\geq \frac{1}{8}$.
	
	We let $\mathcal{E}'$ be the bad event in which the policy $\hat{\pi}$ returned by the learner is not $\varepsilon$-optimal. We observe that on the event $\cE$, the distribution of the output policy $\hat{\pi}$ conditionally to $\cE$ is independent on $j$ as on the event $\cE$, the algorithm has never visited another state than $s_1$ before stopping and outputting $\hat{\pi}$. We denote by $P(\hat{\pi}|\mathcal{E})$ the common value of $P_j(\hat{\pi}|\mathcal{E})$ for all $j$, which satisfies $\int_{\hat{\pi}} P(\hat{\pi}|\mathcal{E})d\hat{\pi} = 1$. It follows that $\sum_j \int_{\hat{\pi}} P(\hat{\pi}|\mathcal{E})y_{\hat{\pi}}^jd\hat{\pi} \leq \frac{8\varepsilon}{p}$ and that there exists $j$ such that $\int_{\hat{\pi}} P_j(\hat{\pi}|\mathcal{E})y_{\hat{\pi}}^jd\hat{\pi} = \int_{\hat{\pi}} P(\hat{\pi}|\mathcal{E})y_{\hat{\pi}}^jd\hat{\pi} \leq \frac{8\varepsilon}{pA^{S-1}}$. 
	For this value of $j$, \[P_j(\mathcal{E}'|\mathcal{E}) = 1-\int_{\hat{\pi}} P_j(\hat{\pi}|\mathcal{E})y_{\hat{\pi}}^jd\hat{\pi} \geq 1- \frac{8\varepsilon}{pA^{S-1}} =\frac{1}{2}\] 
	The overall failure probability in $\mathcal{M}_j$ is thus $P_j(\mathcal{E}') \geq P_j(\mathcal{E})P_j(\mathcal{E}'|\mathcal{E}) \geq \frac{1}{16} > \delta$, which is a contradiction. 
	Therefore, for all $(\varepsilon,\delta)$-PAC algorithm there exists an instance $\cM_{j}$ for which  $P_j(\tau > {1}/{p}) \geq 1/8$, hence the expected sample complexity under this instance is larger than $1/8p$.
	
	Let us finally compute the diameter and bias span of MDP $\mathcal{M}_j$.
	For states $s_1$ and $s_S$, the only (deterministic) policy with finite traveling time from $s_1$ to $s_j$ is the policy which plays action $j_{n-1}$ in state $s_n$ for any $n\in \{2,...S\}$. As the bottleneck of exploration is the small probability to transition from $s_1$ to $s_2$, the travel time is biggest from $s_1$ to $s_2$. Therefore, \[D= \min_{\pi:\mathcal{S}\rightarrow \mathcal{A}} \bE [\min \{t>0,s_t=s_1\}|s_0=s_S,\forall t',a_{t'}=\pi(s_{t'})]=\frac{1}{p}+S-1 = S-1+\frac{A^{S-1}}{16\varepsilon}\]
	
	As for the bias span, by fixing $b(s_2)=0$, the Poisson equations \eqref{eq:poissonopti} yield for $i\in \{3,...S\}$ that $b(s_i)=(i-2)g$, and finally $b(s_1)=(S-1)g\leq S$.
	
\end{proof}
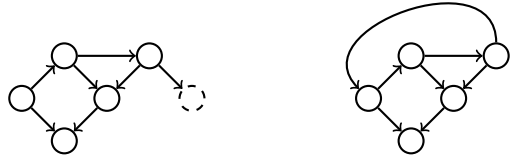
\begin{figure}[!ht]
	\centering
	\begin{tikzpicture}[node distance={8mm},thick,main/.style = {draw,circle}]
		\node[main] (1) {};
		\node[main] (2) [above right of=1] {};
		\node[main] (3) [below right of=1] {};
		\node[main] (5) [below right of=2] {};
		\node[main] (4) [above right of=5] {};
		\node[draw,dashed,circle] (g) [below right of=4] {};
		\draw[->] (1) to (2);
		\draw[->] (1) to (3);
		\draw[->] (2) to (4);
		\draw[->] (2) to (5);
		\draw[->] (4) to (5);
		\draw[->] (5) to (3);
		\draw[->] (4) to (g);
		
	\end{tikzpicture} \hspace{3em}
	\begin{tikzpicture}[node distance={8mm},thick,main/.style = {draw,circle}]
		\node[main] (1) {};
		\node[main] (2) [above right of=1] {};
		\node[main] (3) [below right of=1] {};
		\node[main] (5) [below right of=2] {};
		\node[main] (4) [above right of=5] {};
		\draw[->] (1) to (2);
		\draw[->] (1) to (3);
		\draw[->] (2) to (4);
		\draw[->] (2) to (5);
		\draw[->] (4) to (5);
		\draw[->] (5) to (3);
		\draw[->] (4) to [out=90,in=135,looseness=1.5] (1);
		
	\end{tikzpicture}		
	\caption{An SSP-MDP and its average reward transformation}
	\label{fig:modiftot}
\end{figure}
\newpage

\section{Complements for Section \ref{sect:online_algos}}

\subsection{Correctness of the stopping rule}\label{sect:app_stopping}

\begin{proof}\textit{(Theorem \ref{th:eq855p})} We first note that $g_{\hat{\pi}_n} = \overline{P}_{\hat{\pi}_n} \overline{r}_{\hat{\pi}_n} = \overline{P}_{\hat{\pi}_n}\left[ \overline{r}_{\hat{\pi}_n} + P_{\hat{\pi}_t} b_n -b_n\right]$ where we use that $\overline{P}_{\hat{\pi}_n} P_{\hat{\pi}_n} = \overline{P}_{\hat{\pi}_n}$.
	
	Since for all $s,a$, $p_{s,a}b_n \geq L_{s,a}^n(b_n;\delta)$, we have \[g_{\hat{\pi}_n} \geq \overline{P}_{\hat{\pi}_n} \left[ \overline{r}_{\hat{\pi}_n} + \left( L_{s,\hat{\pi}_n(s)}^n(b_n;\delta)\right)_s -b_n \right] = \overline{P}_{\hat{\pi}_n} \left(\max_a I^{n,\flat}_{s,a}(b_n;\delta)\right)_s\] by the definition of $\hat{\pi}_n$, and finally $g_{\hat{\pi}_n} \geq \min_s \max_a I^{n,\flat}_{s,a}(b_n;\delta)$.
	
	For $\pi$ a stationary deterministic optimal policy satisfying the optimal Poisson equation \eqref{eq:poissonopti} (that exists according to Theorem 8.4.3 and 8.4.4 of \citep{Puterman94}), \begin{align*} g &= g_\pi = \overline{P}_\pi\overline{r}_\pi = \overline{P}_\pi \left[ \overline{r}_\pi + P_\pi b_n -b_n\right]\leq \overline{P}_\pi\left[ \overline{r}_\pi + \left(U_{s,\hat{\pi}_n(s)}^n(b_n;\delta)\right)_s -b_n\right]\\
		&\leq \overline{P}_\pi \left( \max_a I^{n,\sharp}_{s,a}(b_n;\delta)\right)_s\end{align*} and finally $g\leq \max_s \max_a I^{n,\sharp}_{s,a}(b_n;\delta)$
	
	Finally, since $g$ is the gain of an optimal policy, $g\geq g_{\hat{\pi}_n}$.
\end{proof}

\begin{proof}\textit{(Theorem \ref{th:stoppingrule})}
	By Theorem \ref{th:eq855p}, we have  
	\begin{align*}
		&\mathds{P}\left[ \tau < +\infty , g_{\widehat{\pi}} < g^\star - \varepsilon \right] \\
		&\hspace{3em} \leq \mathds{P} \left[ \exists n,  g^\star - g_{\widehat{\pi}_n} > \varepsilon ,  \max_s \max_a I^{n,\sharp}_{s,a}(b_n;\delta) - \min_s \max_a I^{n,\flat}_{s,a}(b_n;\delta) \leq \varepsilon \right] \\
		& \hspace{3em} \leq \mathds{P} \left[ \exists n,  \exists s, a, p_{s,a}b_n \notin [L_{s,a}(b_n;\delta),U_{s,a}^{n}(b_n;\delta)] \right]
	\end{align*}
	Using further a union bound and the definitions of $U$ and $L$, 
	\begin{align*}
		&\mathds{P}\left[ \tau < +\infty , g_{\widehat{\pi}} < g^\star - \varepsilon \right] \\
		& \hspace{3em} \leq \mathds{P} \left[ \exists n,  \exists s, a, p_{s,a}b_n \geq U_{s,a}^{n}(b_n;\delta)) \cup \exists n,  \exists s, a, p_{s,a}b_n \leq L_{s,a}^{n}(b_n;\delta)) \right] \\
		& \hspace{3em} \leq \mathds{P}\left[ \exists n,\exists s,\exists a, \mathrm{KL}(\hat{p}_{s,a}^n,p_{s,a}) > \frac{x\left(\delta,N^n_{s,a}\right)}{N^n_{s,a}} \right] \\
		& \hspace{3em} \leq \delta\;,
	\end{align*}
	where the last inequality follows from the concentration result given in Lemma~\ref{lem:concentration}.

\end{proof}

\subsection{A sample complexity analysis}\label{app:sampling}

In this section, we present and analyze an algorithm for the generative model setting using the simplest possible sampling rule, uniform sampling, together with the adaptive stopping rule \eqref{eq:stopepsilon} for $b_n=\hat{b}_n$ such that $\hat{b}_n$ is the optimal bias function in the AR-MDP with transition probabilities given by $(\hat{p}^{n}_{s,a})_{s,a}$ (normalized so that the bias in the first state is always 0). The pseudo-code of the algorithm is given in Algorithm~\ref{alg:stoppingrule}.

\begin{remark} \label{rem:computational} The quantities $U^{n}_{s,a}$ and $L^{n}_{s,a}$ in Definition~\ref{def:ul}  are solutions to complex optimization problems and can be expensive to compute. Several modifications are possible to prevent long run times. 
First, it is possible not to check for stopping at each time step while preserving correctness. 
Second, the stopping rule can be relaxed with the following looser confidence intervals which satisfy $U_{s,a}^{n}({b}_n;\delta) \leq \widetilde{U}_{s,a}^n({b}_n;\delta)$ and ${L}_{s,a}^n(b_n;\delta) \geq \widetilde{L}_{s,a}^{n}(b_n;\delta)$ using Pinsker's inequality: 
\[\widetilde{U}_{s,a}^n({b}_n;\delta) =\hat{p}^n_{s,a} b_n + ||b_n||_\infty \sqrt{\frac{2x(\delta,N^n_{s,a})}{N^n_{s,a}}},  \ \  \widetilde{L}_{s,a}^n(b_n;\delta) =\hat{p}^n_{s,a} b_n - ||b_n||_\infty \sqrt{\frac{2x(\delta,N^n_{s,a})}{N^n_{s,a}}}\;.\]
Theorem~\ref{th:stoppingrule} (as well as our sample complexity analysis to follow) still applies for these alternative choices which are easier to compute but yield looser confidence intervals and thus in theory a larger sample complexity.
\end{remark}

\begin{algorithm}[!ht]
	\KwData{Accuracy $\varepsilon \in (0,1)$, confidence level $\delta\in(0,1)$}
	$n=0$, $b_n = 0$, $I^{n,\sharp}_{s,a}(b_n,\delta)=1$, $I^{n,\sharp}_{s,a}(b_n,\delta)=0$\;
	
	\While{$\max_s \max_a I^{n,\sharp}_{s,a}(b_n,\delta) - \min_s \max_a I^{n,\flat}_{s,a}(b_n;\delta)>\varepsilon$}{
		\For{$(s,a) \in \cS\times\cA$}{
			Sample a reward and a next state of $s,a$
		}
		$n=n+SA$\;
		
		Compute $\hat p^{n}_{s,a}$ for each $(s,a)$
		
		{Compute ${b}_n=\hat{b}_n$ the bias of the empirical MDP, with rewards $\overline{r}_{s,a}$ and transitions $\hat p^n_{s,a}$}\;
		
		Compute $I^{n,\sharp}_{s,a}(b_n,\delta)$ and $I^{n,\flat}_{s,a}(b_n,\delta)$ using $U,L$ from Definition~\ref{def:ul} (or the relaxations \eqref{eq:relaxedbounds})
	}
	
	\Return $\hat{\pi}_n = \left( \arg\max_a I^{n,\flat}_{s,a}(b_n;\delta) \right)_s$
	\caption{Uniform sampling combined with adaptive stopping}
	\label{alg:stoppingrule}
\end{algorithm}

We analyze Algorithm~\ref{alg:stoppingrule} for unichain MDPs, for which we are able to derive a simulation lemma, akin to those existing in the discounted \citep{KearnsSingh02} or SSP \citep{TarbouriechPirotta21} settings. It relates the gains and the bias in two AR-MDPs that are close enough. 

\paragraph{A simulation lemma for AR-MDPs} To introduce this result, we need the following definitions. 

\begin{defi}
	Let $\mathcal{D}_M$ be the set of policies generating a unichain Markov chain on a weakly communicating MDP $M$.
	For $M$ an MDP and $\pi\in \mathcal{D}_M$, \[\Pi_\pi = \left(\begin{array}{c|c c c}
		1 & 0 & ... & 0\\
		\hline
		1 & & &\\
		...& & I_{S-1} & \\
		1& & & \\
	\end{array} \right) - \left( \begin{array}{c c c c} 0 & P_\pi(s_1,s_2) &...& P_\pi(s_1,s_{S}) \\
		... & & & ...\\
		0 & P_\pi(s_{S},s_2) & ... & P_\pi(s_{S},s_{S})\end{array} \right) \] is the 
	matrix such that the Poisson equations \eqref{eq:poissongeng} and \eqref{eq:poissongenb} rewrite as $\Pi_\pi h_\pi = \overline{r}_\pi$ with $\overline{r}$ the average reward vector and $h_\pi=\left(g_\pi,b_\pi(s_2)-b_\pi(s_1),...b_\pi(s_S)-b_\pi(s_1)\right)$. 
	We further define 	
	\[ \delta_1 = \min_{\pi,g_\pi\neq g^\star} |g_\pi-g^\star|, \ \ \ \ \Delta_1 = \max_{\pi\in \mathcal{D}_M} ||\Pi_\pi^{-1}|| \ \ \ \ \text{and } \ \ \ \Delta_2 = \max_{\pi\in \mathcal{D}_M} ||h_\pi||_\infty\] where the norm used is the operator norm associated to the infinity norm, $||A||=\sup_{||x||_\infty = 1} ||Ax||_\infty$.
\end{defi}

We can show with results from \citep{Puterman94} that $\Pi_\pi$ is invertible for any unichain policy $\pi$, hence $\Delta_1$ is well-defined. 
Indeed, by Theorem A.7 and (A.4), there exist solutions to the Poisson equations. 
By Theorem 8.2.6, the solution is unique up to an additive vector in the kernel of $(I-P)$, which is of dimension 1 for unichain MDPs as proved in Theorem A.5. 
Therefore, since in the unichain setting solutions to $\Pi_\pi H_\pi = \overline{r}_\pi$ are exactly solutions for \eqref{eq:poissongenb}, we can conclude.

The following two lemmas can be extracted from the proofs of Lemmas 7 and 8 of \citep{BurnetasKatehakis97}. For completeness, we provide a full proof in Appendix \ref{app:samplecomplex}.

\begin{lem}[Simulation lemma]\label{lem:simlem}
	Let $p':\mathcal{S}\times \mathcal{A}\rightarrow \Sigma_\mathcal{S}$ be the transition probability matrix of a weakly communicating MDP $M'$ with the same reward function $\overline{r}$ as $M$. For a given policy $\pi$ that is unichain on $M$, if for all $(s,a)$ we have $||p'_{s,a}-p_{s,a}||_1 \leq \frac{x}{\Delta_1(\Delta_2+x)}$ then $||h_\pi-h_\pi^{M'}||_\infty \leq x$.
\end{lem}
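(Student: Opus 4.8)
The plan is to compare the two Poisson systems $\Pi_\pi h_\pi = \overline{r}_\pi$ (in $M$) and $\Pi_\pi^{M'} h_\pi^{M'} = \overline{r}_\pi$ (in $M'$), which share the same right-hand side $\overline{r}_\pi$ because the reward function is unchanged. First I would record that $\Pi_\pi$ and $\Pi_\pi^{M'}$ differ only in the block built from the transition rows: writing $E := \Pi_\pi^{M'} - \Pi_\pi$, each row of $E$ equals the corresponding row $p_{s,a} - p'_{s,a}$ (restricted to the coordinates $s_2,\dots,s_S$, with a $0$ in the first coordinate), so in the operator norm associated to $\|\cdot\|_\infty$ we have $\|E\| \le \max_{(s,a)} \|p_{s,a} - p'_{s,a}\|_1 =: \eta$. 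Both matrices are invertible (as discussed after the definition, using the unichain structure and the results of \citep{Puterman94}), with $\|\Pi_\pi^{-1}\| \le \Delta_1$ and $\|h_\pi\|_\infty \le \Delta_2$.

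Next I would do the standard perturbation manipulation. From $\Pi_\pi h_\pi = \overline{r}_\pi = \Pi_\pi^{M'} h_\pi^{M'} = (\Pi_\pi + E) h_\pi^{M'}$ we get $\Pi_\pi (h_\pi - h_\pi^{M'}) = E h_\pi^{M'}$, hence
\begin{equation*}
h_\pi - h_\pi^{M'} = \Pi_\pi^{-1} E\, h_\pi^{M'} = \Pi_\pi^{-1} E\, (h_\pi^{M'} - h_\pi) + \Pi_\pi^{-1} E\, h_\pi\;.
\end{equation*}
Taking $\|\cdot\|_\infty$ and using submultiplicativity gives $\|h_\pi - h_\pi^{M'}\|_\infty \le \Delta_1 \eta \|h_\pi - h_\pi^{M'}\|_\infty + \Delta_1 \eta \Delta_2$. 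Under the hypothesis $\eta \le \frac{x}{\Delta_1(\Delta_2 + x)}$ one checks $\Delta_1 \eta \le \frac{x}{\Delta_2 + x} < 1$, so the coefficient in front of $\|h_\pi - h_\pi^{M'}\|_\infty$ is strictly less than one and the term can be absorbed; rearranging yields
\begin{equation*}
\|h_\pi - h_\pi^{M'}\|_\infty \le \frac{\Delta_1 \eta \Delta_2}{1 - \Delta_1 \eta} \le \frac{\frac{x\Delta_2}{\Delta_2+x}}{1 - \frac{x}{\Delta_2+x}} = \frac{x \Delta_2}{\Delta_2} = x\;,
\end{equation*}
which is exactly the claim. (A small technical point I would check before the absorption step: that $h_\pi^{M'}$ is actually well-defined, i.e.\ that $\pi$ is unichain on $M'$ too, so that $\Pi_\pi^{M'}$ is invertible. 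Since $\|p'_{s,a} - p_{s,a}\|_1$ is small and $\pi$ is unichain on $M$, the chain structure is preserved for $\eta$ small enough; this is where the weakly communicating / unichain hypotheses of the statement are used, together with the discussion of invertibility given just before the lemma.)

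The main obstacle I anticipate is not the algebra but making the perturbation bound $\|\Pi_\pi^{-1}(\Pi_\pi^{M'} - \Pi_\pi)\|$ rigorous while keeping the matrix norm consistent throughout: one must be careful that $\Delta_1 = \max_{\pi \in \mathcal{D}_M}\|\Pi_\pi^{-1}\|$ uses the $\ell_\infty\to\ell_\infty$ operator norm and that $\|E\|$ in this same norm is bounded by the row-wise $\ell_1$ distance of the kernels (this is why the hypothesis is stated in $\ell_1$). Once the norms are aligned, the contraction argument closes cleanly. The only other subtlety worth flagging is the identification, in the unichain case, of solutions of $\Pi_\pi H = \overline{r}_\pi$ with solutions of the Poisson equations~\eqref{eq:poissongeng}--\eqref{eq:poissongenb} normalized at $s_1$, which has already been argued in the paragraph preceding the lemma and can simply be invoked.
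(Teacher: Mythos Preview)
Your proposal is correct and follows essentially the same perturbation argument as the paper. The only cosmetic difference is that the paper inverts $\Pi_\pi'$ and bounds $\|\Pi_\pi'^{-1}\| \leq \|\Pi_\pi^{-1}\|/(1-\|\Pi_\pi^{-1}\|\,\eta)$ via a Neumann-type step, whereas you invert $\Pi_\pi$ and absorb the self-referential term; both routes yield the identical bound $\dfrac{\Delta_1\Delta_2\,\eta}{1-\Delta_1\eta}$ and then set $\eta = \dfrac{x}{\Delta_1(\Delta_2+x)}$.
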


\begin{lem}\label{lem:simlemopti}
	Suppose $M$ \textbf{unichain}. Let $p':\mathcal{S}\times \mathcal{A}\rightarrow \Sigma_\mathcal{S}$ be the transition probability matrix of a unichain MDP $M'$ with the same reward function $\overline{r}$ as $M$. If for all $s,a$ we have $||p'_{s,a}-p_{s,a}||_1 \leq \frac{x}{\Delta_1(\Delta_2+x)}$ where $x\leq \delta_1/2$, then $||b^\star-b^{\star,M'}||_\infty \leq x$.
\end{lem}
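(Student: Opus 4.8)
The plan is to derive Lemma~\ref{lem:simlemopti} from the policy-wise simulation lemma (Lemma~\ref{lem:simlem}) together with a stability argument for the set of gain-optimal policies. I would let $\pi$ be the (unique) optimal policy of $M$ and $\pi'$ the (unique) optimal policy of $M'$; both exist because $M$ and $M'$ are weakly communicating, and both belong to $\mathcal{D}_M$ since on a unichain MDP every deterministic policy generates a unichain chain (likewise every deterministic policy is unichain on $M'$, so the vectors $h_\pi^{M'}$, $h_{\pi'}^{M'}$ appearing below are well defined). With the normalization fixing the bias to $0$ in state $s_1$, the bias block of $h_\pi$ is exactly $b^\star$ and the bias block of $h_{\pi'}^{M'}$ is exactly $b^{\star,M'}$, so that $\|b^\star-b^{\star,M'}\|_\infty$ is read off from a comparison of these blocks.

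First I would apply Lemma~\ref{lem:simlem} to $\pi$ and to $\pi'$: the hypothesis $\|p'_{s,a}-p_{s,a}\|_1 \le x/(\Delta_1(\Delta_2+x))$ is precisely the one required there, so $\|h_\pi - h_\pi^{M'}\|_\infty \le x$ and $\|h_{\pi'} - h_{\pi'}^{M'}\|_\infty \le x$. Looking at the gain coordinates, $|g^\star - g^{M'}_\pi| \le x$ and $|g_{\pi'} - g^{\star,M'}| \le x$; combining these with the trivial inequalities $g^{M'}_\pi \le g^{\star,M'}$ and $g_{\pi'} \le g^\star$ gives
\[
 g_{\pi'} \;\ge\; g^{\star,M'} - x \;\ge\; g^{M'}_\pi - x \;\ge\; g^\star - 2x .
\]

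The crux is then to use the gain gap: since $x\le\delta_1/2$ the display yields $g^\star - g_{\pi'}\le\delta_1$, and as by definition of $\delta_1$ any policy with gain different from $g^\star$ has gain at most $g^\star-\delta_1$, we conclude $g_{\pi'}=g^\star$; by uniqueness of the optimal policy of $M$ this forces $\pi'=\pi$. Consequently $b^{\star,M'}=b^{M'}_{\pi'}=b^{M'}_{\pi}$, so $\|b^\star-b^{\star,M'}\|_\infty$ equals the $\ell_\infty$-distance between the bias blocks of $h_\pi$ and $h_\pi^{M'}$; together with $b^\star(s_1)=b^{\star,M'}(s_1)=0$ this is at most $\|h_\pi-h_\pi^{M'}\|_\infty\le x$ from the first step, which is the claim.

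I expect the policy-stability step to be the only real obstacle: it is where the assumption $x\le\delta_1/2$ and the uniqueness of the optimal policy (in our setting enforced by the reward randomization of Algorithm~\ref{alg:BPI_H}) are genuinely used, and one must be mindful at the boundary $x=\delta_1/2$ (either treating the inequality as strict or using that the bound of Lemma~\ref{lem:simlem} can be made so). Everything else is bookkeeping on top of Lemma~\ref{lem:simlem}, once the normalization convention is pinned down so that distances between bias vectors coincide with distances between the bias blocks of the corresponding $h$-vectors.
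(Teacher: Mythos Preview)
Your argument is essentially the paper's argument, with the same two-step structure: first use Lemma~\ref{lem:simlem} on gains to transfer optimality across the two MDPs, then apply Lemma~\ref{lem:simlem} once more to the transferred policy to bound the bias difference. The paper organizes it a bit more economically: it picks $\pi$ to be an optimal policy \emph{in $M'$} (not in $M$), shows via the chain $g_\pi > g_\pi^{M'}-x \ge g_{\pi'}^{M'}-x > g_{\pi'}-2x$ (for every $\pi'$) that $\pi$ is also gain-optimal in $M$, and then applies Lemma~\ref{lem:simlem} to this single $\pi$. This avoids naming the $M$-optimal policy separately and the detour through ``$\pi'=\pi$''. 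Your boundary concern at $x=\delta_1/2$ is handled in the paper exactly as you anticipate: the proof of Lemma~\ref{lem:simlem} actually yields strict inequalities, so $g^\star-g_\pi<\delta_1$ and the gap definition forces equality.

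One point to correct: your justification for uniqueness of the optimal policy via the reward randomization of Algorithm~\ref{alg:BPI_H} is misplaced. Lemma~\ref{lem:simlemopti} is used in the analysis of Algorithm~\ref{alg:stoppingrule} (Theorem~\ref{th:sc_stopping}), which does \emph{not} perturb the rewards; Algorithm~\ref{alg:BPI_H} is an entirely separate pipeline. The paper's proof does not invoke uniqueness explicitly; it implicitly uses that once $\pi$ is shown to be optimal in $M$, its bias block equals $b^\star$ (the paper treats the optimal policy as unique as a standing convention, see the ``(assumed unique) optimal policy'' remark when $b^\star$ is introduced). So your instinct that a uniqueness-type assumption is at play is right, but tie it to the paper's standing convention rather than to randomization that is not present here.
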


It is necessary to suppose $M$ unichain, because without this assumption, there is no guarantee that there exists a policy $\pi$ that is optimal in $M'$ and unichain in $M$. 

\paragraph{Sample complexity bound} Using these results together with our previous concentration result for the transitions (Lemma~\ref{lem:concentration}) permits to prove the following high-probability bound on the sample complexity of Algorithm \ref{alg:stoppingrule}, which features the quantity  $\Gamma_\mathcal{M}:=\Delta_1(\Delta_2+\delta_1/2)$.

\begin{theo}\label{th:sc_stopping}
	For $\varepsilon\leq \delta_1$, with probability larger than $1-\delta$, the sample complexity of Algorithm \ref{alg:stoppingrule} on a unichain MDP satisfies
	\[\tau = \widetilde{\cO}\left(\frac{SA((H+1)\vee \Gamma_\mathcal{M})^2}{\varepsilon^2}\left(\log\left(\frac{1}{\delta}\right)+S\right)\right)\;.\]
\end{theo}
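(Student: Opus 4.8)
The plan is to show that once enough samples have been collected from every $(s,a)$ pair, the stopping condition in \eqref{eq:stopepsilon} must trigger, and then to translate this ``enough samples'' threshold into a sample complexity bound. First I would work on the event $\cE = \{\forall s,a, \forall n, \KL(\hat p_{s,a}^n, p_{s,a}) \leq x(\delta, N_{s,a}^n)/N_{s,a}^n\}$, which holds with probability $\geq 1-\delta$ by Lemma~\ref{lem:concentration}; this is the same event used in the proof of Theorem~\ref{th:stoppingrule}. Since Algorithm~\ref{alg:stoppingrule} samples uniformly, after $n$ total samples every pair has been visited $N_{s,a}^n = n/(SA)$ times, so all the confidence widths are controlled simultaneously. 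Via Pinsker (as in Remark~\ref{rem:computational}) the relaxed bounds $\widetilde U, \widetilde L$ give $U_{s,a}^n(b_n;\delta) - L_{s,a}^n(b_n;\delta) \leq 2\|b_n\|_\infty \sqrt{2x(\delta, N_{s,a}^n)/N_{s,a}^n}$, hence
\[
\max_s\max_a I^{n,\sharp}_{s,a}(b_n;\delta) - \min_s\max_a I^{n,\flat}_{s,a}(b_n;\delta) \;\leq\; 2\|b_n\|_\infty \sqrt{\frac{2x(\delta, n/(SA))}{n/(SA)}} \;+\; \big(\text{gap between } I^{n,\sharp} \text{ and } I^{n,\flat} \text{ for exact } p_{s,a}b_n\big),
\]
but actually the cleaner route is: bound $\max_s\max_a I^{n,\sharp}_{s,a} - \min_s\max_a I^{n,\flat}_{s,a}$ directly by $\max_{s,a}(U_{s,a}^n - L_{s,a}^n) + \big[\max_s\max_a(\bar r_{s,a}+p_{s,a}b_n-b_n(s)) - \min_s\max_a(\bar r_{s,a}+p_{s,a}b_n-b_n(s))\big]$. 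The second bracket is the span of the Bellman image of $b_n$ in the \emph{true} MDP.

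The crux is then controlling $\|b_n\|_\infty$ and this ``true Bellman span of $b_n$'' term. Here I would invoke the simulation lemma for AR-MDPs: on $\cE$, once $N_{s,a}^n$ is large enough that $\|\hat p_{s,a}^n - p_{s,a}\|_1 \leq \frac{x}{\Delta_1(\Delta_2+x)}$ for $x = \min(\varepsilon, \delta_1/2)$ (say $x$ a small constant multiple of $\varepsilon$, which is $\leq \delta_1$ by hypothesis), Lemma~\ref{lem:simlemopti} gives $\|b^\star - \hat b_n\|_\infty \leq x$, so $\|b_n\|_\infty = \|\hat b_n\|_\infty \leq H + 1 + x$ (using the normalization $\hat b_n(s_1)=0$, so $\|\hat b_n\|_\infty \leq \mathrm{span}(\hat b_n) \leq H + 2x$; the $H+1$ in the statement absorbs this). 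Moreover, since $\hat b_n$ is within $x$ of $b^\star$ and $b^\star$ makes the true Bellman image have span exactly $0$ (it equals the constant $g^\star$ by \eqref{eq:poissonopti}), a perturbation argument shows the true Bellman image of $\hat b_n$ has span $O(x + \max_{s,a}\|\hat p_{s,a}^n - p_{s,a}\|_1 \cdot \|\hat b_n\|_\infty) = O(\varepsilon)$. Combining, once $n/(SA)$ exceeds the threshold $N^\star$ at which both (i) $\|\hat p^n - p\|_1 \leq \varepsilon/(c\,\Gamma_\cM)$ and (ii) $2(H+1+\varepsilon)\sqrt{2 x(\delta, n/(SA))/(n/(SA))} \leq \varepsilon/2$, the stopping criterion is met.

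Finally I would solve for $N^\star$. Condition (i) requires $N_{s,a}^n = \widetilde\Omega(\Gamma_\cM^2 (\log(1/\delta)+S)/\varepsilon^2)$ since $\|\hat p^n - p\|_1 \leq \sqrt{2x(\delta,N)/N}$ on $\cE$ by Pinsker and $x(\delta,N) = \log(SA/\delta) + (S-1)\log(e(1+N/(S-1))) = \widetilde O(\log(1/\delta)+S)$. Condition (ii) requires $N_{s,a}^n = \widetilde\Omega((H+1)^2(\log(1/\delta)+S)/\varepsilon^2)$. Taking the max and multiplying by $SA$ (the cost per round of visits) gives $\tau = \widetilde O\big(SA((H+1)\vee\Gamma_\cM)^2(\log(1/\delta)+S)/\varepsilon^2\big)$, as claimed; the $\widetilde O$ swallows the $\log$ that comes from inverting $N \mapsto x(\delta,N)/N$ (handled by the technical lemma, Lemma~\ref{lem:tech}, as in the proof of Theorem~\ref{thm:sc-diameter}). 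The main obstacle I anticipate is the perturbation argument bounding the true Bellman span of $\hat b_n$: one must be careful that $\hat\pi_n$ (the argmax policy) is unichain on $M$ so that $g_{\hat\pi_n}$ is well-defined and Theorem~\ref{th:eq855p} applies, and that the $\arg\max$ over $a$ does not amplify the $O(\varepsilon)$ error — this is where the hypothesis $\varepsilon \leq \delta_1$ and the definition $\Gamma_\cM = \Delta_1(\Delta_2+\delta_1/2)$ are really used, exactly as in Lemma~\ref{lem:simlemopti}.
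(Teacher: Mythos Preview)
Your plan is correct and essentially mirrors the paper's proof: work on the concentration event $\cE$, use Lemma~\ref{lem:simlemopti} to get $\|\hat b_n - b^\star\|_\infty = O(\varepsilon)$ once $\|\hat p^n_{s,a} - p_{s,a}\|_1 \lesssim \varepsilon/\Gamma_\cM$, combine this with the Pinsker-based confidence width $\|\hat b_n\|_\infty\sqrt{2x(\delta,N)/N}$ to force the stopping criterion, and invert via Lemma~\ref{lem:tech}. The paper organizes the algebra slightly differently---it bounds each $I^{n,\sharp}_{s,a}$ and $I^{n,\flat}_{s,a}$ directly against $\overline r_{s,a}+p_{s,a}b^\star-b^\star(s)$ (whose $\max_a$-span is exactly zero by \eqref{eq:poissonopti}) instead of passing through the Bellman image of $\hat b_n$---and your final worry about $\hat\pi_n$ being unichain is unnecessary, since the sample-complexity argument only needs the scalar stopping inequality to hold and never invokes Theorem~\ref{th:eq855p}.
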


Just like \algo, Algorithm~\ref{alg:stoppingrule} is an $(\varepsilon,\delta)$-PAC algorithm using a generative model that does not require any prior knowledge on the MDP, as its stopping rule is fully data-dependent. Their sample complexity in the regime of small $\varepsilon$ and small $\delta$ both scale in  $({SAc(\cM)}/{\varepsilon^2})\log(1/\delta)$ but for different complexity quantities: $c_1(\cM) = D$ for Algorithm~\ref{alg:nopriorknowledge} while 
$c_2(\cM) =((H+1)\vee \Gamma_\mathcal{M})^2$ for Algorithm~\ref{alg:stoppingrule}. We know from the lower bound that the latter has to be larger for some MDPs, but so far we did not manage to quantify their difference (even if we suspect that $c_2$ can be much larger than $c_1$). Besides this, we hope that the sample complexity of Algorithm~\ref{alg:stoppingrule} can be significantly reduced by the use of smarter (online) sampling rules.


\subsection{Simulation lemmas}\label{app:samplecomplex}

\begin{proof}\textit{(Lemma \ref{lem:simlem})}
	First, let us fix any policy $\pi$ and $y< \frac{1}{||\Pi_\pi^{-1}||}$ such that \(\max_{s,a} ||p_{s,a}-p_{s,a}'||_1\leq y\). We thus have \(\Pi_\pi h_\pi = \Pi_\pi' h_\pi^{M'}\). $\Pi_\pi-\Pi_\pi'=P_{\pi}'-P_\pi$, and therefore \(||\Pi_\pi-\Pi_\pi'|| <  y \).
	
	\begin{align*}
		h_\pi(\Pi_\pi-\Pi_\pi') &= \Pi_\pi'(h_\pi^{M'}-h_\pi)\\
		||\Pi_\pi'^{-1}||\cdot||h_\pi||\cdot y &>||h_\pi^{M'}-h_\pi||_\infty
	\end{align*}
	Moreover, \begin{align*}
		||\Pi_\pi'^{-1}||&\leq ||\Pi_\pi^{-1}||\cdot ||\Pi_\pi\Pi_\pi'^{-1}||-||\Pi_\pi^{-1}||+||\Pi_\pi^{-1}||\\
		||\Pi_\pi'^{-1}||&\leq ||\Pi_\pi^{-1}||\cdot ||\Pi_\pi\Pi_\pi'^{-1}-I||+||\Pi_\pi^{-1}||\\
		||\Pi_\pi'^{-1}||&\leq ||\Pi_\pi^{-1}||\cdot||\Pi_\pi'^{-1}||\cdot y + ||\Pi_\pi^{-1}|| \\
		||\Pi_\pi'^{-1}||&\leq \frac{||\Pi_\pi^{-1}||}{1-||\Pi_\pi^{-1}||\cdot y}
	\end{align*} since $||\Pi_\pi^{-1}||\cdot y <1$.
	
	And therefore, \[||h_\pi-h^{M'}_\pi||< \frac{||\Pi_\pi^{-1}||\cdot||h_\pi||\cdot y}{1-||\Pi_\pi^{-1}||\cdot y}\leq \frac{\Delta_1\Delta_2y}{1-\Delta_1y}\] 
	Therefore, for any $x$, taking $y=\frac{x}{\Delta_1\Delta_2+x \Delta_1}$ gives the result.
\end{proof}

\begin{proof}\textit{(Lemma \ref{lem:simlemopti})}
	Fix $x < \delta_1/2$, and let $\pi$ be an optimal policy in $M'$. 
	For any policy $\pi'$, using Lemma \ref{lem:simlem} successively with policy $\pi$ and policy $\pi'$ yields $g_\pi > g_\pi'-x \geq g_{\pi'}'-x > g_{\pi'} - 2x$ and finally $g_\pi > g_{\pi'}-\delta_1$, which implies by definition of $\delta_1$ that $g_\pi \geq g_{\pi'}$, and therefore $\pi$ is optimal in $M$. 
	Hence applying Lemma \ref{lem:simlem} to $\pi$ yields the result.
\end{proof}

\subsection{Proof of Theorem~\ref{th:sc_stopping}}

Fix a unichain MDP. We prove in Theorem~\ref{lem:samplecomplexity} below an explicit upper bound on the sample complexity of Algorithm~\ref{alg:stoppingrule}. For $\varepsilon \leq \delta_1$, we deduce from it that 
\[\tau = \widetilde{\cO}\left(\frac{SA((H+1)\vee \Gamma_\mathcal{M})^2}{\varepsilon^2}\left(\log\left(\frac{1}{\delta}\right)+S\right)\right)\;,\]
as claimed in Theorem~\ref{th:sc_stopping}.

\begin{theo}\label{lem:samplecomplexity}
 With probability larger than $1-\delta$, the sample complexity of Algorithm~\ref{alg:stoppingrule} is smaller than \[SA(S-1) \frac{y}{C}\left(1+\frac{2C}{y}\log\left(\frac{y+2}{C}\right)\right)\]
	where $y=1+\frac{\ln SA/\delta}{S-1}$ and \(C= \frac{1}{288}\frac{\min(\delta_1,\varepsilon)^2}{\max(H+1,\Gamma_{\cM})^2}\).
\end{theo}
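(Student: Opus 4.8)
The $(\varepsilon,\delta)$-PAC guarantee is already Theorem~\ref{th:stoppingrule}, so the plan is only to bound the stopping time on the favourable event
\[\cE = \bigl(\forall s,a,\ \forall n\geq 1,\ N^n_{s,a}\,\KL(\hat p^n_{s,a},p_{s,a}) \leq x(\delta,N^n_{s,a})\bigr),\]
which has probability at least $1-\delta$ by Lemma~\ref{lem:concentration} and absorbs all the allowed failure probability. I would work throughout on $\cE$. Two elementary observations set the stage: since uniform sampling visits every pair once per round, after $k$ rounds $n=kSA$ and $N^n_{s,a}=k$ for every $(s,a)$; and by Pinsker's inequality every $p'$ in the ball $\{p':k\,\KL(\hat p^n_{s,a},p')\le x(\delta,k)\}$ — in particular $p_{s,a}$ itself — obeys $\|p'-\hat p^n_{s,a}\|_1\le\beta_k$, where $\beta_k:=B(k,\delta)=\sqrt{2x(\delta,k)/k}$.

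\noindent\textbf{Step 1: the stopping width is controlled by $\|\hat b_n\|_\infty$.} I would first prove that on $\cE$, for every round $k\ge1$,
\[W_n := \max_s\max_a I^{n,\sharp}_{s,a}(\hat b_n;\delta)-\min_s\max_a I^{n,\flat}_{s,a}(\hat b_n;\delta)\ \le\ 2\beta_k\|\hat b_n\|_\infty .\]
Fix $(s,a)$: any feasible $p'$ satisfies $|p'\hat b_n-\hat p^n_{s,a}\hat b_n|\le\|p'-\hat p^n_{s,a}\|_1\|\hat b_n\|_\infty\le\beta_k\|\hat b_n\|_\infty$, hence $U^n_{s,a}(\hat b_n;\delta)\le\hat p^n_{s,a}\hat b_n+\beta_k\|\hat b_n\|_\infty$ and $L^n_{s,a}(\hat b_n;\delta)\ge\hat p^n_{s,a}\hat b_n-\beta_k\|\hat b_n\|_\infty$ (these hold with equality for the relaxed bounds of \eqref{eq:relaxedbounds}, so the argument is identical there). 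By construction $\hat b_n$ solves the optimal Poisson equation \eqref{eq:poissonopti} of the empirical MDP, i.e.\ $\max_a(\overline r_{s,a}+\hat p^n_{s,a}\hat b_n)-\hat b_n(s)=\hat g_n$ for all $s$, where $\hat g_n$ is the empirical optimal gain. Substituting the two inequalities into the definitions of $I^{n,\sharp}$ and $I^{n,\flat}$ gives $\max_s\max_a I^{n,\sharp}_{s,a}(\hat b_n;\delta)\le\hat g_n+\beta_k\|\hat b_n\|_\infty$ and $\min_s\max_a I^{n,\flat}_{s,a}(\hat b_n;\delta)\ge\hat g_n-\beta_k\|\hat b_n\|_\infty$; subtracting yields the claim. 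Consequently the algorithm has stopped by the first round at which $2\beta_k\|\hat b_n\|_\infty\le\varepsilon$.

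\noindent\textbf{Step 2: $\|\hat b_n\|_\infty$ is eventually at most $H+1$, via the AR-MDP simulation lemma.} Normalising $b^\star$ and $\hat b_n$ to vanish at the first state (as $\hat b_n$ is defined) gives $\|b^\star\|_\infty\le H$. On $\cE$, $\|\hat p^n_{s,a}-p_{s,a}\|_1\le\beta_k$ for all $(s,a)$, so once $\beta_k\le\frac{\delta_1/2}{\Delta_1(\Delta_2+\delta_1/2)}=\frac{\delta_1}{2\Gamma_{\cM}}$, Lemma~\ref{lem:simlemopti} (with $x=\delta_1/2$) gives $\|\hat b_n-b^\star\|_\infty\le\delta_1/2\le1$ — using that rewards lie in $[0,1]$, hence $\delta_1\le1$ — and therefore $\|\hat b_n\|_\infty\le H+1$. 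I expect this to be the main obstacle: Lemma~\ref{lem:simlemopti}, and the mere existence of a constant $\hat g_n$ used in Step~1, require the empirical MDP to be unichain, yet a small perturbation of a unichain kernel need not be unichain. I would address this by noting that the empirical transition support is contained in that of $M$, so that on $\cE$ and once every transition of $M$ has been observed (which happens for $k$ large enough) $\hat M_n$ inherits the communication structure of $M$; alternatively one can mix the empirical kernel with a vanishing amount of uniform noise, which leaves all bounds unchanged.

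\noindent\textbf{Step 3: solving for the number of rounds.} Combining Steps 1 and 2, on $\cE$ the algorithm has stopped after any round $k$ for which $\beta_k\le\sqrt{2C}$, $C=\tfrac{1}{288}\tfrac{\min(\delta_1,\varepsilon)^2}{\max(H+1,\Gamma_{\cM})^2}$: indeed $\sqrt{2C}=\tfrac{1}{12}\tfrac{\min(\delta_1,\varepsilon)}{\max(H+1,\Gamma_{\cM})}$ is at most both $\tfrac{\delta_1}{2\Gamma_{\cM}}$ (so Step~2 applies) and $\tfrac{\varepsilon}{2(H+1)}$ (so $W_n\le2\beta_k\|\hat b_n\|_\infty\le2\beta_k(H+1)\le\varepsilon$), the constant $288$ absorbing all remaining slack. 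The condition $\beta_k\le\sqrt{2C}$ is equivalent to $x(\delta,k)\le Ck$, so the number of rounds is at most $k^\star:=\inf\{k:x(\delta,k)\le Ck\}$ and $\tau\le SA\,k^\star$. Finally I would bound $k^\star$ by a routine application of Lemma~\ref{lem:tech}, exactly as in the proof of Theorem~\ref{thm:sc-diameter}, applied to $\log(SA/\delta)+(S-1)+(S-1)\log\!\bigl(1+\tfrac{k}{S-1}\bigr)\le Ck$ (whose constant term is $(S-1)y$ with $y=1+\tfrac{\ln(SA/\delta)}{S-1}$); this yields $k^\star\le(S-1)\tfrac{y}{C}\bigl(1+\tfrac{2C}{y}\log\tfrac{y+2}{C}\bigr)$, and multiplying by $SA$ gives the stated bound on $\tau$.
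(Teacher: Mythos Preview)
Your argument is correct and reaches exactly the same terminal inequality as the paper, $C\,\tfrac{N_t}{S-1}-\log\bigl(1+\tfrac{N_t}{S-1}\bigr)\ge y$, after which both proofs invoke the same inversion (Lemma~\ref{lem:tech}). The route, however, is genuinely different. The paper introduces a free parameter $\xi\in(0,1/6)$ and the auxiliary events $\cE_1(\xi,t)=\{\|\hat b_t-b^\star\|_\infty<\xi\varepsilon\}$ and $\cE_2(\xi,t)$, then compares $I^{n,\sharp}$ and $I^{n,\flat}$ to the \emph{true} Poisson equation $\overline r_{s,a}+p_{s,a}b^\star-b^\star(s)$ (whose span is zero) via several triangle inequalities, and finally optimises over $\xi$ (taking $\xi=1/12$, which is where the constant $288$ comes from). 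You instead exploit that $\hat b_n$ already satisfies the \emph{empirical} Poisson equation with a constant $\hat g_n$, which collapses the whole comparison to the single bound $W_n\le 2\beta_k\|\hat b_n\|_\infty$ with no tunable parameter. This is shorter and more transparent; the price is that your Step~1 \emph{requires} the empirical MDP to be weakly communicating (so that $\hat g_n$ is a constant), whereas the paper's analogous step does not. But this is not a real loss: both proofs need Lemma~\ref{lem:simlemopti} (you in Step~2, the paper to obtain $\cE\subseteq\cE_1$), and that lemma already assumes the empirical MDP is unichain --- an assumption the paper uses tacitly while you flag it explicitly and propose workarounds.
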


\paragraph{Proof of Theorem~\ref{lem:samplecomplexity}}
We write $N_t=\left\lfloor \frac{t}{SA}\right\rfloor$ to denote the number of visits in each state action pair at time step $t$. We denote by $b$ the optimal bias function (normalized to be equal to zero in the first state). We set \(\xi \in (0,\frac{1}{6})\) to be determined later and  define the three events

\begin{eqnarray*}
 \mathcal{E}&=&\left\{ \forall t,\forall s,a, ||\hat{p}^t_{s,a}-p_{s,a}||_1 \leq \sqrt{\frac{2x(\delta,N_t)}{N_t}}\right\} \\
 \mathcal{E}_1(\xi,t) &= &\left\{ ||\hat{b}_{t} - b||_\infty < \xi\varepsilon\right\}\\
 \mathcal{E}_2(\xi,t) &=& \left\{ \forall s,a,  ||\hat{b}_t||_\infty \sqrt{2\frac{x(\delta,N_t)}{N_t}} <\min\left(  \left(\frac{1}{4}-\frac{3}{2}\xi\right)\varepsilon, ||\hat{b}_t||_\infty\right)\right\}
\end{eqnarray*}
Using Lemma~\ref{lem:concentration}, we know that $\bP(\cE)\geq 1-\delta$.

\textbf{We first prove that, if $\bm{\cE\cap \cE_1(\xi,t)\cap \cE_2(\xi,t)}$ is satisfied for a certain time step $\bm t$, then the stopping condition \eqref{eq:stopepsilon} is satisfied at this time step.} To this end, let us assume that $\cE\cap \cE_1(\xi,t)\cap \cE_2(\xi,t)$ holds for a fixed $t$ and $\xi$. We recall the definition of the confidence bounds $U$ and $L$ from Definition~\ref{def:ul} and their relaxations $\widetilde{U}$ and $\widetilde{L}$ defined in \eqref{eq:relaxedbounds}. For all $(s,a)$, we have 
\begin{align*}
		\overline{r}_{s,a}+U_{s,a}^{t}(\hat{b}_t;\delta)-\hat{b}_t(s) & \leq \overline{r}_{s,a}+\widetilde{U}_{s,a}^{t}(\hat{b}_t;\delta)-\hat{b}_t(s) \\
		&= \overline{r}_{s,a} + \hat{p}^t_{s,a}\hat{b}_t +\sqrt{\frac{2x(\delta,N^t_{s,a})}{N^t_{s,a}}}||\hat{b}_t||_\infty -\hat{b}_t(s) \\
		&\leq \overline{r}_{s,a} + p_{s,a}b  +(\hat{p}_{s,a}^t -p_{s,a})b + \hat{p}^t_{s,a}(\hat{b}_t-b)\\
		&\hspace{2em} +\sqrt{\frac{2x(\delta,N^t_{s,a})}{N^t_{s,a}}}||\hat{b}_t||_\infty -\hat{b}_t(s)+b(s)-b(s) \\
		&\leq \overline{r}_{s,a}+p_{s,a}b-b(s) +||b-\hat{b}_t+\hat{b}_t||_\infty \sqrt{\frac{2x(\delta,N^t_{s,a})}{N^t_{s,a}}} \\
		&\hspace{2em} + ||\hat{b}_t-b||_\infty + \sqrt{\frac{2x(\delta,N^t_{s,a})}{N^t_{s,a}}} ||\hat{b}_t||_\infty+||\hat{b}_t-b||_\infty \\
		&\leq \overline{r}_{s,a}+p_{s,a}b-b(s) + \xi \varepsilon + \left(\frac{1}{4}-\frac{3}{2}\xi\right)\varepsilon +\xi\varepsilon +\left(\frac{1}{4}-\frac{3}{2}\xi\right)\varepsilon + \xi\varepsilon\\
		&\leq \overline{r}_{s,a}+p_{s,a}b-b(s)+\frac{1}{2}\varepsilon
	\end{align*}
	Similarly, we can prove that 
	\[\overline{r}_{s,a}+L_{s,a}^{t}-\hat{b}_t(s) \leq \overline{r}_{s,a} +p_{s,a}b-b(s)-\frac{1}{2}\varepsilon.\] 
	Finally, since $b$ is a solution to the optimal Poisson equation \eqref{eq:poissonopti}, we know that 
	\[\max_s \max_a \left(\overline{r}_{s,a}+p_{s,a}b-b(s)\right) - \min_s \max_a \left(\overline{r}_{s,a}+p_{s,a}b-b(s)\right)=0\]
	It follows that 
	\[\max_s  \max_a\left( \overline{r}_{s,a} +U_{s,a}^{t}(\hat{b}_t;\delta)-\hat{b}_t(s) \right) - \min_s \max_a \left(\overline{r}_{s,a}+L_{s,a}^{t}(\hat{b}_t;\delta)-\hat{b}_t(s)\right) \leq \varepsilon\]
	and
	the stopping condition is met. 
	
	\smallskip
	
 \textbf{Then, we establish a sufficient condition on $\xi$ and $\bm t$ to have $\bm{\cE \subseteq \cE_1(\xi,t)}$.}

	Assume that the event $\cE$ holds. Let $\xi' = \min(\xi\varepsilon,\delta_1/2)$. From Lemma~\ref{lem:simlemopti}, if for all $s,a$ we have $||\hat{p}_t(s,a)-p(s,a)||_1 \leq\frac{\xi'}{\Gamma_\mathcal{M}}$, then $||b-\hat{b}_t||_\infty \leq \xi'$, which implies $\cE_1(\xi,t)$. Hence, on $\cE$, a sufficient condition for $\cE_1(\xi,t)$ to hold is 
	\[\sqrt{\frac{2x(\delta,N_t)}{N_t}} < \frac{\xi'}{\Gamma_\mathcal{M}}.\]
	Introducing $c(\xi)=\frac{1}{2}\left( \frac{\xi'}{\Gamma_\mathcal{M}}\right)^2$, this is equivalent to 
	\[		\log(SA/\delta) + (S-1)\left( 1+\log(1+N_t/(S-1))\right) \leq  c(\xi)N_t \]
	and finally to
		\begin{equation}
		c(\xi)\frac{N_t}{S-1} - \log\left(1+\frac{N_t}{S-1}\right) \geq 1+\frac{\log(SA/\delta)}{S-1} \;.\label{eq:suff_cdt_1}
	\end{equation}

	\textbf{Next, we establish a sufficient condition on $\xi$ and $\bm{t}$ to have $\bm{\cE_1(\xi,t)\subseteq \cE_2(\xi,t)}$.} We first remark that when $\cE_1(\xi,t)$ holds 
	\begin{eqnarray*}
\|\hat{b}_t\|_{\infty}\sqrt{\frac{2x(\delta,N_t)}{N_t}} & \leq & \left(\|\hat{b}_t - b\|_{\infty} + \|b\|_{\infty}\right)\sqrt{\frac{2x(\delta,N_t)}{N_t}}\\
& \leq & \left(\xi\varepsilon + H\right)\sqrt{\frac{2x(\delta,N_t)}{N_t}}
	\end{eqnarray*}	
	Therefore, a sufficient condition for $\cE_{2}(\xi,t)$ to hold is 
	\[\sqrt{\frac{2x(\delta,N_t)}{N_t}} < \min\left( \frac{1-6\xi}{4(H + \xi\varepsilon)}\varepsilon,1\right)\]
	Introducing $c'(\xi) =\min\left(\frac{1}{32}\left( \frac{1-6\xi}{H +\xi\varepsilon}\right)^2 \varepsilon ^2,\frac{1}{2}\right)$, this is equivalent to 
	\[\log(SA/\delta)/(S-1) + 1 + \log(1+N_t/(S-1)) < c'(\xi) N_t/(S-1)\]
	and finally to 
		\begin{equation} c'(\xi)\frac{N_t}{S-1} -\log\left(1+\frac{N_t}{S-1}\right) > 1+\frac{\log(SA/\delta)}{S-1}\;.\label{eq:suff_cdt_2}\end{equation}

	\textbf{Finally, we put things together.} The conditions \eqref{eq:suff_cdt_1} and \eqref{eq:suff_cdt_2} are of similar form. Defining 
	$C(\xi)=\min(c(\xi),c'(\xi))$, for any $\xi \in (0,1/6)$ the condition  
	\begin{equation}\label{eq:cdt_xi} C(\xi)\frac{N_t}{S-1} -\log\left(1+\frac{N_t}{S-1}\right) > 1+\frac{\log(SA/\delta)}{S-1}\end{equation}
	is a sufficient condition on $t$ to have $\cE \subseteq \cE\cap\cE_{1}(\xi,t)\cap \cE_{2}(\xi,t)$. If follows that for $t$ satisfying \eqref{eq:cdt_xi}, the algorithm has stopped before time $t$, with probability larger than $1-\delta$. 
	
	We observe that the larger $C(\xi)$, the less stringent the condition is on $t$, but finding the value of $\xi \in (0,1/6)$ that maximizes $C(\xi)$ leads to tedious calculations. 
	Instead we go for finding a lower bound on $C(1/12)$, denoted by $C$, which also provides a valid sufficient condition on $t$:
	\[C\frac{N_t}{S-1} -\log\left(1+\frac{N_t}{S-1}\right) > 1+\frac{\log(SA/\delta)}{S-1}.\]
	Using Lemma~\ref{lem:tech}, letting $y = 1+\frac{\log(SA/\delta)}{S-1}$, this condition is satisfied for 
	\[\frac{N_t}{S-1}\geq \frac{y}{C}\left(1+\frac{2C}{y}\log\left(\frac{y+2}{C}\right)\right)\;.\]
	
	To conclude the proof, we explicit the value of $C$. We have 
	\begin{eqnarray*}
	 C(1/12) & = & \min\left[\frac{1}{2}, \frac{1}{32}\frac{(1/2)^2}{(H+\varepsilon/12)^2}\varepsilon^2, \frac{1}{2}\frac{\min[(\varepsilon/12)^2,(\delta_1/2)^2]}{\Gamma_{\cM}^2}\right] \\
	 & = & \min\left[\frac{1}{2},\frac{1}{128}\frac{\varepsilon^2}{(H+\varepsilon/12)^2},\frac{\varepsilon^2}{288\Gamma_{\cM}^2},\frac{\delta_1^2}{8\Gamma_{\cM}^2}\right] \\
	 & \geq & \min\left[\frac{1}{2},\frac{1}{128}\frac{\varepsilon^2}{(H+1)^2},\frac{\varepsilon^2}{288\Gamma_{\cM}^2},\frac{\delta_1^2}{8\Gamma_{\cM}^2}\right] \\
	 & = & \min\left[\frac{1}{128}\frac{\varepsilon^2}{(H+1)^2},\frac{\varepsilon^2}{288\Gamma_{\cM}^2},\frac{\delta_1^2}{8\Gamma_{\cM}^2}\right],
	\end{eqnarray*}
	where we used twice that $\varepsilon \leq 1$. Hence a valid lower bound on $C(1/12)$ is 
	\[C = \frac{1}{288}\frac{\min(\delta_1,\varepsilon)^2}{\max(H+1,\Gamma_{\cM})^2}.\]

\qed


\section{Auxillary results}\label{app:technical}

In this section, we restate some useful result from the literature. The first is a time uniform concentration result on the transition probabilities that can be obtained as a consequence of Lemma 9 from \citep{AlmarjaniProutiere21}. 

\begin{lem}\label{lem:concentration}
	For all \(\delta\in (0,1)\), with \( x(\delta,y) = \log(SA/\delta) + (S-1)\log\left( e(1+y/(S-1))\right) \), \[\mathds{P}\left[\exists n\in \mathds{N}:N^n_{s,a}\mathrm{KL}(\hat{p}_{s,a}^n , p_{s,a}) > x(\delta,N_{s,a}^n)\right] \leq \frac{\delta}{SA}\]
	Furthermore, letting $B(n,\delta) := \sqrt{\frac{2x(\delta,n)}{n}}$, the event 
	\[\cE = \left(\forall s,a, \forall n\geq 1, \|\hat p_{s,a}^{n} - p_{s,a} \|_1 \leq B(N_{s,a}^n,\delta)\right)\]
	holds with probability $1-\delta$.
\end{lem}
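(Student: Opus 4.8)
The plan is to reduce the statement to a single state--action pair and then invoke a time-uniform KL deviation bound for i.i.d.\ categorical samples. Fix $(s,a)$ and let $Y_1,Y_2,\dots$ denote the successive next-states sampled from $(s,a)$ through the generative model; these are i.i.d.\ with common law $p_{s,a}\in\Sigma_\cS$ (a categorical distribution on the $S$ symbols of $\cS$), and after $k$ draws the empirical law is some $\hat q^{(k)}$ with $\hat q^{(N^n_{s,a})}=\hat p_{s,a}^n$. Since $N^n_{s,a}$ is nondecreasing and takes each value $0,1,2,\dots$, the event $\{\exists n: N^n_{s,a}\,\mathrm{KL}(\hat p_{s,a}^n,p_{s,a})>x(\delta,N^n_{s,a})\}$ is contained in $\{\exists k\ge 1: k\,\mathrm{KL}(\hat q^{(k)},p_{s,a})>x(\delta,k)\}$, the $k=0$ term being void because $x(\delta,0)>0$. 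So it suffices to bound this last event by $\delta/(SA)$, which is exactly a single-pair time-uniform concentration inequality at confidence $\delta/(SA)$.

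For that I would use the method of mixtures. Let $\nu$ be the uniform ($\mathrm{Dirichlet}(1,\dots,1)$) prior on $\Sigma_\cS$ and set $M_k=\int_{\Sigma_\cS}\prod_{i=1}^k\frac{q(Y_i)}{p_{s,a}(Y_i)}\,\mathrm{d}\nu(q)$, with $M_0=1$. Under $p_{s,a}$ each factor has conditional mean $\sum_y q(y)=1$, so $M_k$ is a nonnegative martingale, and Ville's maximal inequality gives $\mathds{P}(\exists k: M_k\ge SA/\delta)\le\delta/(SA)$. It then remains to lower bound $M_k$: writing $k_y=k\,\hat q^{(k)}(y)$ for the observed counts, the integrand depends on the data only through the $k_y$, and $\int_{\Sigma_\cS}\prod_y q(y)^{k_y}\,\mathrm{d}\nu(q)$ is a Dirichlet integral equal in closed form to $(S-1)!\,\prod_y k_y!/(k+S-1)!$. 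Dividing by $\prod_y p_{s,a}(y)^{k_y}$ yields $\log M_k=k\,\mathrm{KL}(\hat q^{(k)},p_{s,a})+\log\!\big((S-1)!\,\prod_y k_y!/(k+S-1)!\big)$, and a Stirling-type estimate of the multinomial term gives the clean lower bound $\log M_k\ge k\,\mathrm{KL}(\hat q^{(k)},p_{s,a})-(S-1)\log\!\big(e(1+k/(S-1))\big)$. Hence $k\,\mathrm{KL}(\hat q^{(k)},p_{s,a})>\log(SA/\delta)+(S-1)\log(e(1+k/(S-1)))=x(\delta,k)$ forces $M_k>SA/\delta$, and combining with Ville gives the single-pair claim --- this is precisely Lemma~9 of \citep{AlmarjaniProutiere21} taken at confidence $\delta/(SA)$.

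Finally I would assemble the two assertions. The first display is exactly the single-pair bound just established. For the second, apply it to each of the $SA$ pairs and take a union bound: with probability at least $1-\delta$, $N^n_{s,a}\,\mathrm{KL}(\hat p_{s,a}^n,p_{s,a})\le x(\delta,N^n_{s,a})$ for every $n\ge 1$ and every $(s,a)$. On this event Pinsker's inequality yields $\|\hat p_{s,a}^n-p_{s,a}\|_1\le\sqrt{2\,\mathrm{KL}(\hat p_{s,a}^n,p_{s,a})}\le\sqrt{2x(\delta,N^n_{s,a})/N^n_{s,a}}=B(N^n_{s,a},\delta)$, with the bound read as vacuous when $N^n_{s,a}=0$. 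The only genuinely delicate point is making the Stirling estimate of the Dirichlet normalizing constant tight enough to land exactly the stated form with the $(S-1)\log(e(1+k/(S-1)))$ term; since this is the computation already carried out in \citep{AlmarjaniProutiere21}, in practice I would simply cite it, all remaining steps (martingale property, Ville's inequality, Pinsker, union bound) being routine.
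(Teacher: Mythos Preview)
Your proposal is correct and follows essentially the same approach as the paper: reduce the time-indexed event to an event indexed by the visit count $k$ of $(s,a)$, apply Lemma~9 of \cite{AlmarjaniProutiere21} at confidence $\delta/(SA)$, and then obtain the second claim by a union bound over $(s,a)$ together with Pinsker's inequality. The only difference is that you additionally sketch the method-of-mixtures proof of the cited lemma, which the paper simply invokes as a black box.
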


\begin{proof} With $\hat{p}_{s,a}(m)$ the empirical estimate of $p_{s,a}$ once $m$ samples have been collected in $s,a$, arbitrarily setting $\hat{p}_{s,a}(0)$ as the constant vector $1/S$,
		\begin{align*}
			&\mathds{P}\left[\exists n\in \mathds{N}:N^n_{s,a}\mathrm{KL}(\hat{p}_{s,a}^n , p_{s,a}) > x(\delta,N_{s,a}^n,S)\right] \\
			&\hspace{3em}\leq \mathds{P}\left[ \exists n,m\in \mathds{N}: N_{s,a}^n = m,N^n_{s,a}\mathrm{KL}(\hat{p}_{s,a}^n , p_{s,a}) > x(\delta,N_{s,a}^n,S) \right] \\
			&\hspace{3em}\leq \mathds{P}\left[\exists m\in \mathds{N}:m \mathrm{KL}(\hat{p}_{s,a}(m),p_{s,a}) > x(\delta,m,S)\right] \\
			&\hspace{3em}\leq \frac{\delta}{SA} 
		\end{align*} by Lemma 9 from \citep{AlmarjaniProutiere21}.
	The second claim stems from a union bound over $S,A$ and Pinsker's inequality applied to the first claim.

\end{proof}

The second result is an inversion lemma, which allows to get explicit sample complexity bounds.

\begin{lem}[Lemma 15 from \citep{Kaufmann21RFE}]\label{lem:tech}
	Let $n\geq 1$ and $a,b,c,d>0$. 
	
	If $n\Delta^2\leq a+b\log(c+dn)$ then \[n\leq \frac{1}{\Delta^2} \left[ a+b\log\left( c+\frac{d}{\Delta^4} (a+b(\sqrt{c}+\sqrt{d}))^2\right)\right]\]
\end{lem}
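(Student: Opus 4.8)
The statement is an \emph{inversion} (self-bounding) inequality: the right-hand side depends on $n$ only through $\log n$, so the set of feasible $n$ is bounded and a closed-form bound can be extracted by a two-pass argument. The plan is: (i) first replace the logarithm by a power of $n$ to obtain a crude polynomial bound $n\le N_0$; and (ii) then re-inject this crude bound into the original inequality, using that $\log$ is increasing, to sharpen the bound to the advertised form.

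For the first pass I would use the elementary inequality $\log y\le\sqrt y$, valid for every $y>0$ (the function $\sqrt y-\log y$ attains its minimum at $y=4$, where it equals $2-\log 4>0$). Applying it with $y=c+dn$ and then $\sqrt{c+dn}\le\sqrt c+\sqrt d\,\sqrt n$ turns the hypothesis into $\Delta^2 n\le a+b\sqrt c+b\sqrt d\,\sqrt n$. Writing $m=\sqrt n$, this is the quadratic inequality $\Delta^2 m^2-b\sqrt d\,m-(a+b\sqrt c)\le 0$, whose constant term is negative, so $m$ is at most the larger root $\tfrac{b\sqrt d+\sqrt{b^2d+4\Delta^2(a+b\sqrt c)}}{2\Delta^2}$. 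Bounding $\sqrt{b^2d+4\Delta^2(a+b\sqrt c)}\le b\sqrt d+2\Delta\sqrt{a+b\sqrt c}$ and squaring back (using $(x+y)^2\le 2x^2+2y^2$, together with the mild constraint on $\Delta$ implied by the hypothesis being non-vacuous for some $n\ge 1$) yields a crude bound of the form $n\le\tfrac{1}{\Delta^4}\bigl(a+b(\sqrt c+\sqrt d)\bigr)^2=:N_0$, chosen precisely so that $dN_0$ matches the argument of the logarithm appearing in the conclusion.

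For the second pass, since $t\mapsto\log(c+dt)$ is increasing and $n\le N_0$, the hypothesis gives $\Delta^2 n\le a+b\log(c+dn)\le a+b\log(c+dN_0)=a+b\log\!\bigl(c+\tfrac{d}{\Delta^4}(a+b(\sqrt c+\sqrt d))^2\bigr)$, and dividing through by $\Delta^2$ is exactly the claim. The only real work is the constant bookkeeping in the first pass: massaging the crude bound into precisely the clean shape $N_0$ (rather than some messier polynomial in $a,b,\sqrt c,\sqrt d,1/\Delta$) and checking the regime of $\Delta$ in which this holds. This is routine, and since the statement is quoted verbatim as Lemma~15 of \citep{Kaufmann21RFE}, one may alternatively simply cite that reference.
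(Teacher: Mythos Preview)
The paper does not prove this lemma at all: it is stated in Appendix~\ref{app:technical} as an auxiliary result and simply attributed to Lemma~15 of \cite{Kaufmann21RFE}, with no argument given. So there is no ``paper's own proof'' to compare against, and your closing remark that one may simply cite that reference is exactly what the paper does.

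Your two-pass strategy (crude polynomial bound via $\log y\le\sqrt y$ and the quadratic in $\sqrt n$, then re-injection into the logarithm) is the standard proof of such inversion lemmas and is indeed the argument used in the cited source. The only point where your sketch is slightly loose is the claim that the crude bound can be massaged into exactly $N_0=\Delta^{-4}\bigl(a+b(\sqrt c+\sqrt d)\bigr)^2$: from $\sqrt n\le b\sqrt d/\Delta^2+\sqrt{a+b\sqrt c}/\Delta$ one gets $\sqrt n\le N_0^{1/2}$ precisely when $\Delta\sqrt{a+b\sqrt c}\le a+b\sqrt c$, i.e.\ $\Delta^2\le a+b\sqrt c$, which is not a consequence of the hypothesis $n\Delta^2\le a+b\log(c+dn)$ alone. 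This is not a real obstruction (in every application in the paper $\Delta$ is a small accuracy parameter and the condition is amply satisfied; alternatively one treats the complementary regime by a direct estimate), but it is worth being explicit that this is where the ``mild constraint on $\Delta$'' you allude to actually enters, rather than leaving it implicit.
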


\end{document}